\documentclass{article}



\usepackage[preprint]{neurips_2025}



\usepackage[utf8]{inputenc} 
\usepackage[T1]{fontenc}    
\usepackage{hyperref}       
\usepackage{url}            
\usepackage{booktabs}       
\usepackage{amsfonts}       
\usepackage{nicefrac}       
\usepackage{microtype}      
\usepackage{xcolor}         
\usepackage{subfig}
\usepackage{wrapfig}

\usepackage{amsmath,amsfonts,bm}









\def\eqref#1{equation~\ref{#1}}









\def\1{\bm{1}}










\DeclareMathAlphabet{\mathsfit}{\encodingdefault}{\sfdefault}{m}{sl}
\SetMathAlphabet{\mathsfit}{bold}{\encodingdefault}{\sfdefault}{bx}{n}











\newcommand{\R}{\mathbb{R}}



\DeclareMathOperator*{\argmax}{arg\,max}

\usepackage{graphicx}
\usepackage{wrapfig}

\usepackage{hyperref}
\usepackage{url}
\usepackage{amsmath}
\usepackage{amsfonts}
\usepackage{bbm}
\usepackage{algorithm}
\usepackage{nicefrac}
\usepackage{amsthm}
\usepackage{enumitem}
\usepackage{multirow}
\usepackage[normalem]{ulem}
\usepackage{algorithmic}

\usepackage{bm}
\usepackage{booktabs}

\newtheorem{theorem}{Theorem}
\newtheorem{definition}{Definition}

\newtheorem{lemma}{Lemma}

\newtheorem{coro}{Corollary}

\theoremstyle{remark}
\newtheorem{remark}{Remark}
\newtheorem{insight}{Insight}
\newcommand{\f}[1]{\boldsymbol{#1}}
\newcommand{\bb}[1]{\mathbb{#1}}
\newcommand{\fl}[1]{\mathbf{#1}}
\newcommand{\ca}[1]{\mathcal{#1}}

\newcommand{\s}[1]{\mathsf{#1}}

\newcommand{\orderof}{\s{O}}

\newcommand{\probabilityexploresucc}{\nu}
\newcommand{\armidx}{j}
\newcommand{\armidxalt}{i}
\newcommand{\numarms}{\s{M}}
\newcommand{\dimarm}{d}
\newcommand{\armset}{\ca{A}}
\newcommand{\armsubset}{\ca{G}}
\newcommand{\arm}{\fl{a}}
\newcommand{\numrounds}{\s{T}}
\newcommand{\numroundsexploit}{\numrounds_{\text{exploit}}}
\newcommand{\timeindex}{t}
\newcommand{\reward}{r}

\newcommand{\indicator}{\mathbbm{1}}
\newcommand{\sparsity}{k^{\star}}
\newcommand{\parameter}{\f{\theta}}

\newcommand{\noise}{\eta}
\newcommand{\expectation}{\mathbb{E}}

\newcommand{\parameterestimate}{\widehat{\parameter}}

\newcommand{\permutation}{\pi}
\newcommand{\permutationapprox}{\widehat{\pi}}

\newcommand{\indicatorvector}{e}

\newcommand{\numroundsexplore}{\numrounds_{\text{explore}}}

\newcommand{\lagrange}{\lambda}

\newcommand{\transpose}{\mathsf{T}}

\newcommand{\constant}{\zeta}
\newcommand{\covariancematrix}{\Sigma}
\newcommand{\designmatrix}{\mathbf{X}}
\newcommand{\samplingdist}{\bm{\mu}}
\newcommand{\tightness}{\Lambda}
\newcommand{\RE}{\mathsf{RE}}

\newcommand{\probability}{\mathbb{P}}
\newcommand{\randomvector}{Y}

\newcommand{\numsamples}{n}

\newcommand{\mineigvalue}{\lambda_{\min}}

\newcommand{\sparsityaux}{m}
\newcommand{\subgaussiansym}{\psi}
\newcommand{\vectorsym}{\fl{v}}
\newcommand{\cone}{\mathsf{C}}

\newcommand{\approximationerror}{\fl{h}}
\newcommand{\errorvector}{\fl{w}}

\newcommand{\sparsityk}{k}
\newcommand{\topt}{\ca{T}}
\newcommand{\sparsitytail}{\beta}

\newcommand{\armbound}{M}

\newcommand{\randomvectoralt}{\fl{z}}

\newcommand{\diag}{\mathsf{diag}}
\newcommand{\probabilityspace}{\mathcal{P}}

\newcommand{\randomvariable}{Z}
\newcommand{\function}{f}

\newcommand{\functionvar}{x}
\newcommand{\vectorsymalt}{z}

\newcommand{\event}{\mathcal{E}}
\newcommand{\empericalcovariancematrix}{\hat{\covariancematrix}}

\newcommand{\unitball}{\mathcal{B}^\dimarm}

\newcommand{\indicatorprob}{p}

\newcommand{\sparsemin}{\sparsity_1}

\newcommand{\radamacher}{\xi}
\newcommand{\subgaussianbound}{\Psi}

\newcommand{\indicatorelement}{p}

\newcommand{\randomvariablealt}{W}
\newcommand{\complement}{c}
\newcommand{\bestregret}{\expectedregret_*}

\newcommand{\rmax}{R_{\max}}
\newcommand{\regret}{\s{Reg}(\numrounds)}
\newcommand{\expectedregret}{\expectation[\regret]}
\newcommand{\armmatrix}{\mathbf{A}}

\newcommand{\searchbound}{\hat{u}}
\newcommand{\optsubsetsize}{u^*}
\newcommand{\restrictedeigenvalue}{K}
\newcommand{\restrictedconeconstantvanilla}{\gamma}
\newcommand{\restrictedconeconstant}{\beta_\sparsityk}
\newcommand{\rewardvector}{\mathbf{r}}
\newcommand{\noisevector}{\fl{\noise}}
\newcommand{\mineigvalueempirical}{\Lambda}

\newcommand{\zerovector}{\mathbf{0}}
\newtheorem{fact}{Fact}

\usepackage[capitalize,noabbrev]{cleveref}


\usepackage[textsize=tiny]{todonotes}

\title{Sparse Linear Bandits with Blocking Constraints}
\author{Adit Jain \thanks{  Part of this work was done at Adobe Research.} \\
Electrical and Computer Engineering \\
Cornell University\\
Ithaca, NY, 14853 \\
\texttt{aj457@cornell.edu} 
\And Soumyabrata Pal\\
Adobe Research \\
 Bangalore, India \\
\texttt{soumyabratap@adobe.com} 
\And Sunav Choudhary\\
Adobe Research \\
 Bangalore, India \\
\And
Ramasuri Narayanam\\
Adobe Research \\
 Bangalore, India \\
 \And Harshita Chopra\\
University of Washington \\
 Seattle, WA, 98195\\
\And
Vikram Krishnamurthy\\
Electrical and Computer Engineering \\
Cornell University\\
Ithaca, NY, 14853
}

\begin{document}




\maketitle

\begin{abstract}
We investigate the high-dimensional sparse linear bandits problem in a data-poor regime where the time horizon is much smaller than the ambient dimension and number of arms. We study the setting under the additional \textit{blocking constraint} where each unique arm can be pulled only once. The blocking constraint is motivated by practical applications in personalized content recommendation and identification of data points to improve annotation efficiency for complex learning tasks. With mild assumptions on the arms, our proposed  online algorithm (\texttt{BSLB}) achieves a regret guarantee of $\widetilde{\mathsf{O}}((1+\beta_k)^2k^{\frac{2}{3}} \mathsf{T}^{\frac{2}{3}})$ where the parameter vector
has an (unknown) relative tail $\beta_k$ -- the ratio of $\ell_1$ norm of the top-$k$ and remaining entries of the parameter vector. To this end, we show novel offline statistical guarantees of the lasso estimator for the linear model that is robust to the sparsity modeling assumption. Finally, we propose a meta-algorithm (\texttt{C-BSLB})  based on corralling that does not need knowledge of optimal sparsity parameter $k$ at minimal cost to regret. Our experiments on multiple real-world datasets demonstrate the validity of our algorithms and theoretical framework.   
\end{abstract}

\section{Introduction} 

Sparse linear bandits are a rich class of models for sequential decision-making in settings where only a few features contribute to the outcome. They have been applied to various domains such as personalized medicine and online advertising~\citep{bastani2020online,yadkori12}. Recently, the theoretical properties of sparse linear bandits have been explored in data-poor regimes~\citep{hao2020high,efficientsparsehds}.
Formally, consider a time horizon of $\s{T}$ rounds, $\s{M}$ arms, each with a $d$-dimensional feature vector, and an unknown parameter vector with sparsity $k$, where $\numrounds \ll \dimarm \ll \numarms$. In each round, the learner selects an arm and observes a noisy reward whose expectation is the inner product between the parameter vector and the arm’s feature vector. The objective is to design an algorithm that sequentially selects arms to maximize cumulative reward.
Notably, ~\cite{hao2020high} established tight, dimension-free regret guarantees of $\orderof(\s{T}^{2/3})$ in this setting.

In this work, we study a more practical variant of sparse linear bandits that better models real-world, data-poor scenarios. Our first key contribution is the introduction of a novel constraint to the sparse bandit framework: each unique arm may be pulled only once, a restriction we refer to as the \textit{blocking constraint}. Second, we address the challenge of \textit{model misspecification} by providing regret guarantees that depend on how close the true parameter vector is to being $k$-sparse, thus broadening the applicability of our results beyond exactly sparse settings. Finally, unlike prior approaches, we develop an efficient algorithm that does not require prior knowledge of the sparsity parameter $\sparsityk$.

 The blocking constraint closely models the data acquisition process in data-poor regimes. For instance, consider personalized applications such as movie/book recommendations on edge devices. Users typically consume an item and rate it only once. There is hardly any point in recommending an item that has been previously consumed, at least not immediately - this is captured via the \textit{blocking constraint}. Existing theoretical approaches in recommendation literature~\citep{bresler2014latent, bresler2016collaborative, heckel2017sample, huleihel2021learning, pal2024blocked} use collaborative filtering (CF) via multiple users to address the blocking constraint. However, CF might affect user privacy or be limited on edge devices. Unlike CF-based methods, which rely on multiple users and pose privacy concerns, our approach applies to a single-user setting while still addressing the blocking constraint.

 A second application of our framework, in settings where labeling is expensive, is to create example banks comprising of \textit{high quality hard datapoints} for in-context learning via large language models (LLMs).
LLMs offer strong zero-shot capabilities, making it easier to prototype solutions for downstream tasks. However, they struggle with complex domain-specific queries, particularly when relevant training data is poor or evolving~\cite{farr2024llm}. In-context learning with a few-shot examples has emerged as a powerful approach, where a small set of high-quality examples improves model performance~\citep{dong2022survey}. 
 Crucially, hard examples provide better domain-specific information~\citep{baek2024revisiting, liu2024let, mavromatis2023examples}, but identifying them is challenging. Heuristic-based selection often leads to noisy, mislabeled, or outlier examples~\citep{mindermann2022prioritized}. 
 Alternatively, we can leverage domain experts to assign a hardness score while annotating. This data-poor problem can be framed in a bandit framework, where unlabeled datapoints act as arms and are sequentially annotated while hardness scores are modeled as a sparse linear function of embeddings. 
In domains with very few annotators—sometimes only one—it is impractical to re-query the same datapoint, naturally leading to a \textit{blocking constraint}. 

\noindent \textbf{Overview of our Techniques and Contributions. }  We propose \texttt{BSLB} (Blocked Sparse Linear Bandits), an efficient algorithm in our framework which is primarily an Explore-Then-Commit algorithm. In the exploration period, we sample a set of unique arms from a carefully chosen distribution and observe their rewards - the goal is to ensure that the expected covariance matrix of the sampled arms has a large minimum eigenvalue.
Such a well-conditioned covariance matrix ensures that the confidence ball of the estimate shrinks in all directions.
In the exploitation period, we use the Lasso estimator to estimate the unknown model parameters. The optimal exploration period depends on the correct sparsity level of the unknown parameter vector, which is difficult to set in practice. Therefore, we also present a meta-bandit algorithm based on corralling a set of base bandit algorithms \citep{agarwal2017corralling} - obtaining the same order-wise regret guarantees but without needing the knowledge of sparsity hyperparameter. Below, we summarize our \textit{main contributions}:    
\begin{enumerate}[noitemsep,nolistsep, leftmargin=*]
    \item We propose the high-dimensional sparse linear bandits framework with the novel blocking constraint to closely model sequential decision-making tasks in a data-poor regime. Importantly, the time horizon is much smaller than the ambient dimension. We also account for model misspecification where the unknown parameter vector is \textit{close to being sparse}  with relative tail magnitude $\restrictedconeconstant$ ($\ell_1$ norm of bottom $d-k$ entries divided by $\ell_1$ norm of top $k$ entries) at sparsity level $k$.
    \item We propose a computationally efficient ``explore then commit'' (ETC) algorithm \texttt{BSLB} for regret minimization in our framework (Theorem \ref{th:regret}) that achieves a regret guarantee of $\widetilde{\mathsf{O}}((1+\restrictedconeconstant)^2k^{\frac{2}{3}} \mathsf{T}^{\frac{2}{3}})$ for fixed known $k$ but unknown $\restrictedconeconstant$ - that is, the relative tail magnitude remains unknown to the algorithm.
    For the special case of exact sparsity that is $\restrictedconeconstant=0$, \texttt{BSLB} achieves a regret guarantee of $O(k^{2/3}\s{T}^{2/3})$ (Corollary \ref{coro:hard}) which is also tight in our setting (Theorem \ref{thm:lower}).  
    \item \texttt{BSLB} requires knowledge of the sparsity level $k$ (which also controls the tail magnitude $\restrictedconeconstant$) to set the length of the exploration period. This is challenging to establish in practice, especially for vectors that are not exactly sparse. Hence, we propose a meta-algorithm \texttt{C-BSLB} that combines base algorithms with different exploration periods. \texttt{C-BSLB} achieves same regret guarantees order-wise (Theorem \ref{th:corralling}) as \texttt{BSTB} but without knowing the optimal sparsity level.     
\end{enumerate}
To validate our theoretical results, we conduct experiments on both synthetic and real-world datasets. For personalized recommendations, we use MovieLens, Jester, and Goodbooks. For intelligent annotation in data-poor settings, we evaluate our method on the PASCAL VOC 2012 image classification dataset and the SST-2 text classification dataset. Our algorithm, \texttt{BSLB}, outperforms baselines in all tasks. Due to space constraints, detailed experiments are provided in Appendix~\ref{app:numerical}.

\textbf{Algorithmic Novelty:}
We consider the \textit{naive modification of the algorithm proposed in}~\citet{hao2020high} and illustrate why the blocking constraint makes the bandit instance non-trivially hard.
\newcommand{\smallnumberofarms}{l}
\label{disc:naive} 
Their \texttt{ESTC}  algorithm in the sparse linear bandits framework provides tight regret guarantees of $O(k^{2/3}\s{T}^{2/3})$ when the parameter vector is exactly $k$-sparse. Intuitively, one might consider modifying \texttt{ESTC} to incorporate the blocking constraint. In \texttt{ESTC}, a crucial initial step is to compute a distribution over the arms with the objective of maximizing the minimum eigenvalue of the expected covariance matrix. Subsequently, during the exploration period, arms are sampled from this computed distribution. It can be shown that the minimum eigenvalue of the sample covariance matrix is close to that of the expected one, ensuring that the sample covariance matrix is well-conditioned.

A naive modification to incorporate the blocking constraint involves using rejection sampling. If a previously pulled arm is sampled again, it is ignored and re-sampled from the same distribution. However, this significantly modifies the distribution over arms.
\begin{wrapfigure}{r}{0.5\textwidth}
    \centering
    \includegraphics[width=\linewidth]{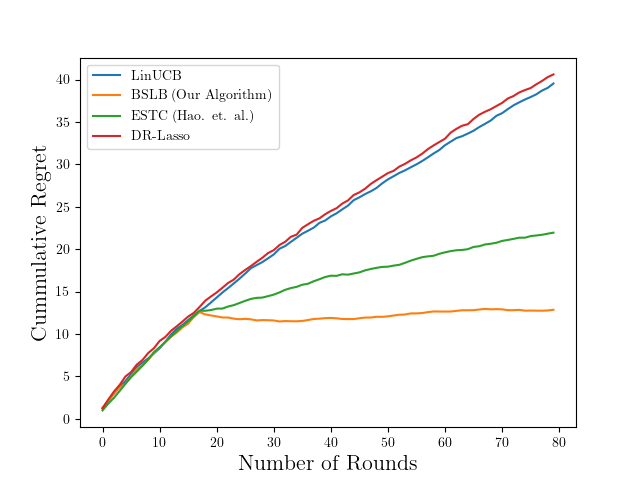}
   \caption{\small Simulation illustrating performance gap between our proposed algorithm \texttt{BSLB} and naive extensions of \texttt{ESTC}, LinUCB and DR-Lasso to incorporate blocking constraint. We consider an instance with $\numarms = 500$ arms ($\smallnumberofarms=5$ arms of unit norm and remaining arms $\ell_2$ norm of 0.5), $\dimarm = 100$, $\numrounds=80$ and $k=5$.}
    \label{fig:benchmarkbasealg}
    \vspace{-4mm}
\end{wrapfigure}
For instance, if the computed distribution in \texttt{ESTC} concentrates probability mass on a few $\smallnumberofarms$ arms (e.g., if $\smallnumberofarms$ arms have significantly higher norm than others), the blocking constraint forces us to sample arms with low probability mass once those $\smallnumberofarms$ arms are exhausted. This breaks the statistical guarantees proved for \texttt{ESTC} regarding the lasso estimator. We empirically illustrate and validate this special setting in Figure~\ref{fig:benchmarkbasealg} where our algorithm is also benchmarked with similar naive extensions of DR-Lasso and LinUCB~\citep{drlassobandits,NIPS2011_e1d5be1c} - the latter ones do not handle sparsity in any case. Hence, a naive modification of \texttt{ESTC} does not work, and the objective itself needs to incorporate the blocking constraint. However, this leads to a non-concave maximization problem, resulting in technical challenges. Finally, the authors in \cite{hao2020high} do not consider robustness to sparsity or unknown sparsity level, both of which are important to model in practice.


\textbf{Technical Challenges. } 
(A) \textit{Statistical guarantees:} Both in our setting and in \citep{hao2020high}, no assumptions are made on the arm vectors, unlike much of the existing sparsity literature, which requires the Gram matrix to satisfy the Restricted Isometry Property (RIP) \citep{boche2015survey}. A weaker parameterized condition, the Restricted Eigenvalue (RE) property, has been shown to provide strong statistical guarantees for the Lasso estimator in sparse linear regression \citep{bickel2009simultaneous,reconstructionitit}.  Specifically, if the expected covariance matrix of the arm vectors satisfies RE with parameter $K$, then with high probability the Gram matrix of sufficiently large i.i.d. samples will also satisfy RE with parameter $K/2$.
However, to the best of our knowledge, existing guarantees for the Lasso estimator hold only for exactly sparse parameter vectors when the Gram matrix of sampled arms just satisfies RE. In contrast, offline guarantees for soft sparse parameter vectors (where there is a non-zero tail) require the stronger RIP condition \citep{Wainwright_2019,boche2015survey}. Theorem \ref{lemma:fastsparselearning} of this paper bridges the gap and provides general statistical guarantees which (a) holds under the blocking constraint (sampling without replacement from a distribution on arms) (b) holds for an arm set that satisfies only the RE condition, and (c) for a parameter vector that is close to being sparse with a non-zero tail.

(B) \textit{Sampling in Exploration phase:}  In linear bandits with arbitrary arms, uniform random sampling during exploration fails to obtain a well-conditioned covariance matrix when arms are distributed non-uniformly. Moreover, as discussed earlier, a naive adaptation of \texttt{ESTC}'s sampling technique does not work. To address this, we optimize the probability distribution over arms to maximize the minimum eigenvalue of the expected covariance matrix while incorporating the blocking constraint. However, this leads to a subset selection problem, where the distribution must assign equal probability mass to a subset of arms while excluding others. The resulting optimization problem is discrete and non-concave. To tackle this, we introduce a concave relaxation of the objective and propose a randomized rounding procedure that yields a feasible solution with strong approximation guarantees on the minimum eigenvalue of the covariance matrix.

(C) \textit{Knowledge of hyper-parameters:} Our first proposed algorithm \texttt{BSLB} (similar to \texttt{ESTC}) requires as input a fixed sparsity $\sparsityk$. However, the algorithm does not know the tail magnitude $\beta_k$ at that sparsity level $k$. Importantly, if $k$ is set too low or too high, then the regret guarantee will not be reasonable. However, it is challenging to set the sparsity parameter without knowing the parameter vector itself. To resolve this challenge, we use a corralling algorithm based on the techniques of \citep{agarwal2017corralling} that combines several base algorithms and provides guarantees with respect to the optimal one. However, a naive application leads to an additional corralling cost with a linear dependence on dimension $\dimarm$, making the regret vacuous ($\numrounds \ll \dimarm$). We use a covering argument - we only choose a small representative subset (covering set) of base algorithms for input to corralling and show that for all remaining base algorithms, their regret is close enough to a base algorithm in the covering set. 

\textbf{Notation. }
\newcommand{\vectorb}[1]{\textbf{#1}}
We denote vectors by bold small letters (say $\vectorb{x}$), scalars by plain letters (say $x$ or $\s{X}$), sets by curly capital letters (say $\ca{X}$) and matrices by bold capital letters (say $\fl{X}$). We use $[m]$ to denote the set $\{1,2,\dots,m\}$, $\Vert \fl{x}\Vert_p$ to denote the $p$-norm of vector $\vectorb{x}$. For a set $\ca{T}$ of indices $\fl{v}_{\ca{T}}$ is used to denote the sub-vector of $\fl{v}$ restricted to the indices in $\ca{T}$ and $0$ elsewhere. $\lambda_{\min}(\fl{A})$ denotes the minimum eigenvalue of the matrix $\fl{A}$ and $\diag(\vectorb{x})$ denotes a diagonal matrix with entries such as $\vectorb{x}$. We use $\ca{B}^d$ and $\ca{S}^{d-1}$ to denote the unit ball and the unit sphere in $d$ dimensions, respectively. We write $\bb{E}X$ to denote the expectation of a random variable $X$. $\widetilde{O}(\cdot)$ notation hides logarithmic factors in $\s{T},\numarms,d$.

\newcommand{\armsetexploit}{\mathcal{D}}
\newcommand{\comments}[1]{{\color{blue} \ //  \ {#1}}}
\section{Problem Formulation}\label{sec:problem_formulation}
Consider a set (or more generally, a multi-set) of $\numarms$ arms $\armset \equiv \{\arm^{(1)}, \arm^{(2)}, \dots, \arm^{(\numarms)}\} \subseteq \R^\dimarm$. Let $\arm^{(\armidx)}\in\unitball$ denote the $\dimarm$-dimensional unit norm vector embedding associated with the $\armidx^{\s{th}}$ arm. We make the same mild boundedness assumption as~\citep{hao2020high}, that is, for all $\arm \in \armset$,  $\|\arm\|_\infty \leq 1$. We have a time horizon of $\numrounds$ rounds. 
In our high-dimension regime, we have $\numrounds \ll \dimarm \ll \numarms$; that is, the horizon is much smaller than the ambient dimension, which, in turn, is significantly smaller than the number of arms. This is a suitable model for the data acquisition process in data-poor regimes. At each round $\timeindex\in [\numrounds]$, an arm $\arm_\timeindex$  which has not been pulled in the first $\timeindex-1$ rounds is selected by the online algorithm (decision-maker). Note that such a selection mechanism respects the blocking constraint. 
Subsequently, the arm $a_t$ is pulled and the algorithm observes the stochastic reward $\reward_\timeindex$. 
We model the expected reward $\bb{E}r_t$  as a linear function of the arm embedding where $\parameter\in \R^\dimarm$ is an unknown parameter vector. In particular, the random variable $\reward_\timeindex$ is generated according to
 $\reward_\timeindex = \langle\parameter, \arm_\timeindex \rangle + \noise_\timeindex$ where $\{\noise_\timeindex\}_{t\in [T]}$ are zero-mean sub-gaussian random variables independent across rounds with variance proxy bounded from above by $\sigma^2$. For any sparsity level $\sparsityk \leq \dimarm$ we define the tail of the (unknown) parameter vector $\parameter$ as,  $\sparsitytail_\sparsityk:=  \frac{\Vert\parameter_{\topt_\sparsityk^\complement}\Vert_1}{\Vert\parameter_{\topt_\sparsityk}\Vert_1}$
where $\topt_\sparsityk$ denotes the set of $\sparsityk$ largest coordinates of $\parameter$ by absolute value and $\topt_\sparsityk^\complement=[d]\setminus \topt_\sparsityk$. Note that $\beta_k$ is unknown to the algorithm.

 Now, we formally define the regret objective in our online learning set-up, which also respects the \textit{blocking constraint}. Our regret definition captures the difference in the cumulative expected reward of arms selected by the online algorithm versus the cumulative expected reward of the $\numrounds$ unique arms with the highest mean rewards.
 Consider a permutation $\permutation:[\left|\armset\right|]\rightarrow [\left|\armset\right|]$ of arms such that for any $\armidxalt<\armidx$, we have $\langle \parameter, \arm^{(\permutation(\armidxalt))}\rangle \ge \langle\parameter,\arm^{\permutation(\armidx)}\rangle$. We can define the regret $\regret$ for our setting as,
 
\begin{align}\label{eq:regretcumm}
    \regret:= \sum_{t=1}^{\numrounds} \langle\parameter,\arm^{(\permutation(t))}\rangle- \sum_{t=1}^{\numrounds} \langle\parameter,\fl{a}_t\rangle.
\end{align}
 
We aim to design an algorithm that minimizes expected regret $\expectedregret$ in our setting where the expectation is over the randomness in the algorithm.
 
\begin{algorithm}
     \begin{algorithmic}[1]
        \REQUIRE Arms $\mathcal{A}$, time horizon $\numrounds$, Exploration Budget $\numroundsexplore$, Regularization Parameter $\lagrange$, Subset selection parameter $\searchbound$
        \STATE $\armsubset=$  \textsc{GetGoodSubset}$({\mathcal{A},\searchbound})$
        \comments{Compute good subset of arms}
        \FOR{$\timeindex \in [\numroundsexplore]$}
        \STATE Sample uniformly from $\arm_{\timeindex} \sim \armsubset$ and get reward $\reward_{\timeindex}$ 
        \STATE $\armsubset \gets \armsubset\setminus \{\arm_{\timeindex}\}$ \comments{Remove selected arm from remaining arms}
        \ENDFOR{}
        \STATE $\parameterestimate = \arg\min_{\parameter} \sum_{\timeindex \in [\numroundsexplore]}(\reward_\timeindex - \langle\parameter, \arm_\timeindex\rangle)^2 +  \lagrange||\parameter||_1$  
        \comments{Compute estimate using LASSO}
        \STATE $\armsetexploit = \armset \setminus \{\arm_\timeindex| \ \timeindex \in [\numroundsexplore]\}$ \comments{Arms available for exploit phase}
        \FOR{$\timeindex \in [\numroundsexplore+1,\numrounds]$} 
\STATE $\arm_{\timeindex} = \arg\max_{\arm \in \armsetexploit} \langle \parameterestimate, \arm \rangle$ 
        \STATE $\armsetexploit = \armsetexploit\setminus \{\arm_{\timeindex}\}$ 
        \ENDFOR
    \end{algorithmic}
    \caption{Explore then Commit for Blocked Sparse Linear Bandits (\texttt{BSLB})}
    \label{alg:greedy}
\end{algorithm}
\begin{algorithm}
    \begin{algorithmic}[1]
        \REQUIRE Set of Samples $\armset$, Subset selection parameter $\searchbound$
        \STATE \textbf{Output: } Subset $\armsubset$
        \STATE  Maximize the objective function defined in  (\ref{opt:relaxed}) with input  $\searchbound$ to obtain distribution $\hat{\samplingdist}$ over $\armset$.
       \FOR{$\armidx \in [\numarms]$} 
       \STATE $\armsubset = \armsubset  \cup \{\arm^{(\armidx)}\}$ with probability $\searchbound\hat{\samplingdist}_\armidx$ \comments{ Add sample $\armidx$ to $\armsubset$ with prob. $\searchbound \cdot \hat{\samplingdist}_\armidx$}
       \ENDFOR
       \STATE $\bar{\armsubset} = $ \texttt{SubsetSearch}($\armset,\dimarm,\searchbound$) \comments{Algorithm~\ref{alg:bruteforce} in Appendix~\ref{sec:bruteforce}}
        \STATE \textbf{return} $\argmax_{\mathcal{H} \in \armsubset,\bar{\armsubset}} \mineigvalue(\mathcal{H})$
    \end{algorithmic}
    \caption{GetGoodSubset: Subset selection for maximizing the minimum eigenvalue }
    \label{alg:getgoodsubset}
\end{algorithm}

\section{Our Algorithm and Main Results}\label{sec:algdesc}

\paragraph{Description of BSLB Algorithm. }Our main contribution is to propose an Explore-Then-Commit (ETC) algorithm named \texttt{BSLB} which is summarized in Algorithm~\ref{alg:greedy}. \texttt{BSLB} takes as input a set of arms $\ca{A}$, the time horizon  $\s{T}$, the exploration budget $\numroundsexplore$, subset selection parameter $\searchbound$ and the regularization parameter $\lagrange$. Steps 1-6 of \texttt{BSLB} correspond to the exploration component in the algorithm. In Step 1, we first compute a good subset of arms $\armsubset\subset \armset$ (using the function $\textsc{GetGoodSubset}(\armset,\searchbound)$) which comprises representative arms that cover the $\dimarm$-dimensional space reasonably well. Subsequently, in Steps 2-5, we sample arms without replacement from the set of arms $\armsubset$ for $\numroundsexplore$ rounds. The goal in the exploration component is to select a subset of arms such that the image of sparse vectors under the linear transformation by the Gram matrix of the selected set has a sufficiently large magnitude (see Definition \ref{def:re}). As we prove, such a result ensures nice statistical guarantees of the parameters estimated using observations from the subset of arms pulled until the end of the exploration phase. Since the set of arms $\armset$ can be arbitrary, note that sampling arms uniformly at random from the entire set might not have good coverage - especially when most arms are concentrated in a lower-dimensional subspace. Therefore, finding a good representative subset of arms leads to the following discrete optimization problem 
\begin{align}\label{eq:discreteopt}
   \mineigvalue^* := \underset{{\armsubset' \subseteq \armset}}{\max} \ \mineigvalue\left(|\armsubset'|^{-1}\sum_{\arm \in \armsubset'} \arm \arm^{\s{T}} \right).
\end{align}
\textit{Algorithm~\ref{alg:getgoodsubset}:} The function $\textsc{GetGoodSubset}(\armset,\searchbound)$ approximates the solution to this computationally intractable discrete optimization. We maximize a relaxed concave program in~\eqref{opt:relaxed} efficiently for a chosen input parameter $\searchbound$ to obtain a distribution $\hat{\samplingdist}$ on the set of arms $\ca{A}$ - subsequently, we construct the subset $\armsubset$ using randomized rounding (Step 4 of Algorithm~\ref{alg:getgoodsubset}) with $\hat{\samplingdist}$ to obtain a feasible solution to \eqref{eq:discreteopt}.  In addition, in Step 6, we include a search over all subsets of size $\orderof(\dimarm)$ and identify the one having the highest minimum eigenvalue. Step 6 allows us to achieve strong theoretical results but is computationally expensive - in practice, it can be skipped. As we explain in Remark~\ref{rem: point on complexity and tradeoff}, skipping Step 6 leads to a slightly weaker regret guarantee, but  significantly reduces computational complexity making Algorithm~\ref{alg:getgoodsubset} highly efficient.

In Step 7 of Algorithm~\ref{alg:greedy}, we use the Lasso estimator to get an estimate $\parameterestimate$ of the unknown parameter vector $\parameter\in \bb{R}^d$. Note that the number of samples used in obtaining the estimate $\parameterestimate$ is much smaller than the dimension $d$.  The second part of \texttt{BSLB} (Steps 8-11) corresponds to the exploitation component of the algorithm, we pull arms that are predicted to be the most rewarding according to our recovered estimate $\parameterestimate$.
At every round in \texttt{BSLB}, no arm is pulled more than once, thus respecting the \textit{blocking constraint}.
It is important to note that \texttt{BSLB} is two-shot. We change our data acquisition strategy only once after the exploration component - thus making our algorithm easy to implement in practice.  

Next, we move on to our main theoretical results.  In Section \ref{subsec:offline}, we provide offline guarantees of the Lasso estimator for sparse linear regression that are robust to sparsity and hold only with weak conditions. In Section \ref{subsec:subset}, we analyze an efficient algorithm for computing the sampling distribution in the exploration phase of \texttt{BSLB}. In Section \ref{subsec:online}, using the offline and approximation guarantees, we provide regret guarantees for our proposed algorithm \texttt{BSLB}.  In Section \ref{subsec:corralling}, we discuss guarantees on \texttt{C-BSLB} (Algorithm~\ref{alg:corral}) that does not need the knowledge of the sparsity level. 


\subsection{Offline Lasso Estimator Guarantees With Soft Sparsity and RE condition}\label{subsec:offline}
To the best of our knowledge, there does not exist in the literature offline guarantees for sparse linear regression that is (A) robust to sparsity modeling assumption and (B) holds only under the mild RE condition on the Gram matrix. Our first theoretical result fills this gap to a certain extent with an upper bound on error rate.
We will start by introducing the definition of Restricted Eigenvalue (RE) 
\begin{definition}\label{def:re} \textbf{Restricted Eigenvalue (RE): }
$\designmatrix\in\R^{\numsamples\times\dimarm}$ satisfies  Restricted Eigenvalue property $\RE(\sparsityk_0,\restrictedconeconstantvanilla,\designmatrix)$, if there exists a constant $\restrictedeigenvalue(\sparsityk_0,\restrictedconeconstantvanilla,\designmatrix)$ such that for all $\vectorsymalt \in \R^\dimarm$ and $\vectorsymalt\neq \fl{0}$,
\begin{align*}
0<{\restrictedeigenvalue(\sparsityk_0,\restrictedconeconstantvanilla,\designmatrix)} = \min_{J \subseteq \{1,\dots,\dimarm \} |J| \leq \sparsityk_0}\min_{\Vert \vectorsymalt_{J^\complement} \Vert_1 \leq \restrictedconeconstantvanilla
\Vert \vectorsymalt_{J} \Vert_1}\frac{\Vert \designmatrix \vectorsymalt \Vert_2 }{\Vert \vectorsymalt_J \Vert_2}.
\end{align*}
   \end{definition}
\citet{bickel2009simultaneous} showed that RE is among the weakest conditions imposed in the literature on the Gram matrix to ensure nice statistical guarantees on the Lasso estimator for sparse linear regression. We now state our first main result:


\begin{theorem}\label{lemma:fastsparselearning}
Let $\designmatrix \in \R^{\numsamples\times\dimarm}$ be the data matrix  satisfying $|\designmatrix_{ij}| \leq 1 \forall i,j$. Let $\rewardvector \in \R^{\numsamples}$ be the corresponding observations such that $\rewardvector = \designmatrix\parameter+\f{\noisevector}$, where $\f{\noisevector} \in \R^\numsamples$ is a zero-mean sub-gaussian random vector with i.i.d. components having bounded variance $\sigma^2=O(1)$. 
Let $\parameter$ have an (unknown) relative tail $\sparsitytail_\sparsityk$ at fixed sparsity level $\sparsityk$. Suppose $\designmatrix$ satisfies restricted eigenvalue property (Def.~\ref{def:re})  $\RE(\sparsityk,4(1+\restrictedconeconstant),\frac{\designmatrix}{\sqrt{\numsamples}})$ with constant $\restrictedeigenvalue>0$.
  An estimate $\parameterestimate$ of $\parameter$ recovered using Lasso (Line 9 in \texttt{BSLB}) with a regularization parameter $\lagrange=\sqrt{\frac{\log \dimarm}{\numsamples}}$, satisfies following with probability $1-o(d^{-2})$,
    \begin{align}\label{eq:oracleinequality}
    \begin{split}
       \Vert{\parameter-\parameterestimate}\Vert_1 =  \widetilde{\orderof} \left(\sparsityk(1+\restrictedconeconstant)^2\restrictedeigenvalue^{-2}\numsamples^{-1/2} \right).
       \end{split}
    \end{align} 
\end{theorem}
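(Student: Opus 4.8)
The plan is to follow the classical Restricted-Eigenvalue analysis of the Lasso (in the style of \citet{bickel2009simultaneous}), but to carry the soft-sparsity tail $\parameter_{\topt_\sparsityk^\complement}$ through every step so that it inflates the cone constant rather than appearing as an additive approximation error. Write $\fl{h} := \parameterestimate - \parameter$ for the error and $S := \topt_\sparsityk$ for the top-$\sparsityk$ support. I would begin from the basic inequality implied by optimality of the Lasso solution: working with the normalized loss $\numsamples^{-1}\Vert \rewardvector - \designmatrix\parameter\Vert_2^2$, optimality of $\parameterestimate$ together with the substitution $\rewardvector = \designmatrix\parameter + \noisevector$ gives $\numsamples^{-1}\Vert \designmatrix\fl{h}\Vert_2^2 \le 2\numsamples^{-1}\langle \designmatrix^{\transpose}\noisevector, \fl{h}\rangle + \lagrange(\Vert\parameter\Vert_1 - \Vert\parameterestimate\Vert_1)$, which is the workhorse identity.

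Second, I would control the stochastic term $\numsamples^{-1}\Vert\designmatrix^{\transpose}\noisevector\Vert_\infty$. Since $|\designmatrix_{ij}|\le 1$, each column of $\designmatrix$ has $\ell_2$ norm at most $\sqrt{\numsamples}$, so each coordinate $\langle \designmatrix_{\cdot j}, \noisevector\rangle$ is sub-gaussian with variance proxy $\orderof(\numsamples\sigma^2)$; a Gaussian-type tail bound combined with a union bound over the $\dimarm$ columns shows that $\numsamples^{-1}\Vert\designmatrix^{\transpose}\noisevector\Vert_\infty \le \lagrange/2$ on an event of probability $1-o(\dimarm^{-2})$, provided $\lagrange=\sqrt{\log\dimarm/\numsamples}$ with the constant absorbing $\sigma=\orderof(1)$. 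This single step fixes both the prescribed regularization level and the stated failure probability; conditioning on it, the noise term in the basic inequality is dominated by $\tfrac{\lagrange}{2}\Vert\fl{h}\Vert_1$.

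Third — the crux — I would derive the tail-corrected cone inequality. Splitting $\Vert\parameter\Vert_1-\Vert\parameterestimate\Vert_1$ over $S$ and $S^\complement$ and invoking $\Vert\parameter_{S^\complement}\Vert_1 = \restrictedconeconstant\Vert\parameter_S\Vert_1$ yields, on the good event, $\Vert \fl{h}_{S^\complement}\Vert_1 \le 3\Vert\fl{h}_S\Vert_1 + 4\Vert\parameter_{S^\complement}\Vert_1$, i.e. $\fl{h}$ lies in an \emph{approximate} cone whose extra slack is exactly the tail. Rewriting the slack through $\restrictedconeconstant$ is what forces the RE condition to be invoked with the inflated parameter $4(1+\restrictedconeconstant)$ that appears in the hypothesis $\RE(\sparsityk, 4(1+\restrictedconeconstant), \designmatrix/\sqrt{\numsamples})$; a short case split comparing $\Vert\fl{h}_S\Vert_1$ with the tail $\Vert\parameter_{S^\complement}\Vert_1$ confirms that $\fl{h}$ indeed falls in the cone for which RE (Definition \ref{def:re}) is assumed, so that $\numsamples^{-1}\Vert\designmatrix\fl{h}\Vert_2^2 \ge \restrictedeigenvalue^2\Vert\fl{h}_S\Vert_2^2$. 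Feeding $\Vert\fl{h}_S\Vert_1\le\sqrt{\sparsityk}\,\Vert\fl{h}_S\Vert_2$ and the cone bound $\Vert\fl{h}\Vert_1 \le (1+4(1+\restrictedconeconstant))\Vert\fl{h}_S\Vert_1$ into the prediction inequality and solving the resulting quadratic in $\numsamples^{-1/2}\Vert\designmatrix\fl{h}\Vert_2$ produces $\Vert\fl{h}\Vert_1 = \widetilde{\orderof}(\sparsityk(1+\restrictedconeconstant)^2\restrictedeigenvalue^{-2}\numsamples^{-1/2})$, where one factor of $(1+\restrictedconeconstant)$ is tracked through the cone membership and the second through the final aggregation from $\Vert\fl{h}_S\Vert_1$ to $\Vert\fl{h}\Vert_1$.

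The main obstacle I anticipate is precisely this tail handling. Because only the RE condition is available (and not RIP), the non-zero tail cannot be absorbed into an effective noise vector $\designmatrix\parameter_{S^\complement}+\noisevector$: doing so would inject a bias of order $\Vert\parameter_{S^\complement}\Vert_1$ through $\numsamples^{-1}\Vert\designmatrix^{\transpose}\designmatrix\parameter_{S^\complement}\Vert_\infty$ and destroy the dimension-free rate. The delicate point is therefore to show that the tail slack can be folded entirely into the cone geometry — keeping the final bound purely multiplicative in $(1+\restrictedconeconstant)$ with no additive approximation term — and to verify that the case distinction closes on the good event so that the RE constant $\restrictedeigenvalue$ may legitimately be applied to $\fl{h}$.
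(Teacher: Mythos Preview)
Your high-level architecture matches the paper's --- basic Lasso inequality, sub-Gaussian control of $\numsamples^{-1}\|\designmatrix^\transpose\noisevector\|_\infty$ at level $\lagrange$, cone membership, then RE --- but the cone step contains a real gap. From $\|\fl{h}_{S^\complement}\|_1 \le 3\|\fl{h}_S\|_1 + 4\|\parameter_{S^\complement}\|_1$ with $S=\topt_\sparsityk$, the case split you invoke cannot force $\fl{h}$ into the cone $\{\vectorsymalt:\|\vectorsymalt_{S^\complement}\|_1\le 4(1+\restrictedconeconstant)\|\vectorsymalt_S\|_1\}$: writing $\|\parameter_{S^\complement}\|_1=\restrictedconeconstant\|\parameter_S\|_1$, cone membership would require $\|\parameter_S\|_1=\orderof(\|\fl{h}_S\|_1)$, and nothing ties the signal magnitude to the estimation error a priori. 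In the complementary case $\|\fl{h}_S\|_1\ll\restrictedconeconstant\|\parameter_S\|_1$ you can only conclude $\|\fl{h}\|_1=\orderof(\restrictedconeconstant\|\parameter_S\|_1)$, which carries no $\numsamples^{-1/2}$ decay and therefore cannot deliver the purely multiplicative $(1+\restrictedconeconstant)^2$ rate the theorem asserts. This is exactly the obstruction you flag in your last paragraph, and the proposal does not explain how it is overcome.

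The paper circumvents this by \emph{changing the support set}: it takes $J_0$ to be the top-$\sparsityk$ coordinates of the \emph{error} $\fl{h}$ (not of $\parameter$), bounds $\|\parameter\|_1-\|\parameterestimate\|_1\le\|\fl{h}\|_1$ by the crude triangle inequality rather than your finer $S$/$S^\complement$ split, and then (i) invokes a basis-pursuit decomposition (Theorem~1.6 of \citet{boche2015survey}) to get $\|\fl{h}_{J_0^\complement}\|_1\le \|\fl{h}_{J_0}\|_1+2\|\parameter_{\topt_\sparsityk^\complement}\|_1$, and (ii) uses optimality of $\parameterestimate$ in the Lasso program to argue $\|\parameter_{J_0}\|_1\le 2\|\fl{h}_{J_0}\|_1$, which absorbs the additive tail $\restrictedconeconstant\|\parameter_{\topt_\sparsityk}\|_1$ into a multiple of $\|\fl{h}_{J_0}\|_1$ and yields $\|\fl{h}_{J_0^\complement}\|_1\le 4(1+\restrictedconeconstant)\|\fl{h}_{J_0}\|_1$ with no case split. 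Step~(ii) is the device you are missing; it is what converts the tail slack from additive to multiplicative, and it hinges on working with the top-$\sparsityk$ of $\fl{h}$ rather than of $\parameter$.
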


Due to space constraints, the proof of Theorem \ref{lemma:fastsparselearning} is deferred to Appendix~\ref{app:sparsereg}. 
\begin{insight}
Note that in Equation \ref{eq:oracleinequality}, for a fixed sparsity $k$, the estimator error guarantee decays with the number of samples $\numsamples$ (at the rate of $n^{-1/2}$) and RE constant $\restrictedeigenvalue$, while growing linearly with $\sparsityk$. 
 Existing error guarantees in literature focus only on exact sparse vectors $\f{\theta}$ - the data matrix $\designmatrix$ satisfies $\RE(\sparsityk,3,\frac{\designmatrix}{\sqrt{\numsamples}})$ with constant $\restrictedeigenvalue^{'}$ and $\restrictedconeconstant=0$ (see Theorem 7.13 ~\citet{Wainwright_2019}). However, with moderately stronger assumption of $\RE(\sparsityk,6,\frac{\designmatrix}{\sqrt{\numsamples}})$ on the data matrix, guarantees of  Theorem~\ref{lemma:fastsparselearning} hold for all $\restrictedconeconstant \leq 1/2$. As stated in Theorem \ref{lemma:fastsparselearning}, for a larger tail with $\restrictedconeconstant>1/2$, $\designmatrix$ needs to satisfy $\RE$ on a larger cone of vectors. 
\end{insight}
\begin{remark}
    Note that the statistical guarantee presented in Theorem \ref{lemma:fastsparselearning} is an offline error rate that is \textit{robust to sparsity modeling assumption} - similar to Theorem 7.19 in~\citet{Wainwright_2019} and Theorem 1.6 in~\citet{boche2015survey}. However, the former holds only for the special case when $\designmatrix$ has i.i.d. Gaussian rows, and the latter requires the stronger RIP condition on the data matrix. Our error guarantee is much more general and holds for deterministic data matrices $\fl{X}$ satisfying RE.
\end{remark}
\textit{Proof Outline: }The lasso inequality states, $ \frac{1}{\numsamples} \Vert \designmatrix \approximationerror \Vert_2^2 \leq  \frac{\errorvector^\transpose \designmatrix \approximationerror}{\numsamples} + \lagrange\|\approximationerror\|_1$.
   Using the RE condition, we show that the norm squared of the approximation error ($\approximationerror = \parameter-\parameterestimate$) is, ${\restrictedeigenvalue^2} \Vert \approximationerror_{\topt_k} \Vert_2^2 =  \orderof(\Vert \approximationerror\Vert_{1}\Vert \frac{ \designmatrix^\transpose\errorvector }{\numsamples}\Vert_{\infty})$.  Further by decomposing the approximation we show that $\restrictedeigenvalue^2\Vert \approximationerror\Vert_{1} = \orderof(\Vert \frac{ \designmatrix^\transpose\errorvector }{\numsamples}\Vert_{\infty}\sparsityk(1+\restrictedconeconstant)^2)$. Finally, we apply a concentration inequality for the noise vector $\noise$, $\Vert \frac{ \designmatrix^\transpose\errorvector }{\numsamples}\Vert_{\infty}$ scales as $\orderof(\frac{1}{\sqrt{\numsamples}})$. 
   
 Below we derive a corollary for the case when the rows of the design matrix are sampled without replacement from a set whose empirical covariance matrix has a minimum eigenvalue. 
\begin{coro}\label{coro:empiricallasso}
        Let $\designmatrix \in \R^{\numsamples\times\dimarm}$ be the data matrix with $\numsamples$ samples and dimension $\dimarm$, whose rows are sampled uniformly without replacement from a set $\armsubset\subset \bb{R}^d$. Let   $\mineigvalueempirical= \lambda_{\min}(\left|\armsubset\right|^{-1}\sum_{\fl{a}\in \armsubset}\arm\arm^{\transpose})$. Consider the same setup for observations $\rewardvector$ as in Theorem \ref{lemma:fastsparselearning}. 
        Provided $\numsamples = \Omega(\sparsityk\mineigvalueempirical^{-4})$, an estimate $\parameterestimate$ of $\parameter$ recovered using Lasso (Line 9 in \texttt{BSLB}), will satisfy with probability $1-\exp(-\Omega(\numsamples))-o(d^{-2})$,
    \begin{align}
\begin{split}
   \Vert{\parameter-\parameterestimate}\Vert_1 =  \widetilde{\orderof} \left(\sparsityk(1+\restrictedconeconstant)^2\mineigvalueempirical^{-1} \numsamples^{-1/2} \right).
       \end{split}
    \end{align} 
\end{coro}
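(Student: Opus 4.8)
The plan is to obtain Corollary~\ref{coro:empiricallasso} as a direct consequence of Theorem~\ref{lemma:fastsparselearning}: the only missing ingredient is to certify that the random design matrix $\designmatrix$, whose $\numsamples$ rows are drawn uniformly without replacement from $\armsubset$, satisfies the restricted eigenvalue hypothesis $\RE(\sparsityk, 4(1+\restrictedconeconstant), \designmatrix/\sqrt{\numsamples})$ with a constant $\restrictedeigenvalue$ that is controlled from below by $\mineigvalueempirical$. Concretely, I will show that with high probability $\restrictedeigenvalue^2 \ge \mineigvalueempirical/2$, so that $\restrictedeigenvalue^{-2} \le 2\mineigvalueempirical^{-1}$; substituting this into the bound \eqref{eq:oracleinequality} of Theorem~\ref{lemma:fastsparselearning} immediately turns the $\restrictedeigenvalue^{-2}$ factor into $\mineigvalueempirical^{-1}$ and yields the claimed rate $\widetilde{\orderof}(\sparsityk(1+\restrictedconeconstant)^2 \mineigvalueempirical^{-1}\numsamples^{-1/2})$. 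The two failure events—one from the RE certification and one from Theorem~\ref{lemma:fastsparselearning}—are then combined by a union bound, producing the stated probability $1 - \exp(-\Omega(\numsamples)) - o(d^{-2})$.

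The heart of the argument is a restricted-eigenvalue transfer from the population matrix $\empericalcovariancematrix_\armsubset := |\armsubset|^{-1}\sum_{\arm\in\armsubset}\arm\arm^\transpose$ to the sample Gram matrix $\tfrac{1}{\numsamples}\designmatrix^\transpose\designmatrix$. Two observations drive this. First, since $\empericalcovariancematrix_\armsubset \succeq \mineigvalueempirical \identitycovariance$, for every $\fl{v}$ and every index set $J$ we have $\fl{v}^\transpose \empericalcovariancematrix_\armsubset \fl{v} \ge \mineigvalueempirical \Vert \fl{v}\Vert_2^2 \ge \mineigvalueempirical\Vert \fl{v}_J\Vert_2^2$, so the population matrix satisfies RE on \emph{every} cone with constant $\sqrt{\mineigvalueempirical}$. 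Second, because the rows are sampled uniformly without replacement, each row is marginally uniform on $\armsubset$, whence $\expectation[\tfrac{1}{\numsamples}\designmatrix^\transpose\designmatrix] = \empericalcovariancematrix_\armsubset$. It therefore suffices to prove a uniform deviation bound
\begin{align*}
\sup_{\fl{v}\in \cone,\, \Vert\fl{v}\Vert_2 = 1} \left| \fl{v}^\transpose\Big(\tfrac{1}{\numsamples}\designmatrix^\transpose\designmatrix - \empericalcovariancematrix_\armsubset\Big)\fl{v}\right| \le \tfrac{1}{2}\mineigvalueempirical
\end{align*}
over the RE cone $\cone = \{\fl{v}: \exists\, |J|\le \sparsityk,\ \Vert\fl{v}_{J^\complement}\Vert_1 \le 4(1+\restrictedconeconstant)\Vert\fl{v}_J\Vert_1\}$; combined with the two observations this gives $\fl{v}^\transpose \tfrac{1}{\numsamples}\designmatrix^\transpose\designmatrix\,\fl{v} \ge \tfrac12\mineigvalueempirical \ge \tfrac12\mineigvalueempirical\Vert\fl{v}_J\Vert_2^2$ on the cone, i.e. $\restrictedeigenvalue^2 \ge \mineigvalueempirical/2$.

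To establish the deviation bound I would exploit the geometry of the cone: any unit-norm $\fl{v}\in\cone$ has $\Vert\fl{v}\Vert_1 \le (1+4(1+\restrictedconeconstant))\sqrt{\sparsityk}$, so that $|\langle\arm,\fl{v}\rangle| \le \Vert\arm\Vert_\infty\Vert\fl{v}\Vert_1 = \orderof((1+\restrictedconeconstant)\sqrt{\sparsityk})$ using $\Vert\arm\Vert_\infty\le 1$. The summands $\langle\arm_i,\fl{v}\rangle^2$ are thus uniformly bounded by $\orderof((1+\restrictedconeconstant)^2\sparsityk)$, which lets me control the empirical process by the standard chain of symmetrization, contraction, and a Bernstein/Chernoff tail, with the metric entropy of the sparse cone contributing a $\log d$ factor; the without-replacement sampling enforced by the blocking constraint is handled by the usual reduction of sampling-without-replacement to sampling-with-replacement, so that the concentration is no worse. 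Requiring the resulting deviation—of order $(1+\restrictedconeconstant)^2\sparsityk$ times a decreasing function of $\numsamples$—to fall below $\mineigvalueempirical/2$ is precisely what forces the sample-size condition $\numsamples = \Omega(\sparsityk\mineigvalueempirical^{-4})$ (absorbing the $(1+\restrictedconeconstant)$ factors into the constant) and simultaneously yields the $\exp(-\Omega(\numsamples))$ tail. I expect this uniform concentration over the cone to be the main obstacle: one must beat the (small) target $\mineigvalueempirical/2$ uniformly over an unbounded cone in the data-poor regime $\numsamples \ll d$, where the full Gram matrix is rank-deficient, so the argument genuinely relies on the effective sparsity of the cone rather than on any bound for the full minimum eigenvalue.
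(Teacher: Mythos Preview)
Your proposal is correct and follows the same route as the paper: transfer the RE property from the population covariance $\empericalcovariancematrix_\armsubset$ (whose RE constant on any cone is at least $\sqrt{\mineigvalueempirical}$) to the sample Gram matrix $\tfrac{1}{\numsamples}\designmatrix^\transpose\designmatrix$ via uniform concentration over the sparse cone, then invoke Theorem~\ref{lemma:fastsparselearning}. The paper packages the concentration step by citing the Rudelson--Zhou RE-transfer theorem and extending it to sampling without replacement; your symmetrization/contraction/entropy sketch is exactly what underlies that result. The one place the paper is more careful than your sketch is the without-replacement handling: rather than invoking a generic ``sampling without replacement is no worse'' reduction (which is immediate for scalar sums but not for suprema of empirical processes), it explicitly proves a symmetrization lemma for exchangeable without-replacement samples and replaces Talagrand's inequality by a without-replacement McDiarmid bound.
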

\begin{remark}
    In comparison to the existing statistical guarantees (Theorem A.3 of ~\citet{hao2020high} which is a restatement of Theorem 7.13 of~\citet{Wainwright_2019}), the multiplicative factor $(1+\restrictedconeconstant)^2$ arise due to non-zero tail of the parameter vector and reduce to $1$ for the exact sparsity case ($\restrictedconeconstant=0$). 
   Note in particular that we do not have the RE assumption in Corollary \ref{coro:empiricallasso}. Instead, it is replaced by a lower bound on $n$ - arms sampled without replacement from the set $\ca{G}$ whose Gram matrix has a sufficiently large minimum eigenvalue. This is possible because a lower bound on minimum eigenvalue for a positive semi-definite matrix implies a lower bound on RE with arbitrary parameters - concentration guarantees imply that the RE condition remains satisfied when sufficiently large (yet smaller than $\left|\ca{G}\right|$) number of samples are sampled from $\ca{G}$. 
\end{remark}




\subsection{Subset Selection for Maximizing the Minimum Eigenvalue}\label{subsec:subset}

Recall that in Step 4 of \texttt{BSLB}; we sample from a carefully chosen subset of arms that has good coverage - more precisely, our goal is to solve the optimization problem in equation \ref{eq:discreteopt} to obtain a representative set of arms. Although \citet{hao2020high} had a similar objective, the absence of blocking constraint in their framework implied that they could solve for a probability distribution on the set of arms such that the minimum eigenvalue of the expected covariance matrix is maximized. Since their solution space was the probability simplex, the objective was continuous and concave - implying that a solution can be found efficiently.

However, in our setting, due to the blocking constraint, we need to identify a subset of representative arms from which to sample uniformly at random without replacement in the exploration component - this leads to the objective in \eqref{eq:discreteopt} being discrete and therefore non-concave. Note that a brute force solution to our objective implies a search over all subsets of $[\numarms]$ and will take time $\Omega(\exp(\numarms))$.
To design an efficient algorithm for obtaining a good feasible solution to the non-concave objective in \ref{eq:discreteopt}, our first step is to obtain a concave relaxation as described in equation \ref{opt:relaxed} - in particular, instead of optimizing over a subset, we optimize over probability distributions over the set of arms such that the probability mass over any arm is bounded from above. 
\begin{align}\label{opt:relaxed}
\begin{split}
      \hat{\samplingdist}(\searchbound) = &\underset{{\samplingdist \in \probabilityspace(\armset)}}{\arg\max} \ \mineigvalue\left(\armmatrix\diag({\samplingdist}) \armmatrix^{\s{T}}\right)  \ \  \text{ such that } \ \Vert\samplingdist\Vert_{\infty} \leq \frac{1}{\searchbound},
\end{split}
\end{align}
Note that  $\armmatrix = [\arm^1,\dots,\arm^\numarms]^\transpose \in \bb{R}^{\numarms\times\dimarm}$ denotes the matrix with all arms and $\searchbound$ is an additional parameter to the relaxed objective. Since the solution to equation \ref{opt:relaxed} might not be a feasible one for equation \ref{eq:discreteopt}, we use a randomized rounding (Step 19 in \texttt{BSLB}) procedure to obtain a feasible solution. In the randomized rounding procedure, each arm $j\in[\numarms]$ is sampled into our feasible output set $\ca{G}$ (used in the exploration component) independently with probability $\searchbound\samplingdist_j$. 
Let $\ca{X}\subseteq \ca{A}$ be the optimal subset for which the RHS in Equation \ref{eq:discreteopt} is maximized and let  $\mineigvalue^*$ be the corresponding objective value (minimum eigenvalue of corresponding normalized Gram matrix).  We present the following theorem on the approximation guarantees of the solution achieved by our procedure \texttt{GetGoodSubset} of Algorithm~\ref{alg:greedy} - the theorem says that the minimum eigenvalue of the Gram matrix associated with arms in $\ca{G}$ (obtained post randomized rounding procedure) is close to $\mineigvalue^*$.
\begin{theorem}\label{th:randomizedrounding}
Let $\armmatrix = [\arm^1,\dots,\arm^\numarms]^\transpose \in \bb{R}^{\numarms\times\dimarm}$ denote the matrix of all arms. Consider the concave optimization of~\eqref{opt:relaxed} solved at $\searchbound = \orderof({\frac{\dimarm}{{(\mineigvalue^l)}^{2/3}}})$. Let $\armsubset$ be the output of the randomized rounding procedure (Algorithm~\ref{alg:getgoodsubset}) and $\widehat{\lambda}_{\min}$ be the minimum eigenvalue of the corresponding covariance matrix that is, $\widehat{\lambda}_{\min}=\lambda_{\min}(\left|\ca{G}^{-1}\right|\sum_{\fl{a}\in \ca{G}}\fl{a}\fl{a}^{\s{T}})$. Then under the assumption  $\mineigvalue^* \geq \mineigvalue^l$, we must have $\widehat{\lambda}_{\min} \geq \frac{1}{4}\mineigvalue^*$ with probability $1 - o(1)$.
 
\end{theorem}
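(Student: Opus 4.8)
The plan is to push through the expectation of the rounded Gram matrix and then control the deviation by matrix concentration. Write $X_j := \indicator[\arm^{(j)} \in \armsubset]\,\arm^{(j)}(\arm^{(j)})^\transpose$ for $j \in [\numarms]$; by the randomized rounding rule of Algorithm~\ref{alg:getgoodsubset} these are \emph{independent} rank-one PSD matrices with $\lambda_{\min}\!\big(\sum_j X_j\big)=\lambda_{\min}\!\big(\sum_{\arm\in\armsubset}\arm\arm^\transpose\big)$ and $\lambda_{\max}(X_j)=\Vert\arm^{(j)}\Vert_2^2 \le 1$. Since arm $j$ is included with probability $\searchbound\hat{\samplingdist}_j$,
\begin{align*}
\bb{E}\Big[\sum_{\arm \in \armsubset}\arm\arm^\transpose\Big] = \searchbound\,\armmatrix\diag(\hat{\samplingdist})\armmatrix^\transpose, \qquad \bb{E}\,|\armsubset| = \searchbound\sum_j \hat{\samplingdist}_j = \searchbound .
\end{align*}
Hence the expected \emph{normalized} Gram matrix equals $\armmatrix\diag(\hat{\samplingdist})\armmatrix^\transpose$; writing $\lambda^{\mathrm{rel}} := \lambda_{\min}(\armmatrix\diag(\hat{\samplingdist})\armmatrix^\transpose)$ for its minimum eigenvalue, i.e. the optimal value of the relaxed program \eqref{opt:relaxed}, the statement reduces to two ingredients: (i) a lower bound $\lambda^{\mathrm{rel}} \ge c\,\mineigvalue^*$ for an absolute constant $c$, and (ii) concentration of $\lambda_{\min}(\sum_{\arm\in\armsubset}\arm\arm^\transpose)$ and of $|\armsubset|$ about their means.

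For (i) I would exhibit a feasible distribution for \eqref{opt:relaxed} whose induced covariance has minimum eigenvalue within a constant factor of $\mineigvalue^*$. The natural candidate is the uniform distribution over the optimal discrete subset $\ca{X}$, which attains exactly $\mineigvalue^*$ but satisfies the cap $\Vert\samplingdist\Vert_\infty = 1/|\ca{X}| \le 1/\searchbound$ only when its support is large, $|\ca{X}| \ge \searchbound$. Reconciling this feasibility requirement with the value $\searchbound = \orderof(\dimarm/(\mineigvalue^l)^{2/3})$ dictated by the algorithm — and, when the optimal support is smaller, producing an alternative spread-out distribution that still certifies a constant fraction of $\mineigvalue^*$ — is the crux of the argument. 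It is precisely here that the cap $\Vert\samplingdist\Vert_\infty \le 1/\searchbound$, which encodes the blocking constraint and renders \eqref{eq:discreteopt} non-concave, bites; the exponent $2/3$ is tuned to place $\searchbound$ in the narrow window where such a spread-out near-optimal distribution exists, while the concentration step below still goes through.

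For (ii) I would invoke the matrix Chernoff inequality for $\sum_j X_j$. With $R = \max_j \lambda_{\max}(X_j) \le 1$ and $\mu_{\min} := \lambda_{\min}(\bb{E}\sum_j X_j) = \searchbound\,\lambda^{\mathrm{rel}}$, the lower-tail bound gives, for $\delta\in(0,1)$,
\begin{align*}
\probability\Big[\lambda_{\min}\Big(\sum_{\arm\in\armsubset}\arm\arm^\transpose\Big) \le (1-\delta)\,\searchbound\,\lambda^{\mathrm{rel}}\Big] \le \dimarm\,\exp\!\big(-\tfrac{1}{2}\delta^2\,\searchbound\,\lambda^{\mathrm{rel}}\big).
\end{align*}
Using $\lambda^{\mathrm{rel}} \ge c\,\mineigvalue^* \ge c\,\mineigvalue^l$ together with the prescribed $\searchbound$ gives $\searchbound\lambda^{\mathrm{rel}} = \Theta\big(\dimarm(\mineigvalue^l)^{1/3}\big)$, which dominates $\log\dimarm$, so the failure probability is $o(1)$; this is the second place the $2/3$ exponent is used, guaranteeing that $\mu_{\min}$ is large enough for the tail to vanish. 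A standard scalar Chernoff bound on $|\armsubset| = \sum_j \indicator[\arm^{(j)}\in\armsubset]$, a sum of independent Bernoullis with mean $\searchbound$, gives $|\armsubset| \le (1+\delta')\searchbound$ with probability $1-o(1)$.

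Combining the two high-probability events by a union bound,
\begin{align*}
\widehat{\lambda}_{\min} = \frac{\lambda_{\min}\big(\sum_{\arm\in\armsubset}\arm\arm^\transpose\big)}{|\armsubset|} \ge \frac{(1-\delta)\,\searchbound\,\lambda^{\mathrm{rel}}}{(1+\delta')\,\searchbound} = \frac{1-\delta}{1+\delta'}\,\lambda^{\mathrm{rel}} \ge \frac{(1-\delta)\,c}{1+\delta'}\,\mineigvalue^* ,
\end{align*}
and choosing the constants $c,\delta,\delta'$ so that the prefactor is at least $\tfrac14$ yields $\widehat{\lambda}_{\min} \ge \tfrac14\mineigvalue^*$ with probability $1-o(1)$, as claimed. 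The main obstacle is step (i): cleanly lower-bounding $\lambda^{\mathrm{rel}}$ by a constant multiple of $\mineigvalue^*$ under the spreading cap, and certifying that the prescribed $\searchbound$ simultaneously meets the feasibility requirement there and the largeness requirement needed for the matrix-Chernoff tail. The two concentration bounds in (ii) are routine.
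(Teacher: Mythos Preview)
The genuine gap is step (i), and you cannot close it along the line you sketch. You correctly note that the uniform distribution on the discrete maximizer $\ca{X}$ is feasible for \eqref{opt:relaxed} only when $|\ca{X}| \ge \searchbound$, and propose that otherwise one should find ``an alternative spread-out distribution that still certifies a constant fraction of $\mineigvalue^*$''. In general no such distribution exists: take $d$ orthonormal arms padded with many zero arms; then $\mineigvalue^* = 1/d$, but any $\samplingdist$ with $\|\samplingdist\|_\infty \le 1/\searchbound$ puts total mass at most $d/\searchbound$ on the nonzero arms, so the relaxed value is at most $1/\searchbound$, an $o(1)$ fraction of $\mineigvalue^*$ once $\searchbound \gg d$. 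The paper resolves this case not analytically but \emph{algorithmically}: Step~6 of Algorithm~\ref{alg:getgoodsubset} invokes \texttt{SubsetSearch}, a brute-force enumeration over all subsets of size at most $\searchbound$, and Step~7 returns whichever of the two candidates (rounded or searched) has larger minimum eigenvalue. The proof therefore splits on whether $|\ca{X}| \ge \searchbound$: if yes, uniform on $\ca{X}$ is feasible so $\lambda^{\mathrm{rel}} \ge \mineigvalue^*$ and the rounding branch succeeds; if no, the exhaustive search recovers $\ca{X}$ outright. Your proposal analyzes only the rounding branch and overlooks the search branch entirely, which is precisely why step~(i) stalled.

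Your concentration step (ii), by contrast, is correct and cleaner than the paper's route. The paper bounds the deviation between the rounded and relaxed objectives via symmetrization plus Dudley's entropy integral, then applies Markov's inequality, obtaining only constant success probability (to be amplified by independent repetition). Your matrix-Chernoff argument on $\sum_j X_j$ yields an exponentially small tail directly, and your scalar Chernoff bound on $|\armsubset|$ matches the paper's Lemma on the sampled subset size. Once you graft in the case split and the brute-force branch for small $|\ca{X}|$, your treatment of the large-$|\ca{X}|$ branch is a valid and arguably tidier substitute for the paper's empirical-process machinery.
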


\begin{remark}\label{rem: point on complexity and tradeoff}
Note that the time complexity is polynomial in the number of arms $\numarms^{\orderof\left(\dimarm{\mineigvalue^l}^{-2/3}\right)}$ (refer to Appendix~\ref{sec:bruteforce} for details) which is significantly improved than the trivial brute force algorithm which has a running time of $\orderof(\exp{\numarms})$. One can further reduce the time complexity to $\orderof(\numarms)$ and remove the exponential dependence on $\dimarm$, by choosing $\searchbound = \orderof(({\mineigvalue^l})^{-2}\dimarm)$ at the cost of a slightly worse approximation guarantee of $\widehat{\lambda}_{\min} = \Omega\left({(\mineigvalue^l)^2\mineigvalue^*}\right)$~\footnote{Note that $\mineigvalue^l \leq \mineigvalue^* \leq 1$ since $\Vert\arm\Vert_\infty\leq 1$.}  (See Appendix~\ref{sec:appendix on avoiding brute force}).
\end{remark}

    Several existing techniques in experimental design deal with maximizing objectives such as minimum eigenvalue; however, they assume submodularity or matroid constraints~\citep{Allen-Zhu2021-yl}, which the average minimum eigenvalue of~\eqref{eq:discreteopt} does not satisfy (see Appendix~\ref{sec:matroid} for a detailed discussion).

\textit{Proof Outline: } 
 We first show in Lemma~\ref{lemma:probboundeig} (using concentration guarantees) that the following objective values are close: (A) value of the maximized concave objective with distribution $\hat{\samplingdist}\in \ca{P}(\ca{A})$ and parameter $\searchbound$ in Equation \ref{opt:relaxed} (B) objective value of the set $\ca{G}$ (equation \ref{eq:discreteopt}) obtained via randomized rounding procedure from $\hat{\samplingdist}$  at $\searchbound$ (line 4 in Algorithm~\ref{alg:getgoodsubset}). 
Note that the value of the maximized concave objective in Equation \ref{opt:relaxed} with parameter $g_1$ is larger than the value with parameter $g_2$ provided $g_1 \le g_2$. Therefore, we show our approximation guarantees with respect to objective in Equation \ref{opt:relaxed} with parameter $\searchbound=\orderof(\dimarm)$. We show that the approximation guarantee holds with high probability if $|\mathcal{X}|>\searchbound$ and do a brute-force search otherwise.
Finally, given that the concave objective with parameter $\dimarm$ in Equation \ref{opt:relaxed} is a relaxation of the discrete objective in Equation \ref{eq:discreteopt}, the objective value of the former is going to be larger than the objective value of the latter.
\subsection{Online Guarantees - Regret Bound for \texttt{BSLB}}\label{subsec:online}
Our next result is the expected regret incurred by \texttt{BSLB} in the online setting. 
The key ingredient in the regret analysis lies in appropriately accounting for the blocking constraint in the exploitation component of \texttt{BSLB}. Below, we present the result detailing the regret guarantees of \texttt{BSLB} when the exploration period $\numroundsexplore$ is set optimally using a known sparsity level $k$. 
\begin{theorem}\label{th:regret}
   (\textbf{Regret Analysis of \texttt{BSLB}}) 
Consider the $\dimarm$-dimensional sparse linear bandits framework with blocking constraint having a set $\armset \subset \unitball$ of $\numarms$ arms spanning $\R^\dimarm$ and $\numrounds$ rounds ($\numrounds \ll \dimarm \ll \numarms$). Let $\|\arm\|_\infty\leq 1 \forall \arm \in \armset$. In each round $\timeindex\in [\numrounds]$, we choose arm $\arm_\timeindex\in \armset$ and observe reward $\reward_\timeindex = \langle\parameter,\arm_\timeindex \rangle + \noise_\timeindex$ where $\parameter\in \R^\dimarm$ is unknown and $\noise_t$ is zero-mean independent noise random variable with variance $\sigma^2=O(1)$. Let $\rmax = \max_\arm \langle\parameter,\arm \rangle$. Suppose $\parameter$ has (unknown) tail magnitude $\sparsitytail_\sparsityk$ at fixed sparsity level $\sparsityk$. Let $\mineigvalue^*$ for the set $\armset$ be as defined in~\eqref{eq:discreteopt} and let $\mineigvalue^l$ be its known lower bound. Let $\widehat{\lambda}_{\min}$ be the minimum eigenvalue of the normalized Gram matrix of the sampled subset from Step 1 of \texttt{BSLB}.
In this framework, \texttt{BSLB} with
exploration period $\numroundsexplore = \widetilde{\orderof}(\rmax^{-2/3}\widehat{\lambda}_{\min}^{-2/3}\sparsityk^{\frac{2}{3}}\numrounds^{\frac{2}{3}})$, regularization parameter $\lagrange=\sqrt{\frac{\log \dimarm}{\numsamples}}$ and subset selection parameter $\searchbound= \orderof(\frac{\dimarm}{(\mineigvalue^l)^{2/3}})$, achieves a regret guarantee    
   \begin{align}\label{eq:regretbound}
   \begin{split}
        \expectedregret = \widetilde{\orderof}&\left( \rmax^{1/3} {(\mineigvalue^*)}^{-2/3}(1+\restrictedconeconstant)^2\sparsityk^{\frac{2}{3}}\numrounds^{\frac{2}{3}}  \right).
           \end{split}
   \end{align}
\end{theorem}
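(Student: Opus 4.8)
The plan is to run the standard Explore-Then-Commit regret decomposition, concentrating the real work in the exploitation phase where the blocking constraint bites. Write $E=\{\arm_1,\dots,\arm_{\numroundsexplore}\}$ for the exploration pulls, set $m:=\numrounds-\numroundsexplore$, and recall the reward-decreasing ordering $\permutation$ from \eqref{eq:regretcumm}. Regrouping the two sums in the regret definition by phase gives the identity
\[
\regret = \Big(\sum_{i=1}^{\numroundsexplore}\langle\parameter,\arm^{(\permutation(i))}\rangle - \sum_{t=1}^{\numroundsexplore}\langle\parameter,\arm_t\rangle\Big) + \Big(\sum_{i=\numroundsexplore+1}^{\numrounds}\langle\parameter,\arm^{(\permutation(i))}\rangle - \sum_{t=\numroundsexplore+1}^{\numrounds}\langle\parameter,\arm_t\rangle\Big).
\]
The first bracket (exploration regret) I would bound crudely: each benchmark term is at most $\rmax$ and each pulled arm incurs $\orderof(\rmax)$ regret per round, so this is $\orderof(\rmax\numroundsexplore)$. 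For the second bracket, the Lasso quality enters through Corollary~\ref{coro:empiricallasso}: since the exploration rows are sampled without replacement from $\armsubset$ and $\widehat{\lambda}_{\min}$ is the minimum eigenvalue of their normalized Gram matrix, the corollary (with $\numsamples=\numroundsexplore$, provided $\numroundsexplore=\Omega(\sparsityk\widehat{\lambda}_{\min}^{-4})$, which holds in the stated regime) gives, on a high-probability event, the $\ell_1$ bound $\epsilon:=\Vert\parameter-\parameterestimate\Vert_1=\widetilde{\orderof}(\sparsityk(1+\restrictedconeconstant)^2\widehat{\lambda}_{\min}^{-1}\numroundsexplore^{-1/2})$.

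The exploitation bound rests on two ingredients. First, because $\Vert\arm\Vert_\infty\le1$, the predicted score is uniformly accurate: $|\langle\parameterestimate-\parameter,\arm\rangle|\le\Vert\parameterestimate-\parameter\Vert_1\Vert\arm\Vert_\infty\le\epsilon$ for every arm. Hence if $b_1,\dots,b_m$ are the greedy top-$m$ picks of $\armsetexploit=\armset\setminus E$ under $\parameterestimate$ and $c_1,\dots,c_m$ are the true-optimal top-$m$ of the same set under $\parameter$, then $\sum_j\langle\parameter,b_j\rangle\ge\sum_j\langle\parameterestimate,b_j\rangle-m\epsilon\ge\sum_j\langle\parameterestimate,c_j\rangle-m\epsilon\ge\sum_j\langle\parameter,c_j\rangle-2m\epsilon$, where the middle step uses that the $b_j$ maximize the $\parameterestimate$-score over all $m$-subsets of $\armsetexploit$. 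Second — the blocking-specific step — I would prove the combinatorial domination that the top-$m$ true reward over $\armset\setminus E$ is at least $\sum_{i=\numroundsexplore+1}^{\numrounds}\langle\parameter,\arm^{(\permutation(i))}\rangle$: since $|E|=\numroundsexplore$, at least $m$ of the global top-$\numrounds$ arms survive in $\armset\setminus E$, and the top-$m$ of those occupy $m$ distinct ranks in $[\numrounds]$, whose reward-sum is minimized (by monotonicity in rank) exactly at the residual ranks $\numroundsexplore+1,\dots,\numrounds$. Combining the two ingredients yields exploitation regret at most $2m\epsilon\le 2\numrounds\epsilon$, so $\regret\le\orderof(\rmax\numroundsexplore)+2\numrounds\epsilon$.

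It remains to optimize and clean up. Substituting $\epsilon$ and the prescribed $\numroundsexplore=\widetilde{\orderof}(\rmax^{-2/3}\widehat{\lambda}_{\min}^{-2/3}\sparsityk^{2/3}\numrounds^{2/3})$ balances the two terms; because the exploration length is chosen without the \emph{unknown} factor $(1+\restrictedconeconstant)$, that factor resurfaces (squared) only in the dominant exploitation term, giving $\regret=\widetilde{\orderof}((1+\restrictedconeconstant)^2\rmax^{1/3}\widehat{\lambda}_{\min}^{-2/3}\sparsityk^{2/3}\numrounds^{2/3})$. Finally, with $\searchbound=\orderof(\dimarm/(\mineigvalue^l)^{2/3})$, Theorem~\ref{th:randomizedrounding} gives $\widehat{\lambda}_{\min}\ge\tfrac14\mineigvalue^*$ with probability $1-o(1)$, so $\widehat{\lambda}_{\min}^{-2/3}=\orderof((\mineigvalue^*)^{-2/3})$ and the bound becomes the claimed \eqref{eq:regretbound}. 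To pass from this high-probability statement to $\expectedregret$, I would bound the regret on the failure events of the Lasso concentration and the rounding guarantee by the trivial $\orderof(\rmax\numrounds)$ and verify their probabilities are small enough to be absorbed into the $\widetilde{\orderof}$. The main obstacle is the exploitation analysis: coupling the uniform prediction-error-to-greedy inequality with the combinatorial domination under blocking, since — unlike unconstrained ETC — the algorithm cannot re-pull the strong arms it consumed during exploration and must instead be matched against the benchmark's residual ranks.
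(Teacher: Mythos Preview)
Your proof is correct and follows the same three-step Explore-Then-Commit skeleton as the paper: decompose regret into exploration and exploitation, control the exploitation term via the Lasso guarantee (Corollary~\ref{coro:empiricallasso}) together with the eigenvalue approximation (Theorem~\ref{th:randomizedrounding}), and then optimize $\numroundsexplore$. The one organizational difference is in the exploitation analysis. The paper matches the $j$-th benchmark arm against the $j$-th exploited arm individually, handling each index by a case split and an index-shift/telescoping trick that bounds the per-arm contribution by $2\epsilon$; you instead argue globally, first showing the greedy $\parameterestimate$-top-$m$ over $\armsetexploit$ is within $2m\epsilon$ of the true top-$m$ over $\armsetexploit$, and then invoking the separate combinatorial domination that the true top-$m$ of $\armsetexploit$ beats the benchmark's residual ranks $\numroundsexplore+1,\dots,\numrounds$ (the rank argument $\rho_i\le\numroundsexplore+i$ being the clean way to cash this out). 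Both routes land on the same $2m\epsilon$ exploitation bound; your decomposition makes the blocking-specific step explicit and is arguably tidier than the paper's per-index case analysis.
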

\textit{Proof Outline: }
In exploration component of the regret decomposition, we use Corollary~\ref{coro:empiricallasso} to obtain error guarantees of Lasso and Theorem~\ref{th:randomizedrounding} to bound the minimum eigenvalue of the Gram matrix of the sampled arms. We optimize the exploration period to obtain our stated result (Proof in App.~\ref{app:corollaryproof}).

Note that when $\restrictedconeconstant=0$ (when the parameter vector is exactly $\sparsityk$-sparse), we obtain regret guarantee which has the same order in $\numrounds$, $\sparsityk$, $\mineigvalue^*$ and $\rmax$ as that of~\citep[Theorem 4.2]{hao2020high}.
\begin{insight}
    \texttt{BSLB} enables diversity in selected arms by performing Step~2 in Algorithm~\ref{alg:greedy} - this step ensures that $\mineigvalue$ of the covariance matrix of the subset used in exploration is approximately optimal.
    The exploration period in Theorem \ref{th:regret} is optimized to maximize cumulative reward. However, in practice, the exploration period of \texttt{BSLB} can be increased at the cost of higher regret if diversity assumes higher importance in the sampled data-points. 
\end{insight}
\begin{remark}\label{rem: point on complexity and tradeoff in regret}
The optimization in Step 9 of \texttt{BSLB} (LASSO) has a runtime $\mathsf{Poly}$($\numarms,\dimarm,\numrounds$). The runtime of the \textsc{GetGoodSubset}($\armset$, $\searchbound$) is $\mathsf{Poly}$($\numarms$) but exponential in $\dimarm$ (see Appendix \ref{sec:bruteforce}  details). Note that the stated runtime in the latter part is significantly lower than the trivial brute force search for the optimal subset having a runtime of $\orderof(\exp(\numarms))$. This is possible because, as a result of the approximation guarantees of Theorem~\ref{th:randomizedrounding}, we restrict the size of the subset for the brute-force search. Further one can remove the exponential dependence on $\dimarm$ (See Appendix~\ref{sec:appendix on avoiding brute force}), however this worsens the regret guarantee by a multiplicative factor of $(\mineigvalue^l)^{-2}$.
\end{remark}
We provide the following theorem on the lower bound of regret for the high-dimensional linear bandit setting with blocking constraint. 
\begin{theorem} \label{thm:lower}
     Consider the $d$-dimensional sparse linear bandits framework with blocking constraint having a set $\mathcal{A} \subseteq \mathcal{B}^d$ of M arms spanning $\mathbb{R}^d$ and $\mathsf{T}$ rounds ($\mathsf{T} \ll d \ll \numarms)$. In each round $t \in [\mathsf{T}]$, we choose arm $a_t \in \mathcal{A}$ and observe reward $r_t = \langle \theta, a_t \rangle + \eta_t$ where $\theta \in \mathbb{R}^d$, is unknown and $\eta_t$ is zero-mean independent noise random variable with variance $\sigma^2 = 1$. Assume that the parameter vector is $k$-sparse, $\Vert \theta \Vert_0 = k$. Then for any bandit algorithm the worst case regret is lower bounded as follows,
     \[
      \mathbb{E}[\mathsf{R}] = \Omega(\min((\mineigvalue(\mathcal{A}))^{-1/3}k^{1/3}\mathsf{T}^{2/3}),\sqrt{d\mathsf{T}})).
     \]
\end{theorem}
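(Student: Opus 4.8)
The plan is to prove the two terms in the minimum through separate information-theoretic arguments and then note that every admissible policy must respect both, so the smaller (hence always-valid) bound is the stated lower bound. I would work in the standard regret-to-testing framework. A useful first reduction is that the blocking constraint only \emph{shrinks} the class of admissible policies, so a minimax lower bound proved while allowing repeated pulls (under the same top-$\mathsf{T}$ benchmark of~\eqref{eq:regretcumm}) transfers verbatim to the blocked setting; I can therefore construct hard instances without worrying about blocking and reinstate it at no loss. To keep the top-$\mathsf{T}$ benchmark well-defined and comparable to an ordinary single-best-arm benchmark, I would pad each instance with $\Theta(\mathsf{T})$ ``decoy'' arms of near-optimal reward, which is possible since $\mathsf{T}\ll\numarms$; this makes $\sum_{t}\langle\theta,a^{(\pi(t))}\rangle \asymp \mathsf{T}\,R_{\max}$ and aligns the two regret notions up to constants.

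For the $\lambda_{\min}(\mathcal{A})^{-1/3}k^{1/3}\mathsf{T}^{2/3}$ term I would adapt the data-poor sparse-bandit construction of~\citep{hao2020high} to the \emph{given} action set $\mathcal{A}$. Since $\mathcal{A}$ is fixed, $\lambda_{\min}(\mathcal{A})$ is the optimal-design minimum eigenvalue of~\eqref{eq:discreteopt}; I would choose a support $S$ with $|S|=k$ lying along the coordinate directions that $\mathcal{A}$ covers \emph{least}, and take a family of $k$-sparse parameters $\theta_\epsilon=\Delta\sum_{j\in S}\epsilon_j e_j$ indexed by signs $\epsilon\in\{\pm1\}^k$, with a gap $\Delta$ to be tuned. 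The construction is arranged so that the arm optimal under $\theta_\epsilon$ is uninformative about the individual signs $\epsilon_j$, so any policy wishing to identify $\epsilon$ must play arms that are $\Omega(1)$-suboptimal, and even the most informative feasible design spends only $O(\lambda_{\min}(\mathcal{A}))$ energy per round on each relevant coordinate. This produces the characteristic exploration/estimation tension: estimating the $k$ relevant coordinates to accuracy $\rho$ requires $n=\Omega\big(k\,\rho^{-2}\lambda_{\min}(\mathcal{A})^{-1}\big)$ exploratory rounds, each costing $\Omega(1)$ regret, while committing with accuracy $\rho$ costs $\Omega(\mathsf{T}\rho)$ over the remaining horizon; the unavoidable cost $n+\mathsf{T}\rho$ with $\rho\asymp\sqrt{k/(n\,\lambda_{\min}(\mathcal{A}))}$ is minimized at $n\asymp\lambda_{\min}(\mathcal{A})^{-1/3}k^{1/3}\mathsf{T}^{2/3}$, matching the first term.

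To make this rigorous for all (not just explore-then-commit) policies, I would encode it with Assouad's lemma applied coordinate-by-coordinate over $S$: flipping $\epsilon_j$ forces $\Omega(\Delta)$ per-round regret on arms loading on $e_j$, so $\mathbb{E}[\mathsf{R}]=\Omega\big(k\,\Delta\,(1-\overline{\mathrm{err}})\big)$, where $\overline{\mathrm{err}}$ is the average single-coordinate testing error. The information step bounds the trajectory KL between $\theta_\epsilon$ and a single-flip neighbor by $\tfrac{\Delta^2}{\sigma^2}\sum_{t}\langle e_j,a_t\rangle^2$; the quantitative crux is that the informative energy $\sum_t\langle e_j,a_t\rangle^2$ any policy can place on a single relevant coordinate is controlled through $\lambda_{\min}(\mathcal{A})$, so Pinsker bounds $1-\overline{\mathrm{err}}$ and the optimal $\Delta\asymp(k/(\lambda_{\min}(\mathcal{A})\mathsf{T}))^{1/3}$ recovers $\lambda_{\min}(\mathcal{A})^{-1/3}k^{1/3}\mathsf{T}^{2/3}$. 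For the $\sqrt{d\mathsf{T}}$ term I would invoke the classical minimax lower bound for $d$-dimensional linear bandits on a standard hard action set (a scaled hypercube / sphere packing), which holds irrespective of sparsity and provides a ceiling the sparse bound cannot usefully exceed when $\lambda_{\min}(\mathcal{A})$ is small; since every algorithm faces both, the universally valid statement is the minimum of the two.

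I expect the main obstacle to be the information-accounting that pins the exact $\lambda_{\min}(\mathcal{A})$ dependence for an \emph{arbitrary} given action set: one must show that the per-coordinate energy $\sum_t\langle e_j,a_t\rangle^2$ cannot be inflated by a clever adaptive policy beyond what $\lambda_{\min}(\mathcal{A})$ permits, and that the sparse (axis-aligned) support $S$ can be chosen to lie along genuinely ill-covered directions of $\mathcal{A}$ rather than along an arbitrary least-informative unit vector. A secondary but delicate point is verifying that the decoy-arm padding simultaneously keeps $\theta$ exactly $k$-sparse, preserves the $\Omega(1)$-suboptimality of every informative arm, and leaves the per-coordinate regret attribution in the Assouad step intact.
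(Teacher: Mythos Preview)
Your overall strategy is correct, but it is far more elaborate than the paper's actual proof. The paper observes that any non-blocked sparse linear bandit instance with arm set $\mathcal{A}$ can be turned into a blocked instance by replacing $\mathcal{A}$ with the multiset $\mathcal{A}'$ containing $\mathsf{T}$ copies of each arm. On $\mathcal{A}'$ the top-$\mathsf{T}$ benchmark of~\eqref{eq:regretcumm} equals exactly $\mathsf{T}$ times the single best arm, and the blocked problem on $\mathcal{A}'$ is identical to the non-blocked problem on $\mathcal{A}$ (with the same $\lambda_{\min}$). The proof then simply cites Theorem~3.3 of \cite{hao2020high} and closes with a one-line contradiction argument; there is no Assouad construction, no decoy-arm padding, no information-theoretic accounting.

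Your reduction (``blocking shrinks the admissible policies, so a non-blocked lower bound transfers'') is the same insight the paper uses, and your decoy-arm padding is a workable way to align the two benchmarks, though the $\mathsf{T}$-copies trick does this exactly rather than approximately and sidesteps the secondary verifications you flag. Where the two approaches really diverge is that you propose to re-derive the $\lambda_{\min}^{-1/3}k^{1/3}\mathsf{T}^{2/3}$ term from scratch via Assouad's lemma. That derivation is correct in outline, but once the reduction to the non-blocked setting is in hand it is unnecessary: the bound is already established in \cite{hao2020high}, and the obstacles you correctly anticipate (controlling per-coordinate energy through $\lambda_{\min}(\mathcal{A})$ for an arbitrary action set, choosing an axis-aligned hard support) are precisely the work done there. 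Your route would buy a self-contained argument; the paper's buys a two-paragraph proof.
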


For the case when the true parameter satisfies the exact sparsity condition (the tail $\sparsitytail_\sparsityk$ is $0$),  our regret guarantee (see Corollary~\ref{coro:hard} in Appendix) achieves the same $\s{T}^{2/3}$ regret dependence as in as~\citet{hao2020high} without the \textit{blocking constraint}. Note that similar to \citet{hao2020high}, our regret guarantees with a $\s{T}^{2/3}$ scaling and respecting the blocking constraint  are also tight.

\subsection{Corralling when optimal sparsity level is not known}\label{subsec:corralling}

   Note that for any unknown parameter vector $\f{\theta}$, we can fix the sparsity level $k$ and therefore the corresponding tail magnitude $\restrictedconeconstant$ - subsequently, we can obtain the guarantees of Theorem \ref{th:regret} by setting the exploration period optimally for the fixed $k$. However, if $k$ is set too low, then $\restrictedconeconstant$ will be too high, and therefore, the multiplicative term containing $\restrictedconeconstant$ in the regret (Equation \ref{eq:regretbound}) dominates. There is a trade-off, and therefore, there is an optimal choice of sparsity $\sparsityk$. Therefore, we propose a meta-algorithm \texttt{C-BSLB} that exploits coralling~\citep{agarwal2017corralling}  multiple versions of the \texttt{BSLB} algorithm~\ref{alg:greedy} with different values of $\sparsityk$ used to set the exploration period $\numroundsexplore$ - the meta-algorithm gradually learns to choose the best base algorithm. 
   
   However, naively applying CORRAL with all distinct base algorithms leads to a linear dependence on dimension $d$ in the regret making it vacuous. Therefore we carefully choose $\log \dimarm$ base algorithms for search within CORRAL with corresponding sparsity parameters set on exponentially spaced points - such a restriction ensures that the overhead in regret is minimal (logarithmic dependence on dimension $\dimarm$). However, we still prove our regret guarantee with respect to the base algorithm with optimal sparsity - although the optimal base algorithm may not be in the set of carefully chosen base algorithms in the meta-algorithm.  

\begin{theorem}\label{th:corralling} 
Consider the $\dimarm$-dimensional sparse linear bandits framework with blocking constraint as described in Theorem \ref{th:regret}. Let the \texttt{C-BSLB} algorithm (Algorithm~\ref{alg:corral} in Appendix \ref{sec:corralling}) run with an appropriate learning rate on multiple versions of \texttt{BSLB}, using distinct sparsity parameter  $\sparsityk$ taking values in the set $\{2^i\}_{i=0}^{\lfloor \log_2(\dimarm)\rfloor+1}$. Let the optimal sparsity parameter in Theorem \ref{th:regret} that achieves minimum regret be $\sparsity \in \{1,2,\dots,\dimarm-1,\dimarm\}$, and let $\bestregret$ be the corresponding regret. Then the meta-algorithm \texttt{C-BSLB} achieves the following regret guarantee,  
\begin{align}\label{eq:corral_guarantee}
        \expectedregret &= \orderof(\sqrt{\numrounds\log_2(\dimarm)}+\sqrt{\sparsity}\log_2(\dimarm)\bestregret). 
    \end{align} 
\end{theorem}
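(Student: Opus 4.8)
The plan is to reduce the continuum of admissible sparsity levels to the $\Theta(\log_2 \dimarm)$ dyadic levels on which \texttt{C-BSLB} actually instantiates base algorithms, and then to invoke the CORRAL master regret bound \citep{agarwal2017corralling} with each \texttt{BSLB} instance as a stable base algorithm. Concretely I would proceed in three stages: (i) a covering argument showing that the best dyadic base has standalone regret $\orderof(\bestregret)$; (ii) a stability analysis showing that \texttt{BSLB} degrades gracefully under the importance-weighted feedback that CORRAL supplies to its bases; and (iii) substitution into the CORRAL bound followed by optimization of the learning rate.

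First, for the covering step, let $\sparsity$ be the optimal sparsity and let $2^{j}$ be the unique dyadic level with $2^{j-1}<\sparsity\le 2^{j}$, which lies in the grid $\{2^i\}_{i=0}^{\lfloor\log_2\dimarm\rfloor+1}$ since $\sparsity\le\dimarm$. I would compare the per-instance regret $\widetilde{\orderof}(\rmax^{1/3}(\mineigvalue^*)^{-2/3}(1+\restrictedconeconstant)^2\sparsityk^{2/3}\numrounds^{2/3})$ of Theorem \ref{th:regret} at $\sparsityk=2^j$ and at $\sparsityk=\sparsity$. Two monotonicity facts make the comparison a constant: the factor $\sparsityk^{2/3}$ grows by at most $2^{2/3}$ because $2^{j}\le 2\sparsity$, and the tail $\beta_{\sparsityk}$ is nonincreasing in $\sparsityk$ (enlarging $\topt_{\sparsityk}$ can only increase $\Vert\parameter_{\topt_{\sparsityk}}\Vert_1$ and decrease $\Vert\parameter_{\topt_{\sparsityk}^\complement}\Vert_1$), so $(1+\beta_{2^j})^2\le(1+\beta_{\sparsity})^2$; the remaining factors $\rmax^{1/3}(\mineigvalue^*)^{-2/3}$ do not depend on $\sparsityk$. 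Hence the base algorithm with $\sparsityk=2^j$ has regret at most $2^{2/3}\bestregret=\orderof(\bestregret)$, which is the covering guarantee promised in Technical Challenge (C): feeding only $\Theta(\log_2\dimarm)$ bases to the master still leaves the globally optimal sparsity as a valid comparator.

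Second, I would verify stability. Under CORRAL each base is fed rewards rescaled by the inverse of its (possibly small) selection probability, bounded by some inflation factor $\rho$. For \texttt{BSLB} this amounts to running the exploration phase with the sub-Gaussian variance proxy inflated from $\sigma^2$ to $\orderof(\rho\sigma^2)$, so by Corollary \ref{coro:empiricallasso} the Lasso $\ell_1$-error, and therefore the exploitation regret, grows by a power of $\rho$, while the exploration regret is unaffected; re-optimizing $\numroundsexplore$ then yields a bound of the form $\rho^{\text{(sublinear power)}}\,\bestregret$ for the regret of \texttt{BSLB} under inflation $\rho$. The sparsity dependence $\sqrt{\sparsity}$ enters here, through the $\sparsityk$-factors in the Lasso error rate and in the sample-complexity threshold $\numsamples=\Omega(\sparsityk\mineigvalueempirical^{-4})$ of Corollary \ref{coro:empiricallasso} that controls when the subsampled Gram matrix is still well-conditioned. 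I expect this to be the main obstacle: it requires redoing the regret decomposition of Theorem \ref{th:regret} with heteroscedastic, importance-weighted noise and propagating the sparsity level carefully through the restricted-eigenvalue and minimum-eigenvalue guarantees, rather than quoting Theorem \ref{th:regret} as a black box.

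Finally, I would substitute the stability bound into the CORRAL master inequality, of the form $\E[\regret]=\orderof(M\eta^{-1}\log\numrounds+\eta\numrounds)+\E[R_{i^\ast}(\numrounds,\rho)-\rho\,(c\eta\log\numrounds)^{-1}]$ with $M=\Theta(\log_2\dimarm)$ bases, optimize the internal inflation $\rho$ against the negative drift term $-\rho\,(c\eta\log\numrounds)^{-1}$ that CORRAL produces, and then tune the learning rate $\eta$. Balancing the master overhead $\orderof(M\eta^{-1}+\eta\numrounds)=\orderof(\sqrt{M\numrounds})=\orderof(\sqrt{\numrounds\log_2\dimarm})$ against the stability-inflated best-base term leaves the two contributions $\orderof(\sqrt{\numrounds\log_2\dimarm})$ and $\orderof(\sqrt{\sparsity}\,\log_2(\dimarm)\,\bestregret)$, the latter inheriting a $\log_2\dimarm$ from the number of dyadic bases and the $\sqrt{\sparsity}$ from the stability step, together with the covering guarantee from stage (i) that lets the comparator be $\bestregret$. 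This matches the claimed bound $\expectedregret=\orderof(\sqrt{\numrounds\log_2(\dimarm)}+\sqrt{\sparsity}\log_2(\dimarm)\bestregret)$.
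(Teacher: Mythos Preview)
Your three-stage outline (covering, stability, CORRAL substitution) matches the paper's high-level route, but the two arguments disagree sharply on where the $\sqrt{\sparsity}$ factor originates.

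Your covering step is actually cleaner than the paper's. By choosing the dyadic level $2^j$ \emph{above} $\sparsity$ and using that $\beta_k$ is nonincreasing in $k$, you obtain a constant-factor loss $\expectedregret_{2^j}\le 2^{2/3}\bestregret$. The paper instead takes the \emph{nearest} dyadic level $s$, which may lie below $\sparsity$; there $\beta_s\ge\beta_{\sparsity}$ and your monotonicity comparison no longer applies, so the paper bridges $s$ to $\sparsity$ through a binary-search chain of roughly $\log_2(4\sparsity)$ intermediate sparsity levels, each step losing a factor $\sqrt{2}$, and the telescoped product yields $\expectedregret_s\le 2\sqrt{\sparsity}\,\bestregret$. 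In the paper, then, the entire $\sqrt{\sparsity}$ comes from this covering lemma, not from stability.

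For stability the paper does not carry out the importance-weighted Lasso re-analysis you sketch. It simply asserts that \texttt{BSLB} is $(1,\expectedregret)$-stable in the sense of \citet{agarwal2017corralling} (by rescaling bounded losses) and then invokes their Theorem~5 as a black box to obtain $\widetilde{\orderof}(\sqrt{M\numrounds}+M\,\expectedregret_s)$ with $M=\Theta(\log_2\dimarm)$; no $\sqrt{\sparsity}$ enters at this stage. Your claim that $\sqrt{\sparsity}$ arises from the Lasso error rate under inflated noise is not substantiated: the $\sparsityk$-dependence of Corollary~\ref{coro:empiricallasso} is already absorbed into $\bestregret$, and you provide no concrete mechanism by which importance weighting yields an \emph{additional} multiplicative $\sqrt{\sparsity}$. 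Indeed, combining your constant-factor covering with the paper's $(1,R)$-stability would give the conclusion \emph{without} the $\sqrt{\sparsity}$, so that factor appears to be an artifact of the paper's telescoping argument rather than something intrinsic to the stability step you flag as the main obstacle.
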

Note that the first term in Equation \ref{eq:corral_guarantee} and the multiplicative factor of $\sqrt{\sparsity}\log_2(\dimarm)$ corresponds to the additional cost in combining the input base algorithms by  Algorithm~\ref{alg:corral}. We stress that the dependence on dimension $\dimarm$ from the additional cost is only logarithmic.

\textit{Proof Outline: } We use ~\citep[Theorem 5]{agarwal2017corralling} and our regret bound from Theorem~\ref{th:regret} to obtain Theorem~\ref{th:corralling}. The key novelty in our proof is to establish the following - when searching with the small curated set of base algorithms in CORRAL, we do not suffer a significant loss in the regret even if the base algorithm with the optimal sparsity parameter does not lie in the curated set. The crux of our proof lies in a covering argument. By using a recursive telescoping argument, we can bound the regret incurred between any base algorithm not used for the search while corraling versus the base algorithm with the closest sparsity parameter used in the search. 

\section{Limitations and Future Work}


This paper addresses the challenge of regret minimization in high-dimensional, sparse linear bandits with blocking constraints, without imposing exact sparsity constraints on the parameter. We establish theoretical guarantees demonstrating the sub-linear regret of the proposed \texttt{BSLB} algorithm, ensuring these results hold under minimal assumptions on the set of arms and exhibit robustness to sparsity. Furthermore, we introduce a meta-algorithm, \texttt{C-BSLB}, which combines multiple \texttt{BSLB} algorithms with varying input parameters. This meta-algorithm achieves the same regret bounds as \texttt{BSLB}, eliminating the need for prior knowledge of the optimal sparsity hyperparameter. This paper assumes a linear relationship, which cannot capture binary or other non‐linear rewards; one could extend BSLB to a generalized linear bandit by using a logistic link function for binary rewards, thereby modeling non‐linear decision boundaries.
To make the approach robust against heavy‐tailed or adversarial noise, we can replace Lasso with a GLM estimator or incorporate robust covariance estimation in the exploration phase, which would mitigate the impact of outliers and model deviations.

\bibliography{iclr2025_conference}
\bibliographystyle{plainnat}
\newpage

\appendix

\newpage 

\section{Numerical Experiments}\label{app:numerical}

\subsection{Simulation Study}\label{app:simulation}

\textit{Performance of C-BSLB (without knowledge of true parameters)}\label{app:corrallingexp}

\begin{figure}[ht]
    \centering
    \includegraphics[width=0.5\linewidth]{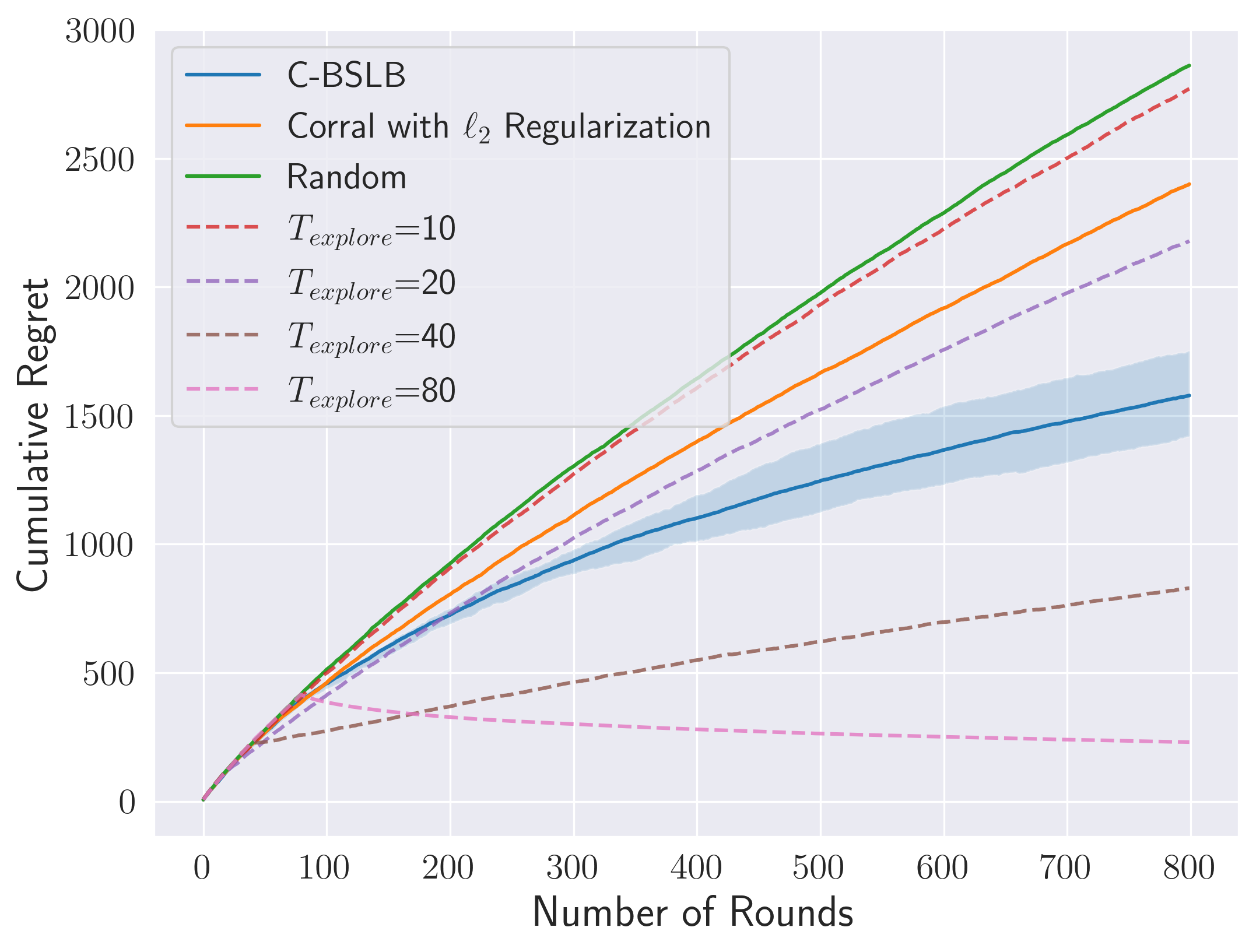}
    \caption{Regret of different algorithms in a Simulated Blocked Sparse Linear Bandit Setup. }
    \label{fig:cummulativesimulated}
\end{figure}
Finally, we also run a simulation study to study the efficacy of our \texttt{BSLB} and \texttt{C-BSLB} algorithm and demonstrate how CORRAL can be used to achieve a sub-linear regret without the knowledge of the optimal parameters.  We compute the cumulative regret at time $\timeindex$ compared to the top$-\timeindex$ arms, and unlike the standard bandit setting, in a blocked setting, the cumulative regret need not be monotonic. To highlight how our method exploits the sparsity of the parameter, we also run CORRAL with multiple versions of our algorithm but with a simple linear regression estimator. We simulate the experimental set-up with the following parameters $\numarms = 10000, \dimarm=1000$ and $\numrounds = 300$. At sparsity level $\sparsityk=10$, the tail parameter is $\restrictedconeconstant=3$. The experiment is repeated with $100$ different random parameter initialization.  We plot the cumulative regret in Fig.~\ref{fig:cummulativesimulated}, for algorithms run with different exploration period and two versions of the CORRAL algorithm. Our C-BSLB performs better than corralling with $\ell_2-$regularization, showing that our method exploits the sparsity and does not require true knowledge of the hyperparameters. We also benchmark against a random policy and show that our method performs significantly better showing that the upper bound on regret is not \textit{vacuous}. 

\subsection{Personalized recommendation with Single Rating per Item}\label{app:recommendation}
Next we demonstrate our bandit algorithm on real-world recommendation datasets. However, we construct the recommendation task such that a) each item receives only a single rating from a user and b) we can only use previous recommendations of a user for recommending content. This is in contrast to the standard collaborative filtering setting where an item can where the ratings of the other users is used to recommend content to you. Our setting makes this possible by exploiting the additional information from the embeddings obtained from a pre-trained network for the text (or image) features of the different items. We argue that our setting is be more relevant in recommendation scenarios where privacy is a concern. 
\newcommand{\numcopies}{\text{copies}}

We perform two sets experiments. 

\textit{Experiment 1 on MovieLens and Netflix data  (Movie Ratings) :}

The MovieLens 100K dataset contains 100,000 5-star ratings from 1,000 users on 1,700 movies . For our analysis, we selected the 100 most active users (those with the highest rating counts) and the 100 most-rated movies. Using rating data from 50 users, we first applied matrix factorization to complete the user-item rating matrix, then derived 40-dimensional movie embeddings. To create synthetic high-dimensional embeddings, we extended these to 120 dimensions by randomly sampling additional coordinates uniformly from [0,1]. Results averaged across 5 test users (shown in Figure~\ref{fig:movielensbenchmark}) demonstrate that our algorithm and ESTC outperform LinUCB and DR-Lasso, with the performance gap attributable to our method's effective exploitation of high-dimensional sparsity patterns . This aligns with theoretical expectations for sparse learning scenarios in recommendation systems.
\begin{figure}[h]
    \centering
    \includegraphics[width=0.5\linewidth]{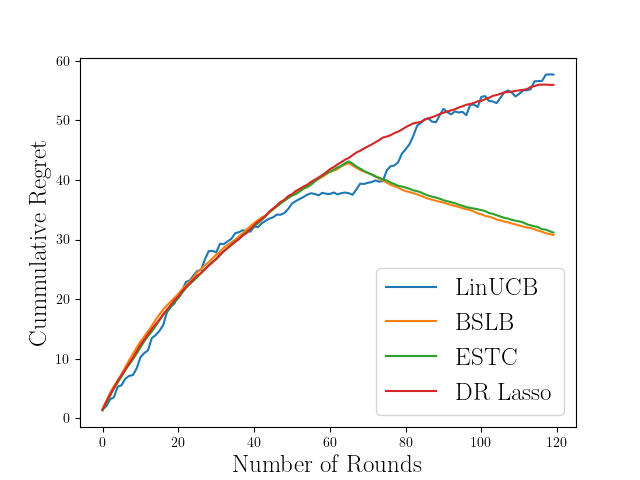}
    \caption{Numerical experiment on MovieLens illustrating performance gap between our proposed algorithm \texttt{BSLB} and naive extensions of LinUCB and DR-Lasso to incorporate blocking constraint. The performance of extended \texttt{ESTC} remains competitive.}
    \label{fig:movielensbenchmark}
\end{figure}
\begin{figure}[h]
    \centering
    \includegraphics[width=0.5\linewidth]{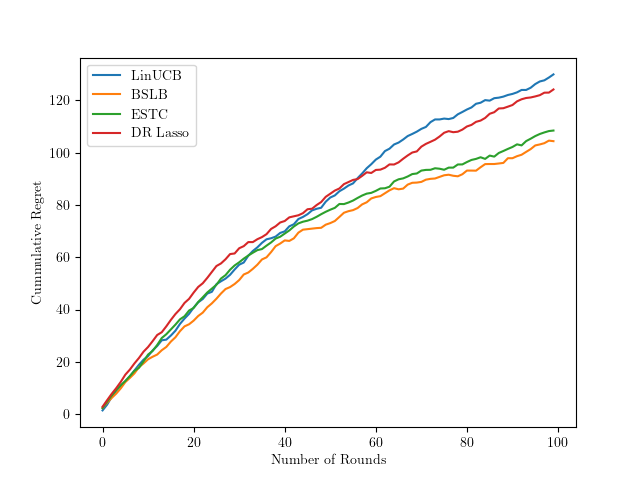}
    \caption{Numerical experiment on Netflix dataset illustrating performance gap between our proposed algorithm \texttt{BSLB} and naive extensions of LinUCB and DR-Lasso to incorporate blocking constraint. The performance of extended \texttt{ESTC} remains competitive.}
    \label{fig:netflixbenchmark}
\end{figure}
The Netflix Prize dataset contains 100 million 5-star ratings from 480,189 users across 17,770 movies, collected between 1998-2005. For analysis, we selected the 200 most active users (those with the highest rating counts) and 400 most-rated movies. Using ratings from 100 users, we applied matrix factorization to complete the user-item interaction matrix and derived 40-dimensional latent movie embeddings. To simulate high-dimensional sparse representations, we extended these embeddings to 120 dimensions by randomly sampling additional coordinates from [0,1].
 Results averaged across 10 test users (shown in Figure~\ref{fig:netflixbenchmark}) demonstrate that our algorithm and ESTC outperform baseline methods like LinUCB and DR-Lasso. This performance gap highlights the advantage of exploiting sparsity patterns in high-dimensional latent factor models.

\textit{Experiment 2 using Embeddings from Content Information:} We run the corralling algorithm using $\numcopies=4$ copies of Algorithm~\ref{alg:greedy} each with different exploration periods, $\numroundsexplore$. Each of the instance, we first give $\numroundsexplore$ random recommendations by sampling uniformly without replacement from a suitably constructed subset $\armsubset$ to each user. Given the ratings obtained, we estimate the parameter $\parameter_{\text{user}}$ specific to the user using only their recommendations. For the remaining $\numroundsexploit=\numrounds-\numroundsexplore$ rounds, we give the top $\numroundsexploit$ recommendations based on the estimated parameter.  To benchmark we run the algorithms independently and also against a random policy which randomly recommends. We next describe the two tasks that we report our results on for experiment $2$,

\textbf{Goodbooks-10k (Book Reviews): }
We use the Goodbooks-10k for a personalized book recommendation tasks~\citep{goodbooks2017}. For each book we use the title and the author to obtain embeddings using the MiniLM-L6-v2 sentence transformer which we use as the feature vectors for the arms. There are $\numarms = 1500$ books and we consider $10$ users which have more than $600$ ratings. The ratings are between $1$ to $5$. We consider the exploration periods as $[100,150,200,300]$.


\textbf{Jester (Joke Ratings):}
We use the Jester joke dataset 1 which has ratings on $100$ jokes by $24,983$ users~\citep{Goldberg2001-ur}. We obtain embedding for the jokes using the same transformer as above. For experimental purposes we filter out users which do not have ratings on all the jokes and are left with $7200$ users. We run our algorithm with $10$ different random seeds for each of the $7200$ users and report the results averaged across all users. The joke ratings range from $-10$ to $10$. For different algorithm instances $\numroundsexplore$ is taken to be $[20,40,60,80]$.
\begin{figure}
    \centering
    \includegraphics[width=0.5\columnwidth]{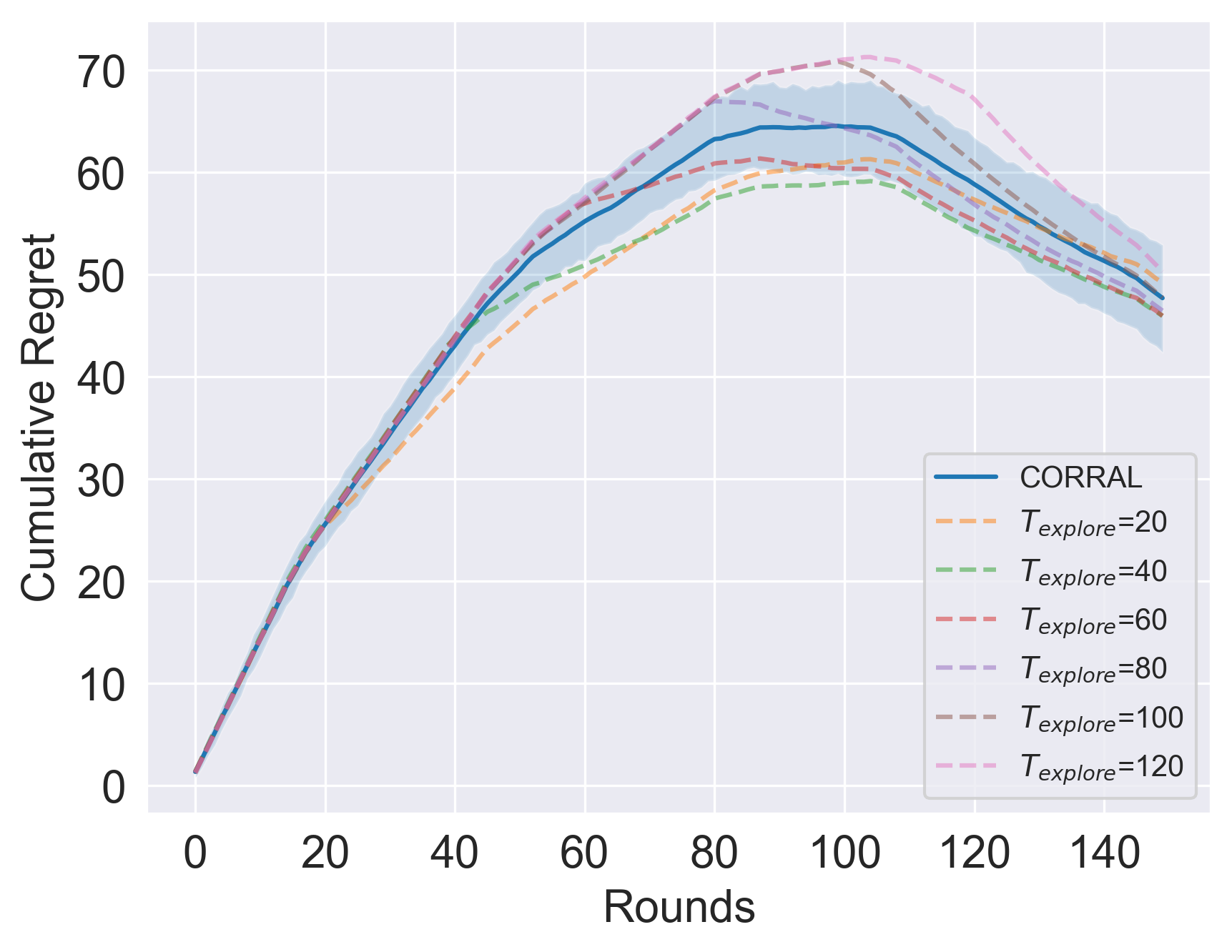}
    \caption{Cumulative Regret for recommendation using only single ratings using \texttt{BSLB} with different exploration periods and when run with CORRAL~\citep{agarwal2017corralling} in Books Dataset.}
    \label{fig:subfig1benchmark}
\end{figure}
\begin{figure}
    \centering
    \includegraphics[width=0.5\columnwidth]{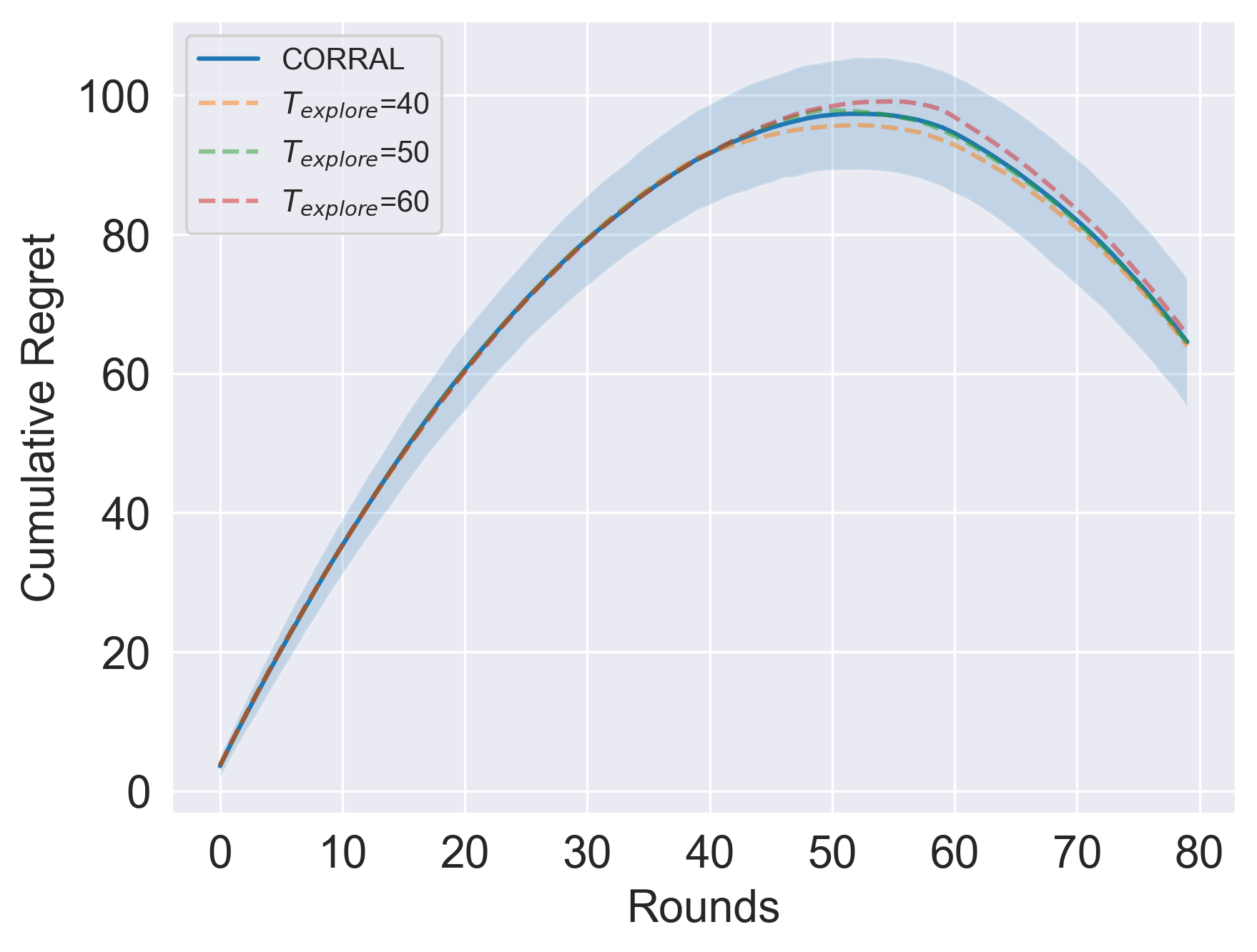}
    \caption{Cumulative Regret for recommendation using only single ratings using \texttt{BSLB} with different exploration periods and when run with CORRAL~\citep{agarwal2017corralling} in Jester.}
    \label{fig:subfig2benchmark}
\end{figure}
\textbf{Results: }We summarize the cummulative regret from equation~\ref{eq:regretcumm} of the algorithms in Figure~\ref{fig:subfig2benchmark}. We add the random policy as a reference. We see that for the different dataset our the algorithm achieves a sub-linear regret. The reason the cumulative regret is not monotonic is due to the fact that the regret is with respect to the top-$\numrounds$ arms. 
It can be seen that our algorithm with Corral achieves a performance close to the performance of the algorithm with the exploration period, $\numroundsexplore$ out of the $5$.

\subsection{Adaptive Annotation using Difficulty Feedback from Annotators}\label{sec:annotation}
Below, we demonstrate our methods for annotation in a label-scarce setting for image classification on the PASCAL VOC 2012 dataset. Additional experimental results on SST-2 (text dataset) can be found in Appendix \ref{app:sst-2}. Finally, experiments on the MovieLens, Netflix, and GoodBooks datasets in the context of personalized recommendation with few labeled data using our theoretical framework are in Appendix \ref{app:recommendation}. Finally, we provide detailed simulations in Appendix \ref{app:simulation}.

We consider the setting where we have a total of $\numarms$ unlabelled samples (with $\numrounds \ll \numarms$) and only $\numrounds$ datapoints can be annotated (sequentially). For each unlabeled datapoint sent for annotation to the expert(s),  we receive the ground truth label and the \textit{difficulty score} $\reward_\timeindex$ corresponding to the difficulty in annotating the datapoint. 
We showcase the effectiveness of  \texttt{BSLB} (Algorithm~\ref{alg:greedy}) in our experimental set-up with real-world datasets. Given a model $\ca{M}$ to be trained on a downstream task, to benchmark \texttt{BSLB}, we consider the following set of baselines (to compare against) to choose subset of datapoints for annotation and subsequent training of the aforementioned model $\ca{M}$:

\begin{enumerate}[noitemsep, nolistsep, leftmargin=*]
     \item \textbf{Random}: Subset of $\numrounds$ unlabeled datapoints chosen uniformly at random 
    \item \textbf{All}: All the samples in training data (except the validation fold) 
    \item \textbf{AnchorAL}~\citep{activelearning2024}: an anchoring based active learning baseline ($\numrounds$ samples). 
    \item \textbf{SEALS}~\citep{activelearning2022}: a KNN based sub-sampling active learning baseline ($\numrounds$ samples). 
\end{enumerate}

$\tau_{\text{easy}},\tau_{\text{hard}}$ (thresholds on difficulty score to determine easy/hard samples) and $\numroundsexplore$ (exploration rounds) 
are relevant hyper-parameters 
specified for the corresponding experiments~\footnote{We consider the AL setup initialized with $\numroundsexplore$ samples and $\numrounds-\numroundsexplore$ samples queried in a batch.}. We benchmark learning performance on 2 datasets: a) $N_{\text{valid}}$ hard samples (samples with difficulty $>\tau_{\text{hard}}$) (\textbf{hard-valid}) b) $N_{\text{valid}}$ easy samples (samples with difficulty ratings $<\tau_{\text{easy}}$) (\textbf{easy-valid}).

\textbf{AnchorAL} and \textbf{SEALS} are state-of-the-art active learning (AL) algorithms. In general, for a label-scarce complex task, AL might not be immediately applicable (see cold-start problem in \cite{li2024survey}) - especially for datapoints close to the decision boundary with noisy/less informative confidence intervals. This is because AL requires a reliably trained model on an even smaller subset of labeled datapoints - however, on datapoints far from the decision boundary (easy datapoints), noisy confidence signals are still useful. As we show in our experiment, this intuition holds, and the AL models, along with the \textbf{random} baseline, perform well on the \textbf{easy-valid} dataset. It is worth noting that complex (hard) datapoints often tend to be the main challenge in industrial applications. This is because it is \textit{easy} to improve performance on easy data (cheaper to obtain) by simply increasing samples during training, but hard datapoints are difficult to generalize on~\citep{pukowski2024investigating}.



 

\subsubsection{Image Classification on PASCAL VOC 2012}
Our main result is for the image classification
task on the public image dataset, PASCAL VOC 2012~\citep{Everingham15}. The dataset has $11,540$ unique images and comprises segmentations for $20$ objects. In addition to the image dataset, we use difficulty scores of annotations from~\citep{img-difficulty-CVPR-2016} - the authors have provided the visual search difficulty by measuring the time taken to annotate in a controlled environment. The annotation task here was to identify if an image contains a particular object, e.g. ``Does this image contain a car''. The authors derive a difficulty score between $0$ and $8$ by normalizing the time to annotate.  

In our experiment, the goal is to train a learning model for image classification - $\ca{M}$ is a support vector machine (SVM) head attached to a frozen pre-trained vision transformer (ViT) model pre-trained on ImageNet-21k dataset~\citep{wu2020visual}. We present results on the classification task - given an input image, predict if the image has an \textit{object} or not. We consider $4$ different objects, namely chair, car, bottle, and (bottle or chair). The last object is an \texttt{OR} conjunction of two labels. We consider the thresholds as $\tau_{\text{easy}}=3.1$ and $\tau_{\text{hard}} = 3.9$ since the distribution of the difficulty scores in the dataset is heavy-tailed as shown in Figure~\ref{fig:distdifficultyscore}. The (image, question) tuple with difficulty scores in the range $[3.1,3.8]$ are highly noisy and therefore have been excluded. Table~\ref{tab:resultshyperparameters} contains the hyperparameters $\numrounds$, $\numroundsexplore(\approx 0.6\numrounds)$ used and the number of samples in the \textbf{all} dataset for the different object classification tasks, along with the size of the validation datasets \textbf{hard-valid} and \textbf{easy-valid} and aggregaged accuracies. Table~\ref{tab:resultsextra} contains results on the effect of varying $\numroundsexplore$.

\begin{table*}[t]
\centering
\resizebox{\columnwidth}{!}{%
\begin{tabular}{ccccccc}
\hline
\begin{tabular}[c]{@{}c@{}}Validation\\ Type\end{tabular} &
  \multicolumn{1}{c}{\begin{tabular}[c]{@{}c@{}}Object \\ Annotated\end{tabular}} &
  \multicolumn{1}{c}{AnchorAL} &
  \multicolumn{1}{c}{SEALS} &
  \multicolumn{1}{c}{Random} &
  \multicolumn{1}{c}{All} &
  \multicolumn{1}{c}{\textbf{Our} (\texttt{BSLB})} \\ \hline
\multirow{4}{*}{easy-valid}          & chair           & 94.0 $\pm$ 1.67 &  90.6 $\pm$1.8  & 96.4 $\pm$1.0  & \textbf{96.0$\pm$ 1.1}   & 94.6 $\pm$1.6            \\
                               & car             & 94.5$\pm$ 1.6 & 94.7$\pm$ 4.0  & 97.7$\pm$2.1  & \textbf{98.7 $\pm$0.1} & 96.5$\pm$ 1.8            \\
                               & bottle          & 93.0$\pm$2.5 & 92.8$\pm$2.3 & 96.8$\pm$1.1 & \textbf{96.8$\pm$1.1} & 94.8$\pm$2.0          \\
                               & bottle or chair & 91.5$\pm$1.1 & 92.3$\pm$1.1 & 94.8$\pm$0.97 & \textbf{94.6$\pm$2.1 }& 91.7$\pm$2.2         \\ \hline
\multirow{4}{*}{\textbf{hard-valid}} & chair           & 69.3 $\pm$ 3.1 & 69.6$\pm$ 6.1  & 66.0$\pm$ 3.8  & 71.3$\pm$ 3.2           & \textbf{73.3$\pm$ 3.3}   \\
                               & car             & 70.3 $\pm$4.0   & 70.0$\pm$ 5.7    & 60.0$\pm$ 5.4  & 65.4$\pm$ 4.0          & \textbf{74.0$\pm$ 3.4} \\
                               & bottle          & 63.1$\pm$2.9 & 63.4$\pm$3.4 & 59.7$\pm$4.4 & 64.8$\pm$1.9 & \textbf{66.8$\pm$2.6} \\
                               & bottle or chair & 67.1$\pm$3.5 & 66.3$\pm$1.4 & 68.0$\pm$4.0 & 72.3$\pm$2.0 & \textbf{73.0$\pm$1.7} \\ \hline
\end{tabular}%
}

\caption{Test accuracy of model $\ca{M}$ trained on different subsets of data annotated for 4 distinct object detection tasks in an image (PASCAL-VOC): The test performance of \texttt{BSLB} approach on the easy and hard validation dataset is at par with the $\ca{M}$ trained on all samples. We perform significantly better on the hard validation dataset compared to random sampling and active learning baselines.}
\label{tab:results}
\end{table*}


We present our results in Table~\ref{tab:results} averaged over $5$ validation folds.
For this classification task, our method (\texttt{BSLB})  efficiently selects datapoints (to be annotated) compared to baselines with an equal number of samples. Regarding the quality of the final trained model $\ca{M}$, the learning performance of \texttt{BSLB} on \textbf{easy-valid} is within $2\%$ of that obtained by the baseline \textbf{random}. However, there is an improvement of $5-14\%$ on the hard validation data \textbf{hard-valid}. When compared to the active learning baselines (\textbf{AnchorAL} and \textbf{SEALS}), \texttt{BSLB} performs better by $1-4\%$ on \textbf{easy-valid} and by $3.5-7\%$ on \textbf{hard-valid}. Finally, when compared to $\ca{M}$ trained on all datapoints (\textbf{all} baseline), which has $6\times$ to $12\times$ more samples, our method does better ($0.7\%$ to $8.6\%$) on the \textbf{hard-valid} and does decently on \textbf{easy-valid} ($<3\%$ difference).  These results validate our theory - in particular, we find that performance on \textbf{easy-valid} improves if the model $\ca{M}$ is trained on more samples (randomly chosen to improve coverage). However, improving the performance on \textbf{hard-valid} dataset is the main challenge where our simple approach \texttt{BSLB} with theoretical guarantees does reasonably well.



\subsubsection{Validation on Different Hyperparameters }
The hyperparameters are presented in Table~\ref{tab:resultshyperparameters}. Note that the reason for selecting different $\numrounds$ across different objects was because the validation datasets had to be big enough (so that the variance of accuracy is informative) and different objects had different total number of samples. The number of exploration rounds are set with respect to $\numrounds (\sim 0.5\numrounds - 0.7\numrounds)$ so that the approximation error after exploration is small enough.
The active learning methods are also run with random initialization of $\numroundsexplore$ explorations and one round of querying with $\numrounds-\numroundsexplore$ queries. We present the results of a study where we vary only the $\numroundsexplore$ for the same value of $\numrounds$ in Table~\ref{tab:resultsextra} and observe that with decreasing $\numroundsexplore$ the estimation of difficulty scores deteriorates, and the performance on hard-valid deteriorates. The performance on easy-valid improves since samples are randomly chosen if the estimation is unsuccessful. Note that our method still performs better than AL baselines and random sampling. 
\subsubsection{Text Classification on SST-2}\label{app:sst-2}

Next we present a result on text classification task on SST-2~\citep{socher-etal-2013-recursive}. However since there were no human difficulty ratings available for this task, we use rarity of text~\citep{DBLP:journals/corr/abs-1811-00739} as a heuristic for the difficulty ratings. The learning model is a SVM which classifies sentence embeddings obtained from the MiniLM-L6-v2  transformer. We consider  $\numroundsexplore=100$ samples and $\numroundsexploit=200$. The normalized rarity ranges from $0$ to $1$ and we set $\tau_{\text{hard}} = 0.5$ and $ \tau_{\text{easy}}=0.2$. 

We observe a similar trend as the previous task where \textbf{BSLB} method performs better than a \textbf{random} subset by $3\%$ and as good as the \textbf{random-large} subset on the \textbf{hard-valid} dataset. There is no regression on the \textbf{easy-valid}. The results on both the validation sets are comparable with~\textbf{mixed} dataset which require all the difficulty ratings (which can be computed for the heuristic but not otherwise). \textbf{BSLB} performs better than both active learning methods on both the validation sets by $2\%$.  However, since this is a standard sentiment analysis task, the embeddings are more informative, thereby improving the baseline performance for a \textbf{random} subset.  

\subsubsection{Correlation between model difficulty and human annotation difficulty}
In Figure~\ref{fig:correlationdifficulty}, we show that for the chair images, if a model $\mathcal{M}$ is trained on all images, then the fraction of difficult samples at a certain distance from the classifier boundary goes down as the distance from the classifier body increases; especially after a certain distance from the decision boundary. 

\begin{figure}[ht]
    \centering
    \includegraphics[width=0.5\linewidth]{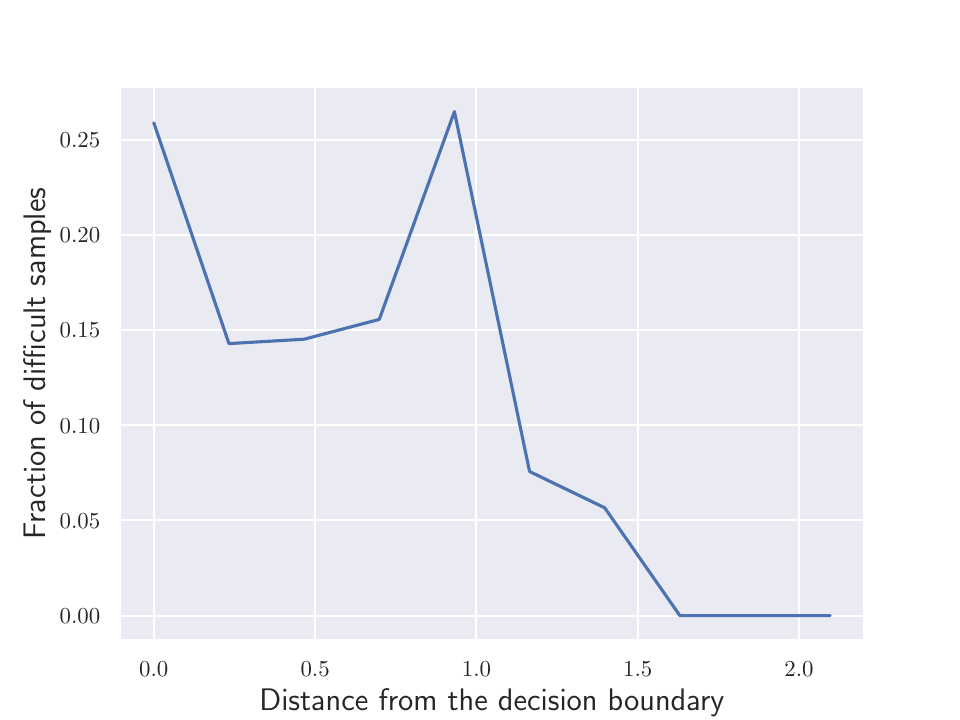}
    \caption{  Fraction of difficult samples (labelled by humans) against the distance from decision boundary for SVM trained on all \textit{chair} images. As the distance from the decision boundary increases the fraction of difficult samples (difficulty rating from humans $>3.5$) decays to 0.  }
    \label{fig:correlationdifficulty}
\end{figure}
\begin{figure}[ht]
    \centering
    \includegraphics[width=0.5\linewidth]{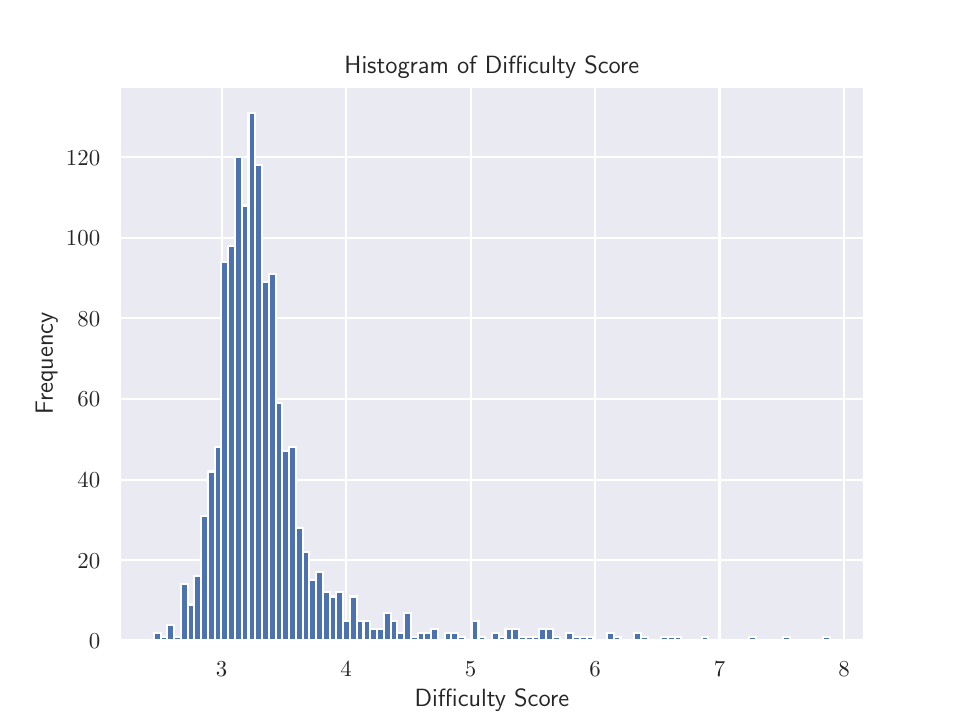}
    \caption{Histogram shows the heavy-tailed distribution of the difficulty score from~\citet{img-difficulty-CVPR-2016} of the \textit{chair} object of the PASCAL-VOC dataset. We clip the entries from the middle since they make the difficult estimation noisier, in practical implementation, one would need to develop a mechanism to flag samples with \textit{ambiguous} difficulty and this is left for future work.  }
    \label{fig:distdifficultyscore}
\end{figure}
\begin{figure}[ht]
    \centering
        \includegraphics[width=0.5\columnwidth]{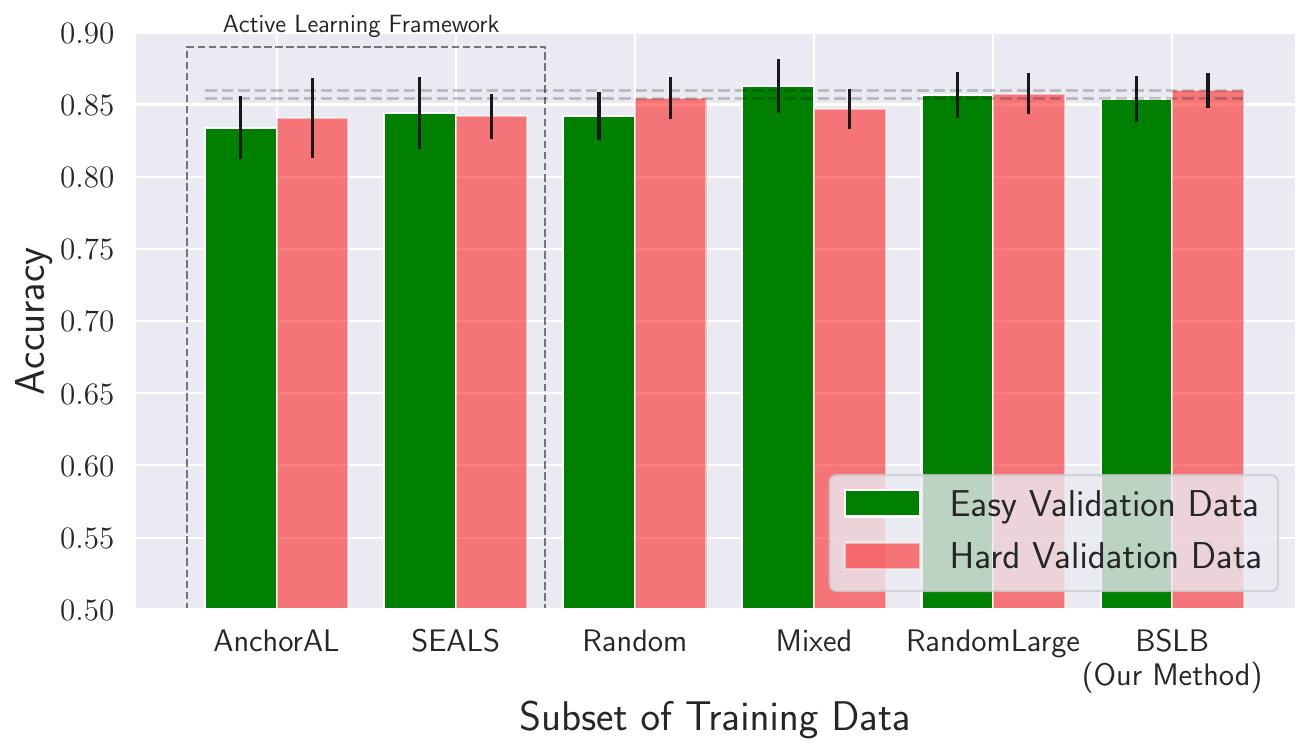}
        \caption{Text Classification on SST-2: The gains are not substantial on the text-classification, but show that our methods are task agnostic. Although conceptually active learning also does adaptive annotation, our method performs better (especially on \textbf{hard-valid}) in the label-scarce setting when $\numrounds\ll\dimarm$ and the hardness of the samples considered.}
        \label{fig:textclassification}
\end{figure}
\begin{table}[ht]
\centering
\resizebox{\columnwidth}{!}{%
\begin{tabular}{@{}ccccc|cc|c|@{}}
\toprule
\multicolumn{1}{c}{\textbf{\begin{tabular}[c]{@{}c@{}}Object\\ Being \\ Annotated\end{tabular}}} &
  \textbf{$N_{\text{valid}}$ (hard)} &
  \textbf{$N_{\text{valid}}$ (easy)} &
  \textbf{$\numrounds$} &
  \textbf{$\numroundsexplore$} &
  \multicolumn{2}{|c|}{\textbf{All}} &
  \multicolumn{1}{c}{\textbf{BSLB}} \\ \midrule
 &
   &
   &
   &
   &
  \multicolumn{1}{c}{\textbf{Num Samples}} &
  \multicolumn{1}{c}{\textbf{Averaged Accuracies}} &
  \multicolumn{1}{c}{\textbf{Averaged Accuracies}} \\
chair           & 60  & 80  & 100 & 80  & 960 (10x)  & 83.65          & \textbf{83.95} \\
car             & 70  & 100 & 90  & 60  & 1227 (13x) & 82.05          & \textbf{85.25} \\
bottle          & 70  & 100 & 120 & 60  & 822 (6x)   & 80.8           & \textbf{80.8}  \\
bottle or chair & 120 & 120 & 140 & 100 & 1807 (13x) & \textbf{83.45} & 82.35          \\ \bottomrule
\end{tabular}%
}
\caption{Different hyperparameters used for the experiment of Sec$\sim$\ref{sec:annotation}. The num samples show how our method achieves a similar accuracy ($-1\%$ to $4\%$ improvement over all) by considering substantially less samples.}
\label{tab:resultshyperparameters}
\end{table}
\begin{table}[ht]
\centering
\resizebox{\columnwidth}{!}{%
\begin{tabular}{@{}lccccccc@{}}
\toprule
 &
  \begin{tabular}[c]{@{}c@{}}Validation\\ Type\end{tabular} &
  \begin{tabular}[c]{@{}c@{}}Object \\ Annotated\end{tabular} &
  AnchorAL &
  SEALS &
  Random &
  All &
  \textbf{Our} (\texttt{BSLB}) \\ \midrule
\multirow{6}{*}{$\numroundsexplore=80$} &
  easy-valid &
  chair & 94.5$\pm$1.0 & 93.2$\pm$1.5 & 95.7$\pm$1.0 &\textbf{ 96.0$\pm$1.4} & 92.3$\pm$1.4
   \\
 &
   &
  car &95.8$\pm$2.5 & 95.3$\pm$2.0 & 97.3$\pm$1.2 & \textbf{98.2$\pm$1.1} & 95.7$\pm$1.7
   \\
 &
   &
  bottle &
   95.0$\pm$0.9 & 95.0$\pm$1.1 & 96.2$\pm$1.9 & 96.7$\pm$1.8 & \textbf{96.8$\pm$1.2}
   \\ \cmidrule(l){2-8} 
 &
  \textbf{hard-valid} &
  chair &72.0$\pm$3.0 & 71.0$\pm$1.4 & 67.0$\pm$5.0 & 69.4$\pm$4.1 & \textbf{73.4$\pm$2.4}\\
 &
   &
  car &66.4$\pm$7.9 & 69.2$\pm$5.6 & 52.6$\pm$8.2 & 58.8$\pm$6.8 & \textbf{74.0$\pm$2.7} \\
 &
   &
  bottle &61.2$\pm$1.5 & 61.8$\pm$2.0 & 51.2$\pm$2.9 & 52.6$\pm$3.1 & \textbf{63.2$\pm$2.9} \\ \midrule
\multirow{6}{*}{$\numroundsexplore=60$} &
  easy-valid &
  chair &94.8$\pm$2.1 & 93.7$\pm$1.7 & 95.5$\pm$2.0 & \textbf{95.5$\pm$1.4} & 94.8$\pm$1.8

   \\
 &
   &
  car &96.2$\pm$1.9 & 95.5$\pm$1.9 & 97.5$\pm$1.2 & \textbf{98.3$\pm$1.1} & 97.0$\pm$2.3
\\
 &
   &
  bottle &95.0$\pm$1.2 & 94.8$\pm$1.0 & 97.2$\pm$1.7 & \textbf{97.5$\pm$1.7} & 96.0$\pm$1.9
\\ \cmidrule(l){2-8} 
 &
  \textbf{hard-valid} &
  chair &70.0$\pm$5.5 & 70.0$\pm$3.8 & 68.0$\pm$4.2 & 69.8$\pm$4.0 & \textbf{72.4$\pm$1.7}
 \\
 &
  \textbf{} &
  car &65.8$\pm$8.7 & 70.0$\pm$7.5 & 53.2$\pm$5.4 & 60.6$\pm$8.1 & \textbf{72.6$\pm$3.9}
 \\
 &
 &
  bottle & 61.6$\pm$1.0 & 61.6$\pm$2.3 & 53.4$\pm$2.6 & 54.0$\pm$2.6 & \textbf{62.6$\pm$2.8}
  \\ \midrule
\multirow{6}{*}{$\numroundsexplore=30$} &
  easy-valid &
  chair &94.8$\pm$1.5 & 94.5$\pm$1.2 & 96.3$\pm$1.4 & \textbf{96.7$\pm$1.9} & 94.5$\pm$3.2
 \\
 &
   &
  car &95.3$\pm$2.3 & 95.8$\pm$1.9 & 97.3$\pm$1.2 & \textbf{98.2$\pm$1.1} & 92.0$\pm$11.6
 \\
 &
   &
  bottle &95.3$\pm$0.8 & 95.0$\pm$0.5 & 96.2$\pm$1.9 & 96.7$\pm$1.8 & \textbf{96.7$\pm$1.2}
\\ \cmidrule(l){2-8} 
 &
  \textbf{hard-valid} &
  chair &69.4$\pm$2.1 & \textbf{71.2$\pm$2.4} & 67.6$\pm$4.9 & 69.8$\pm$4.0 & 70.8$\pm$4.5
\\
 &
  \textbf{} &
  car &64.2$\pm$9.5 & 67.6$\pm$7.7 & 52.6$\pm$8.2 & 58.8$\pm$6.8 & \textbf{70.8$\pm$6.5}
\\
 &
  \multicolumn{1}{l}{} &
  bottle &60.2$\pm$1.9 & 62.0$\pm$2.1 & 51.2$\pm$2.9 & 52.6$\pm$3.1 & \textbf{63.0$\pm$4.9} \\ \bottomrule
\end{tabular}%
}
\caption{Learning Accuracies on Different Methods for Image Classification in PASCAL-VOC 2012: Effect of $\numroundsexplore$ with the number of rounds fixed at $\numrounds=120$ and with 120 \textbf{easy-valid} and 100 \textbf{hard-valid} samples. }
\label{tab:resultsextra}
\end{table}

\subsection{How does the tail of the parameter matter?}
\begin{figure}[ht]
    \centering
    \includegraphics[width=0.5\linewidth]{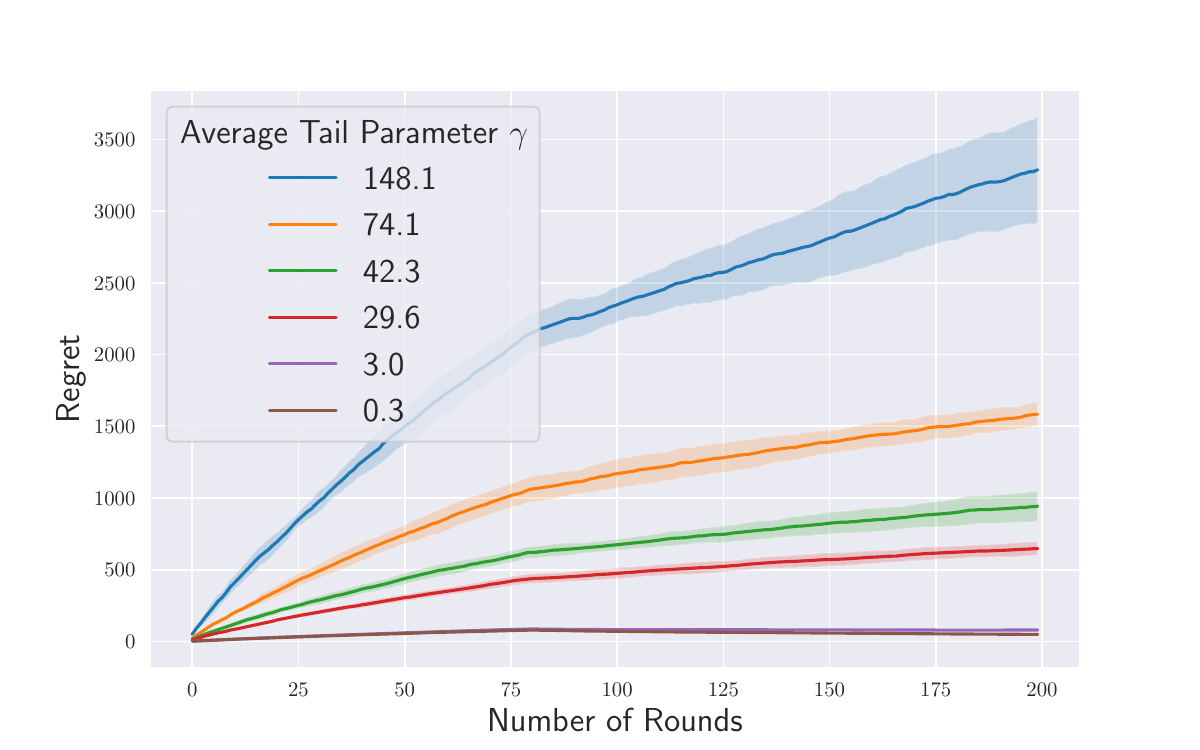}
    \caption{Effect of the tail parameter $\restrictedconeconstant$ on the performance of the \texttt{BSLB} algorithm with $\numroundsexplore=80$. As the tail increases in magnitude the cummulative regret worsens (increases). However observe that our algorithm is still robust to reasonably large tail $\restrictedconeconstant=75$.  }
    \label{fig:taildependence}
\end{figure}
In Figure~\ref{fig:taildependence}, we investigate the effect of the tail parameter in the performance of \texttt{BSLB} with a fixed exploration period $\numroundsexplore=80$ and different sizes of the tail in the same setup as the simulation study of Appendix~\ref{app:simulation}. We observe that as the tail parameter $\restrictedconeconstant$ grows, the regret worsens, however we remark that even for a decent $\restrictedconeconstant = 75$, the performance is reasonable.

\subsection{Convergence of CORRAL parameters in C-BSLB}
We plot the convergence of the CORRAL parameters of \texttt{C-BSLB} in Figure~\ref{fig:convergencecorral} for the simulated experiment of Appendix~\ref{app:corrallingexp}. We observe that the probability of sampling the best algorithm ($\numroundsexplore=T_3=80$) increases with rounds. Note that since the experiments were run on a limited resource machine, we could only do $\dimarm=1000$, and for our setup $\numrounds\ll \dimarm$ has to be sufficiently low ($500$ in this case). This is not enough for the CORRAL algorithm to truely exploit the best possible algorithm in \texttt{C-BSLB} but as we see in Figure~\ref{fig:cummulativesimulated}, however it still achieves the a decent performance.
\begin{figure}[ht!]
    \centering
    \includegraphics[width=0.5\linewidth]{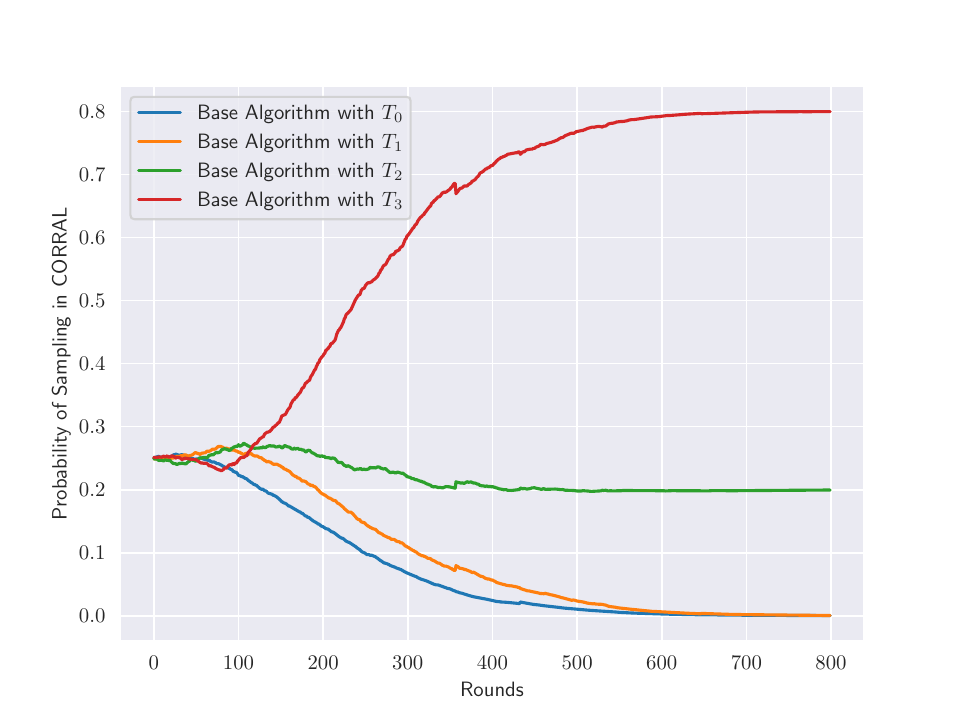}
    \caption{Convergence of the different sampling probabilities for the base algorithms of the \texttt{C-BSLB} (Algorithm~\ref{alg:corral}). This plot is with respect to the simulation study parameters. We can observe that the probability for the best algorithm ($\numrounds_3$) improves with each iteration and for the worst performing algorithm ($\numrounds_0$) decays to $0$.  }
    \label{fig:convergencecorral}
\end{figure}

\begin{figure}[ht!]
    \centering
        \includegraphics[width=0.5\columnwidth]{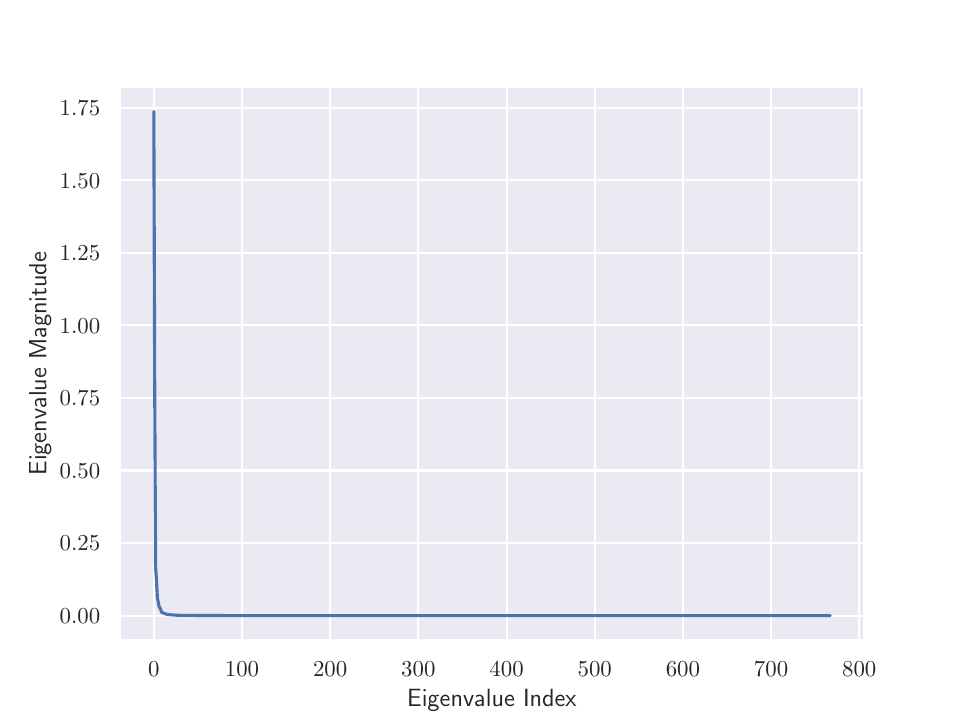}%
    \caption{Eigenvalue spectrum of the embeddings of the two dataset show exponential decay in the eigenvalues, which implies that a uniformly random sample covers the set optimally with high probability because the data is primarily shaped by a few directions. For the PASCAL-2012 on object \textbf{chair} with ViT Base Patch16-224 embeddings.}
    \label{fig:eigenvaluespectrum}
\end{figure}

\begin{figure}[ht!]
    \centering
        \includegraphics[width=0.5\columnwidth]{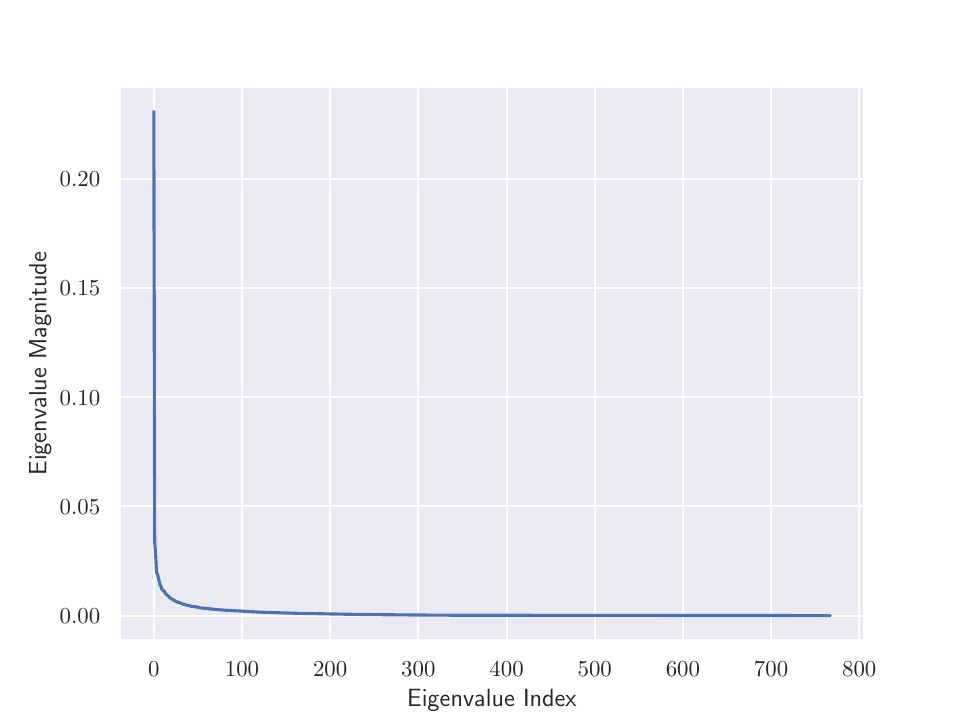}%
    \caption{Eigenvalue spectrum of the embeddings of the two dataset show exponential decay in the eigenvalues, which implies that a uniformly random sample covers the set optimally with high probability because the data is primarily shaped by a few directions. Balanced Sample ($2500$ datapoints) of SST-2 with All MPNet Base V2 embeddings.}
    \label{fig:enter-label}
\end{figure}

\section{Related Work}\label{app:relatedwork}
The Appendix comprises three sections: Section~\ref{app:relatedwork} discusses the related work, Section~\ref{sec:proofs} details the detailed proofs of the technical results and Section~\ref{app:numerical} presents additional numerical results.

\subsection{High Dimensional Sparse Bandits}Recent research on high-dimensional sparse linear bandits has focused on developing algorithms that can effectively handle the challenges posed by high-dimensional feature spaces while exploiting the underlying sparsity. \citet{hao2020high} made significant contributions by establishing a novel $\Omega(\numsamples^{2/3})$ dimension-free minimax regret lower bound for sparse linear bandits in the data-poor regime, where the horizon is smaller than the ambient dimension. They complemented this with a nearly matching upper bound for an explore-then-commit algorithm, demonstrating that $\orderof(\numsamples^{2/3})$ is the optimal rate in the data-poor regime. Building upon these foundations, ~\citet{pmlr-v162-li22a} proposed a simple unified framework for high-dimensional bandit problems, including sparse linear bandits. Their Explore-the-Structure-Then-Commit (ESTC) algorithm achieved comparable regret bounds in the LASSO bandit problem and provided a general framework for the contextual bandit setting using low-rank matrices and group sparse matrices. ~\citet{efficientsparsehds} further improved the algorithm of \citet{hao2020high} for computational complexity using ideas from random projection. On the other hand, ~\citet{thompsonhds} has recently looked at contextual high dimensional bandits with exact sparsity constraints using Thompson sampling with a prior on the sparsity parameter and ~\citet{hdswosparsity} have studied high dimensional bandits without a strict sparsity constraint. However both these work give asymptotic bounds on the regret whereas our work provides finite-sample bounds on the regret. Further the existing work does not explicitly quantify the effect of the parameter tail on the regret or consider a \textit{blocking} constraint, which are the two primary novel considerations in our paper.

\subsection{Personalized Recommendation} Related work on privacy-preserving recommender systems highlights the tension between personalization and user privacy. Traditional recommender systems, such as collaborative filtering and matrix factorization, often rely on centralized data collection and processing, which creates privacy risks for users. These risks include potential re-identification of anonymized data and unauthorized access to sensitive information~\citep{privacyrisks}. To address these concerns, researchers have explored various privacy-preserving techniques, including differential privacy, secure multi-party computation, homomorphic encryption, and federated learning~\citep{privacymatrixfactor}. These approaches aim to protect user data while maintaining recommendation accuracy. However, challenges remain in balancing privacy protection with system efficiency and recommendation quality. Our work is most closely related to recent studies on context-aware and hybrid recommendation systems that incorporate privacy-preserving mechanisms explicitly using \textit{blocking} constraints~\citep{pal2024blocked}. 

\subsection{Annotation in Niche Applications with limited expert annotators}

 A second application of our framework is identifying \textit{hard datapoints} for annotation in expensive labeling tasks. Large Language Models (LLMs) offer strong zero-shot capabilities, making it easier to prototype solutions for downstream tasks. However, they struggle with complex domain-specific queries when relevant training data is scarce or evolving~\cite{farr2024llm}. In-context learning with few shot examples has emerged as a powerful approach, where a small set of high-quality examples improves model performance~\citep{dong2022survey}. Crucially, hard examples provide better domain-specific information~\citep{baek2024revisiting, liu2024let, mavromatis2023examples}, but identifying them is challenging. Heuristic-based selection often leads to noisy, mislabeled, or outlier examples~\citep{mindermann2022prioritized}.
Alternatively, we can leverage domain experts to assign a hardness score while annotating. This data-poor problem can be framed in a bandit framework, where unlabeled datapoints act as arms and are sequentially annotated while hardness scores are modeled as a sparse linear function of embeddings. 
In domains with very few annotators—sometimes only one—it is impractical to re-query the same datapoint, naturally leading to a \textit{blocking constraint}. Beyond annotation, high-quality hard examples are also valuable in model training, where they improve generalization and efficiency ~\citep{sorscher2022beyond, maharana2023d2}.

\citet{sorscher2022beyond} demonstrated that selecting high-quality data can reduce the power-law association of test error with dataset size to an exponential law. 
In annotation-expensive tasks with large volumes of unlabeled data, the challenge is to select a representative subset of datapoints for labeling. 
Conceptually our work is similar to \textit{active learning} (AL)~\citep{settles2009active,activelearning2024} where unlabeled samples are annotated adaptively, based on the confidence of a trained model~\citep{activelearning2022}. 
Active learning works well with good initialization and informative confidence intervals. 
However, in our label-scarce setting, AL is particularly challenging with complex data due to the absence of a reliably trained model in the first place - this is more pronounced for difficult datapoints for which prediction is hard. AL needs an initial set of high-quality labeled samples to reasonably train a model  - also known as the \textit{cold-start} problem - when labels are scarce, uncertainty based sampling techniques are unsuitable \citep{li2024survey}. 
Our goal is to identify informative samples with the help of the expert annotator(s), whom the final model aims to emulate. 
\textit{Coreset selection} \citep{guo2022deepcore,albalak2024survey,sener2018active} aims to select a subset of datapoints for training. However, coreset selection assumes that a large amount of \textit{labeled data} already exists, and the focus is on reducing computational costs. In contrast, our setting deals with the lack of labeled data, making existing coreset selection approaches, which rely on the entire labeled dataset, inapplicable. 
Our work also aligns with \textit{curriculum learning}~\citep{bengio2009curriculum}, where a model is trained on samples of increasing difficulty/complexity. Due to the ambiguity in hardness definition, often heuristics are used to infer the difficulty of samples~\citep{soviany_curriculum_2022} and can turn out unreliable and not generalizable.
For niche tasks where an expert annotator is available, the difficulty ratings from the annotator are more informative than heuristics since the goal is to train a model to mimic the expert~\citep{img-difficulty-CVPR-2016}. 
In computer vision, there has been recent work regarding estimating the difficulty of a dataset for the model using implicit difficulty ratings of annotation. For NLP tasks, \citet{pmlr-dataset-difficulty} constructs information theoretic metrics to estimate the difficulty of datapoints.

\section{Technical Proofs}\label{sec:proofs}

 \subsection{Proof of Lower Bound for Regret (Theorem~\ref{thm:lower})}

\begin{proof}
We consider the hard-sparsity instance of the high-dimensional linear bandit setting with the \textit{blocking} constraint. We prove this in three steps. First we show how one can construct an equivalent bandit problem with a \textit{blocking} constraint for any problem without the \textit{blocking} constraint. Next we use results from~\cite{hao2020high} to show that the result holds true for this transformation. Finally, we show that this is the best  
\begin{enumerate}
    \item For any instance of the bandit problem without the \textit{blocking}  constraint, we can construct a bandit problem with the \textit{blocking} constraint. This can be done by considering $\mathsf{T}$ copies of each of the arms, i.e. for the original arm set $\mathcal{A}=\{a^{(1)},\dots,a^{(\mathsf{M})}\}$ we construct the arm multiset $\mathcal{A}^\prime$ as follows,
    \[
    \mathcal{A}^\prime = \cup_{i=1}^{\mathsf{M}} \cup_j^{\mathsf{T}} \{a^{(i)}_j\}
    \]
    where $a^{(i)}_j$ denotes the $j^\text{th}$ copy of the $i^\text{th}$ arm (such that it is a different arm with the same arm vector). Since it is a multiset, the union operation double-counts duplicate values. Now the bandit setting with blocking constraint with arm set $\mathcal{A}^\prime$ is identical to the bandit setting without blocking constraint with arm set $\mathcal{A}$. Further the regret decomposition becomes identical in the first term, i.e. the top $\mathsf{T}$ arms have the same arm vector.
    \item Now for any arm set $\mathcal{A}$ without the blocking constraint, the lower bound from Theorem 3.3 of~\cite{hao2020high} holds for any algorithm, and the following bound holds
    \[
    \mathbb{E}[\mathsf{R}] = \Omega(\min(\mineigvalue^{-1/3}(\mathcal{A})k^{1/3}\mathsf{T}^{2/3}),\sqrt{d\mathsf{T}})).
    \]
    \item Now if any algorithm operating with the blocking constraint could achieve the regret of order lesser than $\min(k^{1/3}\mathsf{T}^{2/3}),\sqrt{d\mathsf{T}})$, then the algorithm would solve the bandit problem with arm multiset $\armset^\prime$ (with the blocking constraint) with regret lower than $\min(k^{1/3}\mathsf{T}^{2/3}),\sqrt{d\mathsf{T}})$. But then we can solve the original problem with arm set $\mathcal{A}$ with the same regret, hence arriving at a contradiction. 
\end{enumerate}
In the data-poor regime $d\geq k^{1/3}\mathsf{T}^{2/3}$, which is the regime considered in the paper, this bound reduces to $\Omega(\mineigvalue^{-1/3}(\mathcal{A})k^{1/3}\mathsf{T}^{2/3})$, which is the order that our upper bound achieves. 

\end{proof}

   \subsection{Proof of Theorem~\ref{lemma:fastsparselearning}}\label{app:sparsereg}
 \textit{Preliminaries and Basis Pursuit Program}
   
   We need the following definition of a restricted eigenvalue of a matrix $\designmatrix$,
   \begin{definition}\textbf{(Restricted Eigenvalue) }
       If $\designmatrix$ satisfies  Restricted Eigenvalue property $\RE(\sparsityk_0,\restrictedconeconstantvanilla,\designmatrix)$, then there exists a constant $\restrictedeigenvalue(\sparsityk_0,\restrictedconeconstantvanilla,\designmatrix)$ such that for all $\vectorsymalt \in \R^\dimarm$ and $\vectorsymalt\neq \fl{0}$,
\begin{align*}
{\restrictedeigenvalue(\sparsityk_0,\restrictedconeconstantvanilla,\designmatrix)} = \min_{J \subseteq \{1,\dots,\dimarm \} |J| \leq \sparsityk_0}\min_{\Vert \vectorsymalt_{J^\complement} \Vert_1 \leq \restrictedconeconstantvanilla\Vert \vectorsymalt_{J} \Vert_1}\frac{\Vert \designmatrix \vectorsymalt \Vert_2 }{\Vert \vectorsymalt_J \Vert_2}.
\end{align*}
   \end{definition}
This definition implies, 
\begin{align*}\restrictedeigenvalue&(\sparsityk_0,\restrictedconeconstantvanilla,\designmatrix)\Vert \vectorsymalt_J \Vert_2 \leq \Vert \designmatrix \vectorsymalt \Vert_2  \ \forall \vectorsymalt,\quad  \text{such that} \quad \Vert \vectorsymalt_{J^\complement} \Vert_1 \leq \restrictedconeconstantvanilla\Vert \vectorsymalt_{J} \Vert_1 \ \forall J \subseteq \{1,\dots,\dimarm \} , |J| \leq \sparsityk_0.
\end{align*}


   We now prove the theorem. 
\begin{proof}
\newcommand{\topk}{\mathcal{Y}}
  Let  $\approximationerror = \parameterestimate - \parameter$ denote the error in estimation of the parameter, $\errorvector =\rewardvector - \designmatrix\parameter$ denote the noise vector, $\lagrange$ is the Lagrange parameter in Line 9 of Algorithm~\ref{alg:greedy} and $\numsamples$ the number of samples. 

 For the vector $\approximationerror$, let $\topk_0$  denotes the top-$\sparsityk$ coordinates by absolute value , $\topk_1$ the next top $\sparsityk$ coordinates and so on  . Let $\vectorsym_{\topk_i}$ denote the vector where the $\topk_i$ coordinates are equal to the coordinates of $\vectorsym$ and all the other coordinates are 0. Let $\alpha = \Vert \approximationerror_{({\topk_0 \cup \topk_1})}\Vert_{2}$.

    \newcommand{\lagrangian}{\lambda}
    The following inequality holds for a Lagrangian Lasso program (w/o any assumptions on sparsity) (See Eq.7.29 Proof of Theorem 7.13 Part (a) of ~\citet{Wainwright_2019}; note this holds true without any assumptions on the sparsity or the design matrix but just by optimality argument of the Lasso program), 
    \begin{align}\label{eq:bpptwo}
       \frac{1}{\numsamples} \Vert \designmatrix \approximationerror \Vert_2^2 \leq  \frac{\errorvector^\transpose \designmatrix \approximationerror}{\numsamples} + \lagrange(\|\parameter\|_1 - \|\parameterestimate\|_1)
    \end{align}
    Now by triangle inequality $\|\parameter\|_1 - \|\parameterestimate\|_1\leq \|\parameter-\parameterestimate\|_1 = \|\approximationerror\|_1$  and therefore,
     \begin{align*}
       \frac{1}{\numsamples} \Vert \designmatrix \approximationerror \Vert_2^2 \leq  \frac{\errorvector^\transpose \designmatrix \approximationerror}{\numsamples} + \lagrange\|\approximationerror\|_1 \leq \left( \Vert \frac{\designmatrix^\transpose\errorvector}{\numsamples} \Vert_\infty + \lagrange \right)\|\approximationerror\|_1
    \end{align*}
(The last step follows as a result of using Hölder's inequality). 
Let $B= \left( \Vert \frac{\designmatrix^\transpose\errorvector}{\numsamples} \Vert_\infty + \lagrange \right)$, which gives us,
    \begin{align}\label{eq: lagrangian re condition}
       \frac{1}{\numsamples} \Vert \designmatrix \approximationerror \Vert_2^2  \leq B\|\approximationerror\|_1
    \end{align}

We lower bound $\frac{1}{\numsamples} \Vert \designmatrix \approximationerror \Vert_2^2$ using the RE condition and in order to do that we need to show that $\approximationerror$ lies in the cone $\{\vectorsymalt| \vectorsymalt\in\R^\dimarm, \|\vectorsymalt_\topk^0\|\leq 4(1+\restrictedconeconstant)\|\vectorsymalt_{\topk_0^C}\|\}$, which is the subset of the cone the RE condition is satisfied on.
Note that for the vector $\approximationerror$ the following condition holds, 
\begin{align}\label{eq: bound on topt complement}
\begin{split}
  \Vert \approximationerror_{{\topk_0}^C}\Vert_{1}  &= \Vert \approximationerror_{{\topk_1}}\Vert_{1} + \Vert \approximationerror_{({\topk_0 \cup \topk_1})^C}\Vert_{1}  \overset{(a)}{\leq}\Vert \approximationerror_{{\topk_1}}\Vert_{1} + \Vert \approximationerror_{({\topk_0 \cup \topk_1})}\Vert_{1} + 2 \sparsitytail_\sparsityk \\&\overset{(b)}{\leq} \Vert \approximationerror_{{\topk_1}}\Vert_{1}  + \Vert\approximationerror_{({\topk_0 \cup \topk_1})}\Vert_{1} + 2\restrictedconeconstant\Vert \parameter^*_{\topk_0}\Vert_1  \\ &\overset{(c)}{\leq} 2\Vert \approximationerror_{{\topk_0}}\Vert_{1}  + 2\Vert\approximationerror_{{\topk_0 }}\Vert_{1} + 4\restrictedconeconstant\Vert \approximationerror_{\topk_0}\Vert_1  \leq 4(1+\gamma)\Vert \approximationerror_{\topk_0}\Vert_1,
  \end{split}
\end{align}
where (a) follows from the decomposition available in Theorem 1.6 in~\citet{boche2015survey}, 
\begin{align*}
     \Vert \approximationerror_{(\topk_0\cup\topk_1)^\complement}\Vert_1 \leq   \Vert \approximationerror_{(\topk_0)^\complement} \Vert_1 \leq \Vert \approximationerror_{(\topk_0)}  \Vert_1 + 2\sparsitytail_\sparsityk \leq \| \approximationerror_{(\topk_0\cup \topk_1)}  \Vert_1 + 2\sparsitytail_\sparsityk
    \end{align*}
(b) follows from the definition of the tail $\sparsitytail_\sparsityk = \restrictedconeconstant \| \parameter^*_{\topk_0}\|_1$

and (c) follows from $\| \parameter^*_{\topk_0}\|_1 \leq \Vert \parameter^*_{\topk_0} - \parameterestimate_{\topk_0} + \parameterestimate_{\topk_0} \Vert_1 \leq \Vert \parameter^*_{\topk_0} - \parameterestimate_{\topk_0}\Vert + \Vert \parameterestimate_{\topk_0} \Vert_1 \leq \Vert \approximationerror_{\topk_0} \Vert + \Vert \parameterestimate_{\topk_0} \Vert_1 \leq 2\Vert \approximationerror_{\topk_0} \Vert_1  $. The last inequality holds since $\parameterestimate$ is a solution of~\eqref{eq:bpptwo} and $\Vert \parameterestimate_{\topk_0}\Vert_1 \leq \Vert \parameterestimate \Vert_1 \leq \Vert\approximationerror_{\topk_0} + \zerovector_{\topk_0^\complement} \Vert_1= \Vert\approximationerror_{\topk_0} \Vert_1$ otherwise $\approximationerror_{\topk_0}+ \zerovector_{\topk_0^\complement}$ would be the solution instead. 

Now we can use the RE condition on the design matrix $\designmatrix$ using the vector $\approximationerror$, which is such that $\|\approximationerror_{\topk_0^\complement}\| \leq (4+4\restrictedconeconstant)\|\approximationerror_{(\topk_0)}\|$ the following holds,
\begin{align}\label{eq:bppthree}
 {\restrictedeigenvalue^2(\sparsityk_0,4+4\restrictedconeconstant,\frac{\designmatrix}{\sqrt{n}})} \Vert \approximationerror_{\topk_0 }\Vert_2^2 \leq \frac{\Vert \designmatrix \approximationerror\Vert_2^2}{\numsamples} .
\end{align}
Putting this into~\eqref{eq: lagrangian re condition}, and letting $\restrictedeigenvalue=\restrictedeigenvalue(\sparsityk_0,4+4\restrictedconeconstant,\frac{\designmatrix}{\sqrt{n}})$,
\begin{align*}
&{\restrictedeigenvalue^2} \Vert \approximationerror_{\topk_0} \Vert_2^2 \leq B \Vert \approximationerror\Vert_{1} = B(\Vert \approximationerror_{({\topk_0 })}\Vert_{1} + \Vert \approximationerror_{({\topk_0})^C}\Vert_{1})  \overset{(a)}{\leq}B(5+4\restrictedconeconstant)\|\approximationerror_{({\topk_0 })}\Vert_{1} 
\end{align*}
where (a) follows from~\eqref{eq: bound on topt complement}. Therefore,
\begin{align*}
    \frac{\restrictedeigenvalue^2}{\sparsityk} \Vert \approximationerror_{\topk_0} \Vert_1^2 \overset{(a)}{\leq} {\restrictedeigenvalue^2} \Vert \approximationerror_{\topk_0} \Vert_2^2 \leq B(5+4\restrictedconeconstant)\|\approximationerror_{({\topk_0 })}\Vert_{1},
\end{align*}
where (a) is due to Cauchy Schwarz. So we have, 
\begin{align*}
    \Vert \approximationerror_{\topk_0} \Vert_1 \leq \frac{B\sparsityk(5+4\restrictedconeconstant)}{\restrictedeigenvalue^2}
\end{align*}
which we combine with $\|\approximationerror\|_1{\leq}(5+4\restrictedconeconstant)\|\approximationerror_{({\topk_0 })}\Vert_{1} $ to obtain,
\begin{align}\label{eq: proof final l1 bound in B}
    \Vert \approximationerror \Vert_1 \leq \frac{B\sparsityk(5+4\restrictedconeconstant)^2}{\restrictedeigenvalue^2}
\end{align}
Now we bound $\Vert \frac{\designmatrix^\transpose\errorvector}{\numsamples} \Vert_\infty$ using the properties of a bounded zero-mean noise from~\citet{Wainwright_2019}.
Consider the random variables, $\frac{(\designmatrix^\transpose\errorvector)_j}{n}$ each of which is a weighted sum of independent random variables. Each of the term of the sum is subgaussian with parameter bounded by $\frac{\sigma\armbound}{n}$, where $\armbound$ is the $\ell_\infty$ bound. We have $\armbound = 1$.  Using a standard sub-Gaussian concentration bound $\probability( \frac{ (\designmatrix^\transpose\errorvector)_j }{\numsamples}\geq t) \leq 2\exp(-\frac{nt^2}{2\sigma^2})$. In Set $t= {2\constant_2\sigma}(\sqrt{\frac{\log d}{n}})$, then $\probability( \frac{ (\designmatrix^\transpose\errorvector)_j }{\numsamples}\geq {2\constant_2}\sigma(\sqrt{\frac{\log \dimarm}{n}}) \leq 2\exp(-2\constant_2^2\log d) = 2\dimarm^{-2\constant_2^2}$.
Using a union bound on $\dimarm$ on the probability $\probability(\Vert \frac{ \designmatrix^\transpose\errorvector }{\numsamples}\Vert_{\infty}\geq{\constant_2}(\sqrt{\frac{\log d}{n}})) = \probability(\max_{j\in[\dimarm]} \frac{ (\designmatrix^\transpose\errorvector)_j }{\numsamples}\geq{2\constant_2}\sqrt{\frac{\log d}{n}}) \leq \sum_{j\in[\dimarm]} \probability( \frac{ (\designmatrix^\transpose\errorvector)_j }{\numsamples}\geq{2\constant_2}\sqrt{\frac{\log d}{n}}) \leq \dimarm^{1-2\constant_2^2}$. 
Therefore for a large enough constant $\constant_2$, $B \leq 2\constant_2\sigma (\sqrt{\frac{2\log{\dimarm}}{\numsamples}})$ with high probability $1-\dimarm^{1-2\constant_2^2}$. 

Combining this with~\eqref{eq: proof final l1 bound in B} and putting $\lagrange=\sqrt{\frac{\log{\dimarm}}{\numsamples}}$, we have, 
\begin{align*}
      \Vert \approximationerror \Vert_1 \leq (2\sqrt{2}\constant_2\sigma+1) \frac{(5+4\restrictedconeconstant)^2\sparsityk}{\restrictedeigenvalue^2}\sqrt{\frac{\log{\dimarm}}{\numsamples}}
\end{align*}
with probability $1-\dimarm^{1-2\constant_2^2}$.
   \end{proof}

\subsection{Proof of Corollary~\ref{coro:empiricallasso}}\label{app:corollaryproof}
The statement of the Corollary basically extends the estimation guarantees of  Theorem~1 hold with high probability.

First let us denote the expected covariance matrix as (assume $\hat{\armsubset}$ is the sampled subset without replacement from $\armset$),
\begin{align*}
    \covariancematrix = \expectation\left\{\frac{1}{|\hat{\armsubset}|}\sum_{\arm\in\hat{\armsubset}} \arm^\transpose \arm\right\} = \frac{1}{|\hat{\armsubset}|} \sum_{\arm\in\hat{\armsubset}} \expectation \{ \arm^\transpose \arm \} \overset{(a)}{=} \frac{1}{|\armsubset|}\sum_{\arm\in\armsubset} \arm^\transpose \arm, \end{align*}
    where (a) follows from sampling without replacement. 
The proof then hinges on the following key fact and using Theorem~\ref{th:reconcentration} stated later. 
\begin{fact}The minimum eigenvalue defined by $  \mineigvalueempirical=\mineigvalue\left(\frac{1}{|\armsubset|}\sum_{\arm \in \armsubset}\arm^\transpose\arm\right)$ is such that,
\begin{align}\label{eq: bound on min eig value}
\begin{split}
  \mineigvalueempirical&=  \mineigvalue\left(\frac{1}{|\armsubset|}\sum_{\arm \in \armsubset}\arm^\transpose\arm\right) =\min_{\vectorsymalt\in\R^\dimarm} \frac{1}{|\armsubset|}\frac{\|\designmatrix\vectorsymalt\|_2^2}{\|\vectorsymalt\|^2_2} 
\leq \restrictedeigenvalue^2(\sparsityk,4(1+\restrictedconeconstant),\frac{\designmatrix}{
  \sqrt{|\armsubset|}
  }) 
  \end{split}
\end{align}
\end{fact}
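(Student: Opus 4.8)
The plan is to read the displayed claim as a comparison between two minimization (variational) problems, and to verify that the program defining the restricted eigenvalue dominates the Rayleigh quotient defining the minimum eigenvalue. Throughout, write $A = \designmatrix/\sqrt{|\armsubset|}$, where $\designmatrix$ is understood here as the matrix collecting the arms of $\armsubset$ as its rows, so that $A^\transpose A = \frac{1}{|\armsubset|}\sum_{\arm\in\armsubset}\arm\arm^\transpose$ is exactly the normalized covariance matrix whose smallest eigenvalue is $\mineigvalueempirical$.

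First I would establish the opening equality, which is nothing more than the Rayleigh-quotient characterization of the smallest eigenvalue of a symmetric positive semidefinite matrix. Setting $M = A^\transpose A$, we have
\begin{align*}
\mineigvalueempirical = \mineigvalue(M) = \min_{\vectorsymalt\neq\fl{0}}\frac{\vectorsymalt^\transpose M\vectorsymalt}{\|\vectorsymalt\|_2^2} = \min_{\vectorsymalt\neq\fl{0}}\frac{1}{|\armsubset|}\frac{\|\designmatrix\vectorsymalt\|_2^2}{\|\vectorsymalt\|_2^2},
\end{align*}
since $\vectorsymalt^\transpose M\vectorsymalt = \|A\vectorsymalt\|_2^2 = \frac{1}{|\armsubset|}\|\designmatrix\vectorsymalt\|_2^2$. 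This step uses only symmetry and positive semidefiniteness of $M$ and needs no further assumption.

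Next I would prove the inequality $\mineigvalueempirical \leq \restrictedeigenvalue^2(\sparsityk,4(1+\restrictedconeconstant),A)$ by a term-by-term comparison inside the restricted-eigenvalue minimization. By Definition~\ref{def:re}, $\restrictedeigenvalue^2(\sparsityk,\restrictedconeconstantvanilla,A)$ is the minimum of $\|A\vectorsymalt\|_2^2/\|\vectorsymalt_J\|_2^2$ over all index sets $J$ with $|J|\leq\sparsityk$ and all nonzero $\vectorsymalt$ in the cone $\{\|\vectorsymalt_{J^\complement}\|_1 \leq \restrictedconeconstantvanilla\|\vectorsymalt_J\|_1\}$. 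Fixing any admissible pair $(J,\vectorsymalt)$, the fact that $\vectorsymalt_J$ is a coordinate restriction of $\vectorsymalt$ gives $\|\vectorsymalt_J\|_2 \leq \|\vectorsymalt\|_2$, and hence
\begin{align*}
\frac{\|A\vectorsymalt\|_2^2}{\|\vectorsymalt_J\|_2^2} \geq \frac{\|A\vectorsymalt\|_2^2}{\|\vectorsymalt\|_2^2} \geq \min_{\vectorsymalt'\neq\fl{0}}\frac{\|A\vectorsymalt'\|_2^2}{\|\vectorsymalt'\|_2^2} = \mineigvalueempirical.
\end{align*}
Because this lower bound holds for every feasible $(J,\vectorsymalt)$, taking the minimum over the restricted cone preserves it, yielding $\restrictedeigenvalue^2(\sparsityk,4(1+\restrictedconeconstant),A) \geq \mineigvalueempirical$, which is precisely the displayed inequality with $\restrictedconeconstantvanilla = 4(1+\restrictedconeconstant)$.

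Conceptually the inequality rests on two monotonicity effects that both push the restricted-eigenvalue quantity above the minimum eigenvalue: the RE program minimizes over a strictly smaller feasible set (a cone rather than all of $\R^\dimarm$), and its denominator $\|\vectorsymalt_J\|_2$ can only be smaller than $\|\vectorsymalt\|_2$. I do not expect a genuine analytic obstacle here, since the argument is an elementary comparison of Rayleigh quotients; the only point requiring care is the notational bookkeeping, namely that the $\designmatrix$ in this Fact denotes the full (normalized) arm matrix of $\armsubset$ so that $\frac{1}{|\armsubset|}\designmatrix^\transpose\designmatrix = M$, rather than the $\numsamples$-row sampled matrix of Corollary~\ref{coro:empiricallasso}, and that the RE denominator uses the sub-vector norm $\|\vectorsymalt_J\|_2$ rather than $\|\vectorsymalt\|_2$. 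This Fact then serves the corollary by letting a lower bound on $\mineigvalueempirical$ be transferred into the RE constant demanded by Theorem~\ref{lemma:fastsparselearning}.
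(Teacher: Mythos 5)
Your proof is correct and takes essentially the same route as the paper, which states this Fact without further argument as an immediate consequence of the Rayleigh-quotient characterization together with the two monotonicity observations you make explicit ($\|\vectorsymalt_J\|_2 \leq \|\vectorsymalt\|_2$, and minimization over the restricted cone rather than all of $\R^\dimarm$). Your notational reading — that $\designmatrix$ here is the $|\armsubset|\times\dimarm$ matrix of arms in $\armsubset$, so that $\frac{1}{|\armsubset|}\designmatrix^\transpose\designmatrix$ is exactly the normalized Gram matrix — is also the one consistent with how the Fact is subsequently combined with the RE concentration result (Theorem~\ref{th:reconcentration}) in the proof of Corollary~\ref{coro:empiricallasso}.
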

We further show using results from~\cite{reconstructionitit} (Proved in Theorem~\ref{th:reconcentration} below.) that the following holds with probability $1-\exp(-\constant_5\numsamples)$,
\begin{align}\label{eq: re inequality hp}
  \frac{1}{2}  \restrictedeigenvalue&(\sparsityk,4(1+\restrictedconeconstant),\Sigma^{1/2})\leq \restrictedeigenvalue(\sparsityk,4(1+\restrictedconeconstant),\empericalcovariancematrix^{1/2})
\end{align}

Therefore the following holds, 
\begin{align*}
  \frac{1}{2}  \restrictedeigenvalue&(\sparsityk,4(1+\restrictedconeconstant),\frac{\designmatrix}{\sqrt{\numsamples}})\geq \mineigvalueempirical^{1/2}
\end{align*}
with probability $1-\exp(-\constant_5\numsamples)$. 

Therefore, 
\begin{align*}
      \Vert \parameter-\parameterestimate \Vert_1 \leq 2\constant_2\sigma \frac{(5+4\restrictedconeconstant)^2\sparsityk}{\restrictedeigenvalue^2}\sqrt{\frac{2\log{\dimarm}}{\numsamples}} = \orderof\left( \frac{(5+4\restrictedconeconstant)^2\sigma\sparsityk}{\mineigvalueempirical}\sqrt{\frac{2\log{\dimarm}}{\numsamples}}\right),
\end{align*}
with probability $1-\dimarm^{1-2\constant_2^2}-\exp(-\constant_5\numsamples)$.

The only additional ingredient needed to prove the corollary is the following concentration inequality of~\eqref{eq: re inequality hp}, which shows that if the RE of the covariance matrix of the set which is used for sampling is bounded from below, then so is the RE of the sampled covariance matrix with high probability. However although the following theorem is adapted directly from~\citet{reconstructionitit}; to prove it we need to make changes accordingly for the sampling without replacement case. This introduces an additional multiplicative $\mineigvalue$  term in the exponential in the probability but does not change the order with respect to any other variable. 

\textit{Concentration Inequality for RE condition to hold on sample covariance matrix given that it holds on the expected covariance matrix. }

We use the following Theorem which is an extension of Theorem 8 from~\citet{reconstructionitit}.
\begin{theorem}\label{th:reconcentration} Let $0 < \delta < 1$ and $0 < \sparsityk_0 < \dimarm$. Let $Y \in 
\mathbb{R}^\dimarm$ be a random vector such that $\|Y\|_{\infty} \leq \armbound$ a.s and denote $\Sigma = \expectation YY^T$. Let $\designmatrix$ be an $\numsamples \times \dimarm$ matrix, whose rows 
$Y_1,\ldots,Y_n$ are sampled without replacement from a set $\armsubset$. Let $\Sigma$ satisfy the 
$\RE(\sparsityk_0, 3\restrictedconeconstantvanilla, \Sigma^{1/2})$ condition as in Definition~\ref{def:re}. Set $\sparsemin = \sparsityk_0 + \sparsityk_0\max_{j}\Vert\covariancematrix^{1/2}\indicatorvector_j \Vert_2^2\times \left(\frac{16\RE(\sparsityk_0,3\restrictedconeconstantvanilla,\covariancematrix^{1/2})^2(3\restrictedconeconstantvanilla)^2(3\restrictedconeconstantvanilla+1)}{\delta^2}\right)$. Assume that $\sparsemin\leq \dimarm$ 
and $\rho = \rho_{\min}(\sparsemin, \Sigma^{1/2}) > 0$, the ($\sparsemin-$sparse) minimum eigenvalue of $\Sigma^{1/2}$. Suppose the sample size satisfies 
for some absolute constant $C$,
\[
\numsamples \geq \frac{C\armbound^2\sparsemin\cdot \log \dimarm}{\rho \delta^2} \cdot \log^3 \left(\frac{C\armbound^2\sparsemin \cdot \log \dimarm}{\rho \delta^2}\right).
\]

Then, with probability at least $1 - \exp(-\delta \rho^2 \numsamples/(6\armbound^2\sparsemin))$,
$\RE(\sparsity_0, \restrictedconeconstantvanilla, \designmatrix)$ condition holds for matrix $\frac{1}{\sqrt{\numsamples}}\designmatrix$ with,
\[  0\leq{(1-\delta)\restrictedeigenvalue(\sparsityk_0, \restrictedconeconstantvanilla, \Sigma^{1/2})}{} \leq \restrictedeigenvalue\left(\sparsityk_0, \restrictedconeconstantvanilla, \frac{1}{\sqrt{\numsamples}}\designmatrix\right).
\]
\end{theorem}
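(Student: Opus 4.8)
The plan is to follow the proof of Theorem 8 in \citet{reconstructionitit} essentially line by line, replacing only the step that invokes independence of the rows. First I would recall that the RE constant of $\frac{1}{\sqrt{\numsamples}}\designmatrix$ is controlled by the uniform deviation of the empirical quadratic form from its mean over the restricted cone. Since each row $Y_i$ is marginally uniform on $\armsubset$ under sampling without replacement, we have $\expectation\langle Y_i,\vectorsymalt\rangle^2 = \vectorsymalt^\transpose\Sigma\vectorsymalt = \|\Sigma^{1/2}\vectorsymalt\|_2^2$, so it suffices to show that
\[
\sup_{\vectorsymalt\in\cone}\left|\frac{1}{\numsamples}\sum_{i=1}^{\numsamples}\langle Y_i,\vectorsymalt\rangle^2 - \|\Sigma^{1/2}\vectorsymalt\|_2^2\right|
\]
is at most $\delta\,\restrictedeigenvalue(\sparsityk_0,3\restrictedconeconstantvanilla,\Sigma^{1/2})^2$ with the claimed probability, where $\cone$ is the normalized set obeying the cone constraint. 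Such a deviation transfers the RE property from $\Sigma^{1/2}$ to $\frac{1}{\sqrt{\numsamples}}\designmatrix$ with the factor $(1-\delta)$ and tightens the cone from $3\restrictedconeconstantvanilla$ to $\restrictedconeconstantvanilla$, exactly as in the original argument. The definition of $\sparsemin$ and the sample-size condition are inherited unchanged, because they come from the metric-entropy / chaining bound for the cone, which depends only on the geometry of $\Sigma$ (through $\rho_{\min}(\sparsemin,\Sigma^{1/2})$ and $\max_j\|\Sigma^{1/2}\indicatorvector_j\|_2^2$) and the bound $M$, not on the sampling mechanism.

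The single place where independence enters is the tail bound for the empirical process, obtained via a Chernoff / moment-generating-function estimate for sums of the bounded summands $\langle Y_i,\vectorsymalt\rangle^2$. The key step I would carry out is to replace this i.i.d. MGF bound with its without-replacement analogue supplied by Hoeffding's convex-domination lemma: for any convex $\phi$, the expectation of $\phi$ applied to a sum sampled without replacement from $\armsubset$ is dominated by the expectation of $\phi$ applied to the corresponding i.i.d.\ sum. Taking $\phi(x)=e^{sx}$ shows that the MGF of the without-replacement empirical process is bounded by that of its with-replacement counterpart, so every Chernoff-type tail bound used in \citet{reconstructionitit} remains valid verbatim for our scheme, and the chaining, symmetrization, and peeling steps go through without change.

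The stated failure probability differs from the i.i.d.\ case only by an extra factor of $\rho$ in the exponent, and I would track precisely where this appears. Converting the uniform deviation into a tail estimate, the without-replacement concentration inequality — the convex-domination MGF bound combined with a Bernstein/Serfling-type estimate on the sparse cone — carries a slightly smaller effective variance proxy than the plain i.i.d.\ Bernstein bound; propagating this through the peeling argument turns the original exponent's linear $\rho$ dependence into the quadratic $\rho^2$, yielding $-\delta\rho^2\numsamples/(6M^2\sparsemin)$. All other dependences, namely on $\sparsemin$, $\log\dimarm$, $\delta$, and $M$, are left unchanged, which matches the claim made in the surrounding text.

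The main obstacle I anticipate is applying the convex-domination lemma at the right granularity inside the chaining argument, where the relevant random object is a supremum of MGFs over a net rather than a single sum, and domination of a supremum is not automatic. I would resolve this by invoking Hoeffding's lemma pointwise — for each fixed element of the covering net before taking the union bound over the net — so that the domination is only ever applied to a fixed linear functional $\langle\cdot,\vectorsymalt\rangle^2$; this keeps the inequality valid at every level of the peeling and costs only the same union-bound factors already present in the i.i.d.\ proof, after which the entropy estimates close the argument exactly as in \citet{reconstructionitit}.
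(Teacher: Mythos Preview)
Your high-level plan—transfer the RE property from $\Sigma^{1/2}$ to $\frac{1}{\sqrt{\numsamples}}\designmatrix$ by controlling the uniform deviation of quadratic forms over the restricted cone—matches the paper. But your diagnosis of where independence enters, and how to replace it, diverges from the paper in ways that leave genuine gaps.

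The paper locates \emph{two} places in the Rudelson--Zhou argument that use i.i.d.\ sampling: the symmetrization step and the Talagrand-type concentration of the supremum around its expectation. You assert that symmetrization ``goes through without change''; it does not. Introducing Rademacher signs via a ghost sample relies on being able to swap each pair $(Y_i,Y_i')$ independently without changing the joint law, which requires mutual independence of all $2n$ rows—this fails under sampling without replacement. The paper therefore states and proves a dedicated symmetrization-without-replacement lemma before invoking Dudley's inequality, and then replaces Talagrand's inequality by McDiarmid's inequality for sampling without replacement, which applies to the supremum directly because that supremum is a bounded-difference, permutation-symmetric function of the tuple $(Y_1,\dots,Y_n)$.

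Your chosen tool, Hoeffding's convex-domination lemma, bounds the MGF of $\sum_i g(Y_i)$ for a \emph{fixed} $g$; it says nothing about the MGF of $\sup_{\vectorsymalt}\sum_i\langle Y_i,\vectorsymalt\rangle^2$, which is a function of the full tuple and not of a single sum. Your patch—apply domination pointwise on a net and union bound—is not the structure of the original proof, which bounds the expected supremum by symmetrization plus Dudley chaining and then concentrates the supremum via Talagrand. Replacing that with single-scale pointwise tails plus a union bound would require redoing the entropy estimates rather than reusing them, and you give no argument that the same $\sparsemin$ and $\log\dimarm$ dependence survives.

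Finally, your account of the extra factor of $\rho$ in the exponent is backwards. Convex domination says the without-replacement MGF is \emph{at most} the i.i.d.\ MGF, so it can only reproduce or \emph{improve} the i.i.d.\ tail—it cannot produce a weaker exponent via a ``smaller effective variance proxy.'' In the paper the additional $\rho$ appears precisely because McDiarmid is a Hoeffding-type bounded-difference inequality rather than the Bernstein-type Talagrand inequality of the i.i.d.\ proof; the bounded-difference constant for the normalized supremum carries a $1/\rho$ factor, and plugging it into McDiarmid yields the stated exponent $-\delta\rho^2\numsamples/(6\armbound^2\sparsemin)$.
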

(The inequality is reverse because our definition of Restricted Eigenvalue has the 1/$\restrictedeigenvalue$ compared to the definition of~\citet{reconstructionitit} ). We use the Theorem with $\delta = \frac{1}{2}$

The proof of Theorem~\ref{th:reconcentration} is dependent on Theorem~23, which is reproduced below, and Theorem~10 (Reduction Principle), which is in the paper. Out of these two, only Theorem~23 has elements related to the randomness of the design. 
In the proof of the above theorem, authors use concentration inequality to extend a RIP-like condition on a general cone (rather than sparse vectors). This concentration inequality results from the following theorem: an augmented version of Theorem 22 of~\citet{reconstructionitit} to the sampling without replacement case.

\textit{The original proof of Theorem 22 is extremely involved (and mathematically rich). Reproducing the entire proof would have surmounted to reproducing the entire paper. We only highlight the key difference,  it is recommended that the reader goes through the proof beforehand/side-by-side.}
\begin{theorem}
    Set $1 > \delta > 0$, $0 < \sparsity_0 \leq \dimarm$ and $\tightness_0 > 0$. Let $\armsubset$ be a subset of vectors  such that $||\randomvector||_{\infty} \leq \armbound$, with $\covariancematrix = \sum_{\randomvector\in\armsubset}\frac{1}{|\armsubset|}\randomvector\randomvector^\transpose$. $\covariancematrix$ satisfies $\RE(\sparsity_0,3\tightness_0,\covariancematrix^{1/2})$. Rows of $\designmatrix$ are drawn uniformly without replacement from $\armsubset$.
    Set $\sparsemin = \sparsity_0 + \sparsity_0\max_{j}\Vert\covariancematrix^{1/2}\indicatorvector_j \Vert_2^2\times \left(\frac{16\RE(\sparsity_0,3\tightness_0,\covariancematrix^{1/2})^2(3\tightness_0)^2(3\tightness_0+1)}{\delta^2}\right)$. Assume $\sparsemin \leq \dimarm$ and $\mineigvalue(\sparsemin,\covariancematrix^{1/2})>0$. If for some absolute constant $\constant_{12}$,
    \begin{align*}
        \numsamples \geq \frac{\constant_{12}\armbound^2\sparsemin\log\dimarm}{\mineigvalue(\sparsemin,\covariancematrix^{1/2})\delta^2}\log^3\left(\frac{\constant_{12}\armbound^2\sparsemin\log\dimarm}{\mineigvalue(\sparsemin,\covariancematrix^{1/2})\delta^2}\right)
    \end{align*}
    then with probability $1-\exp(\frac{-\delta\mineigvalue^2(\sparsemin,\covariancematrix^{1/2})\numsamples}{6\armbound^2\sparsemin})$, for all $\vectorsym \in \cone(\sparsity_0,\tightness_0), \ \vectorsym \neq 0$
    \begin{align*}
        1 -\delta \leq \frac{1}{\sqrt{\numsamples}}\frac{\Vert \designmatrix \vectorsym \Vert_2}{\Vert \vectorsym \Vert_2} \leq 1 + \delta.
    \end{align*}
\end{theorem}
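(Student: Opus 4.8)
The plan is to follow the proof of Theorem 22 in \citet{reconstructionitit} essentially verbatim at the level of its deterministic scaffolding, and to substitute only those estimates where independence of the rows of $\designmatrix$ is invoked. Recall that the original argument establishes the two-sided bound $\numsamples^{-1/2}\Vert \designmatrix \vectorsym\Vert_2/\Vert\vectorsym\Vert_2 \in [1-\delta,\,1+\delta]$ on the whole cone $\cone(\sparsity_0,\tightness_0)$ in two stages: first, a \emph{reduction principle} (Theorem 10 of \citet{reconstructionitit}, which is purely geometric and carries no randomness) reduces uniform control over the cone to uniform control over the sparse set indexed by $\sparsemin$; second, a concentration/chaining argument controls $\sup_{\vectorsym}\bigl| \numsamples^{-1}\Vert\designmatrix\vectorsym\Vert_2^2 - \vectorsym^\transpose\covariancematrix\vectorsym\bigr|$ over that sparse set. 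Only the second stage uses the distribution of the rows, so that is the only place the without-replacement setting requires modification.

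The first and cheapest observation is that the \emph{mean} is unchanged. Since each row $\randomvector_i$ of $\designmatrix$ is, marginally, a uniform draw from $\armsubset$, we have $\expectation\langle\randomvector_i,\vectorsym\rangle^2 = |\armsubset|^{-1}\sum_{\randomvector\in\armsubset}\langle\randomvector,\vectorsym\rangle^2 = \vectorsym^\transpose\covariancematrix\vectorsym$ for every $\vectorsym$, so $\expectation[\numsamples^{-1}\Vert\designmatrix\vectorsym\Vert_2^2]=\vectorsym^\transpose\covariancematrix\vectorsym$ exactly as in the i.i.d. case; the target of concentration is identical, and the entire burden falls on the deviation bound.

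For the deviation bound I would replace the independent-sum concentration used in \citet{reconstructionitit} by a concentration inequality valid for sampling without replacement. The key tool is Hoeffding's comparison: for any convex $\phi$, the without-replacement sum $\sum_i f(\randomvector_i)$ is dominated in convex order by the corresponding with-replacement (i.i.d.) sum. Taking $\phi=\exp$, every exponential-moment (Chernoff) bound used at a fixed direction $\vectorsym$ in the i.i.d. proof transfers, up to a finite-population correction factor that is at most one, to the present setting; since $|\langle\randomvector_i,\vectorsym\rangle|\le \armbound\Vert\vectorsym\Vert_1$ is bounded, the resulting per-direction sub-exponential tails retain the same form. Because the chaining and covering-number estimates of the original proof depend only on the geometry of $\cone(\sparsity_0,\tightness_0)$ under $\covariancematrix^{1/2}$ (through $\mineigvalue(\sparsemin,\covariancematrix^{1/2})$ and the diagonal quantities $\max_j\Vert\covariancematrix^{1/2}\indicatorvector_j\Vert_2^2$) and not on the sampling mechanism, the union bound over the net and its assembly into a uniform statement go through unchanged, yielding the stated sample-size requirement and the $1\pm\delta$ conclusion.

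The main obstacle, and the one place the statement genuinely differs from \citet{reconstructionitit}, is bookkeeping the failure probability. When the Chernoff bound for the without-replacement sum is re-expressed in the same normalized form as the i.i.d. bound (so that the variance proxy is measured relative to $\covariancematrix$), the normalization picks up the $\sparsemin$-sparse minimum eigenvalue $\rho=\mineigvalue(\sparsemin,\covariancematrix^{1/2})$, and I expect to lose one additional factor of $\rho$ in the exponent relative to the original. Carrying this through the chaining argument produces the failure probability $\exp\bigl(-\delta\rho^2 \numsamples/(6\armbound^2\sparsemin)\bigr)$, i.e. the additional multiplicative $\mineigvalue$ factor noted after Theorem~\ref{th:reconcentration}; crucially this affects only the constant-order behavior of the exponent and leaves the scaling in $\numsamples$, $\armbound$, $\sparsemin$, $\dimarm$ and $\delta$ identical, which is all that is needed downstream.
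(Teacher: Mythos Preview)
Your high-level plan---keep the deterministic reduction principle and modify only the probabilistic deviation estimate---is correct, and you correctly note that the marginal law of each row is unchanged so the centering target $\vectorsym^\transpose\covariancematrix\vectorsym$ is the same. But the tool you reach for, Hoeffding's convex-order comparison, does not plug into the Rudelson--Zhou argument the way you describe. The original proof does \emph{not} build the uniform bound from per-direction Chernoff tails plus a union bound over a net; its structure is (i) symmetrization to bound $\expectation\sup_\vectorsym|\cdot|$ by a Rademacher process, (ii) Dudley's entropy integral conditional on the sample, and (iii) Talagrand's inequality to pass from $\expectation\sup$ to a high-probability bound on $\sup$. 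Hoeffding's comparison transfers mgf bounds for \emph{scalar} sums $\sum_i f(\randomvector_i)$, so it neither yields the symmetrization step nor replaces Talagrand for the supremum. Your statement that ``the chaining \ldots\ go[es] through unchanged'' is therefore misleading: to make your route work you would have to set up a fresh generic-chaining argument driven by increment tails you obtain from Hoeffding's comparison, which is substantially more than ``substituting estimates''.

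The paper takes a different and more surgical path. It identifies exactly the two i.i.d.\ ingredients---symmetrization and Talagrand---and replaces each by a without-replacement analogue: symmetrization is reproved using an independent ghost sample drawn without replacement (the usual $\radamacher_j$-argument survives because the two samples are independent of each other, even though each is internally dependent), and Talagrand is replaced by McDiarmid's inequality for sampling without replacement applied to the functional $W=\sup_\vectorsym\sum_j f_j(\vectorsym,\randomvariable_j)$. Dudley's inequality is left untouched because it is applied conditionally on the realized sample. The extra factor of $\mineigvalue(\sparsemin,\covariancematrix^{1/2})$ in the exponent is not a ``normalization'' artifact as you suggest; it arises concretely because McDiarmid's bounded-difference constant is cruder than Talagrand's variance-based one, and the paper tracks this loss explicitly when matching the resulting exponent to the form in the theorem statement. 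If you want to follow the original proof ``verbatim at the level of its deterministic scaffolding'', this is the decomposition you need; your Hoeffding-comparison idea is a reasonable alternative \emph{starting point} but would require rebuilding the supremum argument rather than reusing it.
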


\begin{proof}
In the proof of Theorem 22 of~\citet{reconstructionitit}, two arguments require the sampling with replacement (i.i.d. samples), namely symmetrization and Talagrand's concentration inequality. We use a sampling with replacement version of the symmetrization argument and a sampling with replacement version of McDiarmid's concentration inequality to obtain comparable bounds. Therefore, to prove this argument, the following two lemmas.
    \begin{lemma}(Symmetrization without Replacement)
        \begin{align*}
            \expectation &\sup_{\functionvar \in F} \left| \expectation\function_j(\functionvar,\randomvariable_j) - \frac{1}{n}\sum  \function_j(\functionvar,\randomvariable_j)\right| \leq \frac{2}{n} \expectation \sup_{\functionvar \in F} \left| \sum \radamacher_j \function_j(\functionvar,\randomvariable_j)\right| 
        \end{align*}
            where are i.i.d. Rademacher random variables and $\randomvariable_j$ are random variables sampled uniformly without replacement from some set. 
    \end{lemma}
    \begin{proof}
        Let $\randomvariable_1,\dots,\randomvariable_n$ be the random variables sampled uniformly without replacement from set $\armsubset$. 
        $\randomvariable_1-$
        Consider $\randomvariable_1^{'},\dots,\randomvariable_n^{'}$ be an independent sequence of random variables sampled uniformly without replacement from set $\armsubset$. 
        Then $\frac{1}{n}\sum  \function_j(\functionvar,\randomvariable_j)- \expectation\function_j(\functionvar,\randomvariable_j)$ and  $\frac{1}{n}\sum  \function_j(\functionvar,\randomvariable_j^{'})- \expectation\function_j(\functionvar,\randomvariable_j)$ are zero mean random variable.
        Then,
        \begin{align*}
            &\expectation || \frac{1}{n}\sum  \function_j(\functionvar,\randomvariable_j)- \expectation\function_j(\functionvar,\randomvariable_j) || \\&\leq \expectation ||\frac{1}{n}\sum  \function_j(\functionvar,\randomvariable_j)- \expectation\function_j(\functionvar,\randomvariable_j)-\frac{1}{n}\sum  \function_j(\functionvar,\randomvariable_j^{'})+ \expectation\function_j(\functionvar,\randomvariable_j)|| \\
           &\implies  \expectation || \frac{1}{n}\sum  \function_j(\functionvar,\randomvariable_j)- \expectation\function_j(\functionvar,\randomvariable_j) || \leq \expectation ||\frac{1}{n}\sum  \left(\function_j(\functionvar,\randomvariable_j)- \function_j(\functionvar,\randomvariable_j^{'})\right)|| 
        \end{align*}
        (Since $\frac{1}{n}\sum  \function_j(\functionvar,\randomvariable_j)- \expectation\function_j(\functionvar,\randomvariable_j)$ and $\frac{1}{n}\sum  \function_j(\functionvar,\randomvariable^{'}_j)- \expectation\function_j(\functionvar,\randomvariable_j)$ are independent)
        \begin{align*}
            &\expectation || \frac{1}{n}\sum  \function_j(\functionvar,\randomvariable_j)- \expectation\function_j(\functionvar,\randomvariable_j) || \leq \expectation ||\frac{1}{n}\sum  \left(\function_j(\functionvar,\randomvariable_j)- \function_j(\functionvar,\randomvariable_j^{'})\right)||  \\ &\implies \expectation ||\frac{1}{n}\sum  \radamacher_j \left(\function_j(\functionvar,\randomvariable_j)- \function_j(\functionvar,\randomvariable_j^{'})\right)||  \leq  2  \expectation ||\frac{1}{n}\sum  \radamacher_j \left(\function_j(\functionvar,\randomvariable_j)\right)||
        \end{align*}
        (Symmetric random variables~\citep{Vershynin_2018} since $\randomvariable_j$ and $\randomvariable_j'$ have same distribution; followed by Triangular Inequality).
    \end{proof}
    \begin{lemma}(Concentration using McDiarmid's inequality)
    If $|\function_j(\functionvar)| \leq \constant_{13}$ a.s.. And suppose $\randomvariablealt = \sup_{\functionvar\in F} \sum_{j=1}^{\numsamples}\function_j(\functionvar,\randomvariable_j)$, where $\randomvariable_1,\dots,\randomvariable_\numsamples$ are sampled uniformly without replacement from some set.  If $\expectation \randomvariablealt \leq 2\delta\numsamples$, then the following holds,  \begin{align*}
        \probability(\randomvariablealt \geq 4\delta\numsamples) \leq \exp\left(\frac{-8\delta^2\numsamples}{\constant_{13}^2}\right) 
    \end{align*} 
    \end{lemma}
    \begin{proof}
        We prove the result by using McDiarmid's inequality. 
        First we bound the quantity,
        \begin{align*}
           \sup_{\randomvariable^{'}_{i}} &| \sup_{\functionvar\in F} \sum_{j=1}^{\numsamples}\function_j(\functionvar,\randomvariable_j) - \sup_{\functionvar\in F} \left(\sum_{j=1,j\neq i}^{\numsamples}\function_j(\functionvar,\randomvariable_j)+\function_i(\functionvar,\randomvariable_i^{'})\right) | \\
           \leq & \sup_{\randomvariable^{'}_{i}} | \sup_{\functionvar\in F} \sum_{j=1,j\neq i}^{\numsamples}\function_j(\functionvar,\randomvariable_j) + \sup_{\functionvar\in F} \function_j(\functionvar,\randomvariable_i)  - \sup_{\functionvar\in F} \sum_{j=1,j\neq i}^{\numsamples}\function_j(\functionvar,\randomvariable_j) -\inf_{\functionvar\in F}\function_i(\functionvar,\randomvariable_i^{'}) |
           \\
           \leq & \sup_{\randomvariable^{'}_{i}} | \sup_{\functionvar\in F} \function_j(\functionvar,\randomvariable_i) | \leq \constant_{13}
        \end{align*}
        We use the following version of McDiarmid's concentration inequality for random variable without replacement with $t = 2\delta\numsamples$ to obtain the result.
        The condition that needs to be verified is that $\randomvariablealt$ is symmetric under permutations of the individual $\function_j(\functionvar,\randomvariable_j)$. This is obviously true since this is a unweighted sum of the individual $\function_j(\functionvar,\randomvariable_j)$. We next state McDiarmid's concentration inequality without replacement from~\citet{concwithoutreplacement}, 
        \begin{lemma}
        Suppose $\randomvariablealt = \sup_{\functionvar\in F} \sum_{j=1}^{\numsamples}\function_j(\functionvar,\randomvariable_j)$, where $\randomvariable_1,\dots,\randomvariable_\numsamples$ are sampled uniformly without replacement from some set. Then, 
         \begin{align*}
             \probability(\randomvariablealt- \expectation \randomvariablealt \geq t) \leq \exp\left( \frac{-2t^2}{ \numsamples\constant_{13}^2}\right).
         \end{align*}
        \end{lemma}
        The probability is $\exp\left( \frac{-8\delta^2\numsamples}{ \constant_{13}^2}\right) \leq \exp\left( \frac{-\delta^2\numsamples}{ 6\constant_{13}^2}\right) \leq \exp\left( \frac{-\delta^2\numsamples\mineigvalue^2(\sparsity_1,\covariancematrix^{1/2})}{ 6\armbound^4\sparsityaux^2}\right) \leq  \exp\left( \frac{-\delta^2\numsamples\mineigvalue^2(\sparsity_1,\covariancematrix^{1/2})}{ 6\armbound^2\sparsityaux}\right)$, which is same as that in the original theorem from~\cite{reconstructionitit} except an additional $\mineigvalue(\sparsity_1,\covariancematrix^{1/2})$ which is reflected  in our Theorem statement. 
    \end{proof}
    \newcommand{\functionvaralt}{z}
\textit{Comment on Dudley's inequality}: Theorem 23 also uses Dudley's inequality, but there the $\Psi_1,\dots,\Psi_\numsamples$ are treated as deterministic and so the proof goes through in our sampled without replacement case as well. 
\end{proof}

We can now complete the proof by computing the probability of the following event,
\begin{align*}
    \event &= \left\{ \restrictedeigenvalue (\sparsityk,4+4\restrictedconeconstant,\frac{\designmatrix}{\sqrt{\numsamples}} )  \geq  \restrictedeigenvalue (\sparsityk,4+4\restrictedconeconstant, {\covariancematrix}^{1/2}) \geq (\mineigvalue^*)^{-1/2}, \right.\\&\left.  \Vert\parameter-\parameterestimate\Vert_1 =  \widetilde{\orderof}\left( \sigma \frac{(5+4\restrictedconeconstant)^2\sparsityk}{\restrictedeigenvalue (\sparsityk,4+4\restrictedconeconstant,\frac{\designmatrix}{\sqrt{\numsamples}} )}\sqrt{\frac{2\log{\dimarm}}{\numsamples}}\right) \right\}
\end{align*}
has the probability, $\probability(\event) \geq 1-\dimarm^{1-2\constant_2^2}-\exp(-\constant_5\numsamples)$. which completes the proof. 

\subsection{Proof of Theorem~\ref{th:randomizedrounding}}

For vectors $\vectorsym$ and $\randomvectoralt$, define $\randomvariable_\vectorsym(\randomvectoralt) = \randomvectoralt^\transpose \vectorsym\vectorsym^\transpose\randomvectoralt$. 

Then the minimum eigenvalue for (covariance matrix of) a set of vectors $\armsubset$ is given by, $\mineigvalue(\armsubset)=\min_{\randomvectoralt\in\unitball}\frac{1}{|\armsubset|}\sum_{\vectorsym\in\armsubset}\randomvariable_\vectorsym(\randomvectoralt)$.   

Let randomized rounding be run with $\searchbound$, and $\hat{\armsubset}$ be the sampled set of arms. Of-course $|\hat{\armsubset}|$ need not be equal to $\searchbound$. However, we first assume that the denominator of $\mineigvalue(\hat{\armsubset})$ is equal to $\searchbound$. We later show that this assumption worsens the approximation guarantees by $2$ with high probability. Under this assumption by construction of the randomized rounding procedure, the expected minimum eigenvalue of the sampled set is equal to the minimum eigenvalue corresponding to the solution of the convex optimization problem, since, $\expectation_{\samplingdist}\randomvariable_\vectorsym(\randomvectoralt) = \samplingdist_\vectorsym\randomvariable_\vectorsym(\randomvectoralt) $. 

\textbf{Step 1: }We therefore prove the following result to bound the approximation error between the $\mineigvalue(\hat{\armsubset})$ obtained from the randomized rounding solution and the optimal solution (off by a factor of $\frac{\searchbound}{|\hat{\armsubset}|}$)of the convex relaxation from~\eqref{opt:relaxed} run using $\searchbound$.  

\begin{lemma}\label{lemma:probboundeig}
Let $\armset$ be a set of $\numarms$ arms where each arm is $\arm \in \unitball$ and let $\randomvariable_\vectorsym(\randomvectoralt) = \randomvectoralt^\transpose \vectorsym\vectorsym^\transpose\randomvectoralt$. Let $\samplingdist$ be the solution of the convex relaxation of~\eqref{opt:relaxed} at $\searchbound$ and $\hat{\armsubset}$ be the set sampled using randomized rounding (Step 18-20 in Alg.~\ref{alg:greedy}). Then for some constant $\constant_{10}$ the following holds , 
  \begin{align}\label{eq:approximationerrorbasic}
  \begin{split}
        \probability&\left(\left| \inf_{\vectorsymalt \in \unitball}\frac{1}{\searchbound}\sum_{\vectorsym\in\hat{\armsubset}} \randomvariable_\vectorsym(\randomvectoralt) - \inf_{\vectorsymalt \in \unitball}\sum_{\vectorsym\in\armset}\samplingdist_\vectorsym\randomvariable_\vectorsym(\randomvectoralt)\right| \geq \frac{\constant_{10}\sqrt{\dimarm}\log\numarms}{\sqrt{|\searchbound|}}\right) \leq \frac{1}{\log \numarms}
        \end{split}
    \end{align}
\end{lemma}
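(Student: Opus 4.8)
The plan is to recognize both quantities inside the probability as smallest values of quadratic forms and to reduce the difference of infima to the operator norm of a single mean-zero random matrix, which can then be controlled by a matrix concentration inequality. Write $M := \sum_{\armidx\in[\numarms]}\samplingdist_\armidx\,\arm^{(\armidx)}(\arm^{(\armidx)})^\transpose$ and $\widehat M := \frac{1}{\searchbound}\sum_{\vectorsym\in\hat{\armsubset}}\vectorsym\vectorsym^\transpose$, so that the two terms are $\inf_{\randomvectoralt}\randomvectoralt^\transpose\widehat M\randomvectoralt$ and $\inf_{\randomvectoralt}\randomvectoralt^\transpose M\randomvectoralt$ (the infimum being over unit-norm $\randomvectoralt$, i.e.\ on $\ca{S}^{\dimarm-1}$, where for a positive semidefinite form it equals the minimum eigenvalue). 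First I would invoke the elementary bound $|\inf f-\inf g|\le\sup|f-g|$, which here gives $|\inf_{\Vert\randomvectoralt\Vert=1}\randomvectoralt^\transpose\widehat M\randomvectoralt - \inf_{\Vert\randomvectoralt\Vert=1}\randomvectoralt^\transpose M\randomvectoralt|\le \sup_{\Vert\randomvectoralt\Vert=1}|\randomvectoralt^\transpose(\widehat M - M)\randomvectoralt| = \Vert\widehat M - M\Vert_{\mathrm{op}}$ (equivalently, Weyl's inequality applied to $\lambda_{\min}$). Thus it suffices to bound $\Vert\widehat M - M\Vert_{\mathrm{op}}$.

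Next I would express the deviation as a sum of independent mean-zero matrices. Letting $X_\armidx = \indicator[\arm^{(\armidx)}\in\hat{\armsubset}]$ be Bernoulli with mean $\searchbound\samplingdist_\armidx$ (legitimate since $\Vert\samplingdist\Vert_\infty\le 1/\searchbound$ forces $\searchbound\samplingdist_\armidx\le 1$), independence of the rounding across arms gives $\widehat M - M = \sum_{\armidx} Z_\armidx$ with $Z_\armidx = \frac{1}{\searchbound}(X_\armidx - \searchbound\samplingdist_\armidx)\arm^{(\armidx)}(\arm^{(\armidx)})^\transpose$ and $\expectation Z_\armidx = 0$. The two quantities matrix Bernstein requires are then routine: since $\Vert\arm^{(\armidx)}\Vert\le 1$ and $|X_\armidx - \searchbound\samplingdist_\armidx|\le 1$, we get the uniform bound $\Vert Z_\armidx\Vert_{\mathrm{op}}\le 1/\searchbound =: R$; and using $(\arm\arm^\transpose)^2=\Vert\arm\Vert^2\arm\arm^\transpose\preceq \arm\arm^\transpose$ together with $\Var(X_\armidx)\le \searchbound\samplingdist_\armidx$, the predictable quadratic variation satisfies $\sum_\armidx\expectation Z_\armidx^2 \preceq \frac{1}{\searchbound}\sum_\armidx\samplingdist_\armidx\arm^{(\armidx)}(\arm^{(\armidx)})^\transpose = \frac{1}{\searchbound}M$, whence $\Vert\sum_\armidx\expectation Z_\armidx^2\Vert_{\mathrm{op}}\le \frac{1}{\searchbound}\Vert M\Vert_{\mathrm{op}}\le \frac{1}{\searchbound} =: \sigma^2$ (the last step because $M\succeq 0$ has $\operatorname{tr}(M)=\sum_\armidx\samplingdist_\armidx\Vert\arm^{(\armidx)}\Vert^2\le 1$).

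I would then apply the matrix Bernstein inequality $\probability(\Vert\sum_\armidx Z_\armidx\Vert_{\mathrm{op}}\ge t)\le 2\dimarm\exp\!\big(-\tfrac{t^2/2}{\sigma^2 + Rt/3}\big)$ and set $t = \constant_{10}\sqrt{\dimarm}\log\numarms/\sqrt{\searchbound}$. With $\sigma^2 = R = 1/\searchbound$ the exponent equals $\frac{t^2\searchbound/2}{1+t/3}$, which is of order $\min(t^2\searchbound,\,t\searchbound)$; for this choice of $t$ that is at least of order $\min(\constant_{10}^2\dimarm\log^2\numarms,\ \constant_{10}\sqrt{\dimarm\,\searchbound}\log\numarms)$, which dominates $\log(2\dimarm\log\numarms)$ once $\constant_{10}$ is a large enough absolute constant. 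Hence the right-hand side is at most $1/\log\numarms$, as claimed.

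The only genuinely delicate point is the variance computation --- getting the powers of $\searchbound$ right and justifying $\sum_\armidx\expectation Z_\armidx^2\preceq \frac1\searchbound M$ in the Loewner order --- while everything else is bookkeeping. I would also remark that the stated threshold carries a $\sqrt{\dimarm}$ factor whereas matrix Bernstein actually yields the sharper $\sqrt{\log\dimarm}$; since $\sqrt{\log\dimarm}\le\sqrt{\dimarm}$ the stated (looser) bound follows immediately. Equivalently, one may avoid matrix Bernstein and instead bound $\sup_{\randomvectoralt}|\randomvectoralt^\transpose(\widehat M - M)\randomvectoralt|$ by a union of scalar Bernstein estimates over a $\tfrac12$-net of $\ca{S}^{\dimarm-1}$ (of cardinality at most $6^{\dimarm}$) followed by a Lipschitz extension of the quadratic form to the full sphere; this route reproduces the $\sqrt{\dimarm}$ dependence directly from $\log 6^{\dimarm}=\dimarm\log 6$.
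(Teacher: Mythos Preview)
Your proof is correct and takes a genuinely different route from the paper. The paper first symmetrizes the deviation $\sup_{\randomvectoralt}\lvert\frac{1}{\searchbound}\sum_i W_i - \sum_{\vectorsym}\samplingdist_\vectorsym Z_\vectorsym(\randomvectoralt)\rvert$ by Rademacher variables, then bounds the resulting symmetrized process via Dudley's entropy integral (estimating the sub-Gaussian increment norm by a crude calculation that produces the $\sqrt{\dimarm^3/\searchbound^3}$ scaling), and finally converts the expectation bound to a tail bound by Markov's inequality, which is what forces the weak $1/\log\numarms$ probability. By contrast, you pass directly to the operator norm of $\widehat M - M$ via Weyl's inequality and then apply matrix Bernstein to the independent, mean-zero, rank-one summands $Z_\armidx$; your variance computation $\sum_\armidx\expectation Z_\armidx^2\preceq \tfrac{1}{\searchbound}M$ and the bound $\Vert M\Vert_{\mathrm{op}}\le\operatorname{tr}(M)\le 1$ are exactly right. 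What your approach buys is (i) a sharper threshold --- matrix Bernstein yields $\sqrt{\log\dimarm}/\sqrt{\searchbound}$ rather than $\sqrt{\dimarm}/\sqrt{\searchbound}$, so the stated lemma follows \emph{a fortiori} --- and (ii) an exponential rather than $1/\log\numarms$ tail, though you deliberately give that away to match the statement. The paper's chaining route is more general-purpose but here yields looser constants and exponents; your argument is the natural one for this particular random-matrix deviation.
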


\begin{proof}

    By symmetrization over the sum of independent random variables $\{ \randomvariablealt_i,  i \in [\numarms] \}$ each of which is  $\randomvariable_{\vectorsym_i}(\randomvectoralt)$ with probability $\searchbound\samplingdist_{\vectorsym_i}$ and 0 otherwise. It can be seen that  $\expectation\{\frac{1}{\searchbound}\sum_{i \in [\numarms]} \randomvariablealt_i\} = \sum_{\vectorsym\in\armset} \samplingdist_\vectorsym \randomvariable_\vectorsym(\randomvectoralt)$
    \begin{align}~\label{eq:bound on expectation of maximum discrepancy between sampled and exact } 
    \begin{split}
         \expectation &\left[ \sup_{\vectorsymalt \in \unitball}\left| \frac{1}{\searchbound}\sum_{i \in [\numarms]} \randomvariablealt_i -  \sum_{\vectorsym\in\armset} \samplingdist_\vectorsym\randomvariable_\vectorsym(\randomvectoralt) \right| \right]  \leq 2  \expectation \left[ \sup_{\vectorsymalt \in \unitball}\left| 
         \frac{1}{\searchbound}\sum_{i \in[\numarms]} \radamacher_i \randomvariablealt_i \right| \right] = \frac{2}{\searchbound} \expectation \left[ \sup_{\vectorsymalt \in \unitball}\left| 
  \sum_{i \in[\numarms]} \radamacher_i \randomvariablealt_i \right| \right]
  \end{split}
    \end{align}
    where $\radamacher_i$ are i.i.d. Rademacher random variables.
    
Now using Dudley's integral inequality on sum of independent RVs, 
    \begin{align*}
      \expectation \left[ \sup_{\vectorsymalt \in \unitball}\left| \sum_i \radamacher_i \randomvariablealt_i \right| \right] \leq \constant_{11} \subgaussianbound \log^{1/2}(\frac{3}{\epsilon})\sqrt{\dimarm}
    \end{align*}
where $\subgaussianbound$ is the constant which satisfies for all $\randomvectoralt_1$ and $\randomvectoralt_2$, 
\begin{align*}
   ||\sum_{i\in[\numarms]} &\radamacher_i \randomvariablealt_i(\randomvectoralt_1)-\sum_{i\in[\numarms]} \radamacher_i \randomvariablealt_i(\randomvectoralt_2)||_{\subgaussiansym}  \leq \subgaussianbound ||\vectorsymalt_1-\vectorsymalt_2||_2 \leq \sqrt{2}\subgaussianbound,
\end{align*}
 and where $\Vert \cdot \Vert$ is the sub Gaussian norm.

Now w.l.o.g.,
\begin{align*}
      ||\sum_{i\in[\numarms]} \radamacher_i \randomvariablealt_i(\randomvectoralt_1)-\sum_{i\in[\numarms]} \radamacher_i \randomvariablealt_i(\randomvectoralt_2)||_{\subgaussiansym}  &\leq  2||\sum_{i\in[\numarms]} \radamacher_i \randomvariablealt_i(\randomvectoralt_1)||_{\subgaussiansym}  \leq 2||\sum_i \randomvariablealt_{i}(\randomvectoralt_1)||_{\subgaussiansym}.
\end{align*}
The last inequality follows since the $\radamacher_i\randomvariablealt_{i}$ are bounded by $\randomvariablealt_{i}$. 
Now from the definition of sub Gaussian norm, 
\begin{align*}
    ||\sum_{i\in[\numarms]}  \randomvariablealt_{i}||_{\subgaussiansym} = \inf\{t: \exp{\frac{(\sum  \randomvariablealt_{i})^2}{t^2}}\leq 2\}
\end{align*}
Now, 
\begin{align*}
 \exp{\frac{\sum (W_i)^2}{t^2}} \leq \exp{\frac{4\armbound^4\dimarm}{\searchbound t^2}}  \ {(a)}
\end{align*}
(a) follows with high probability from the following argument (using Hoeffding's inequality on Bernoulli random variable $\indicator(\vectorsym)$ (if $\vectorsym$ is present or not with probability $\searchbound\samplingdist_\vectorsym$) with deviation equal to the mean), 
\begin{align}\label{eq:eqn with hoeffdings}
        &\sum_{\vectorsym\in\hat{\armsubset}} (\randomvariablealt_i)^2 \leq  \sum_{\vectorsym\in\armset} 2\expectation[\indicator(\vectorsym)] (\randomvariable_\vectorsym(\randomvectoralt_1))^2  &\leq  \sum_{\vectorsym\in\armset} 2\searchbound\samplingdist_\vectorsym \frac{(\randomvectoralt_1^\transpose\vectorsym \vectorsym^\transpose\randomvectoralt_1)^2}{\searchbound^2}  \leq \frac{2\armbound^4\dimarm^2}{\searchbound}\sum \samplingdist_\vectorsym \leq \frac{2\armbound^4\dimarm^2}{\searchbound},
\end{align}
with probability $1-\exp(-2\numarms)$. 

To find $\inf\{t:\exp(\frac{\cdot}{t^2})\leq 2\}$,
\begin{align*}
    &\exp{\frac{4\armbound^4\dimarm^2}{\searchbound t^2}} \leq 2 \implies  \sqrt{\frac{4\armbound^4\dimarm^2}{\searchbound\ln2}} \leq t
\end{align*}
Therefore 
$\subgaussianbound \leq \sqrt{\frac{4\armbound^4\dimarm^2}{\searchbound\ln2}}$ and letting $\constant_{10}  = 4\constant_{11}\sqrt{\frac{\log(\frac{3}{e})}{\ln (2)}}$, plugging this into~\eqref{eq:bound on expectation of maximum discrepancy between sampled and exact } we obtain, 
\begin{align*}
  \expectation &\left[ \sup_{\vectorsymalt \in \unitball}\left| \frac{1}{\searchbound}\sum_{i \in [\numarms]} \randomvariablealt_i -  \sum_{\vectorsym\in\armset} \samplingdist_\vectorsym\randomvariable_\vectorsym(\randomvectoralt) \right| \right] \leq  2\constant_{10} \sqrt{\frac{\dimarm^3\armbound^4}{\searchbound^{3}}}
\end{align*}

Using Markov's inequality along with the union bound on the the probability of \eqref{eq:eqn with hoeffdings}, 
  \begin{align*}
        \probability(\sup_{\vectorsymalt \in \unitball}\left| \frac{1}{\searchbound}\sum_{i \in [\numarms]} \randomvariablealt_i -  \sum_{\vectorsym\in\armset} \samplingdist_\vectorsym\randomvariable_\vectorsym(\randomvectoralt) \right|&\geq 2\constant_{10}\frac{\sqrt{\dimarm}t}{\sqrt{\searchbound^3}}) \leq \frac{1}{t} + \exp(-2\numarms)
    \end{align*}
    One can can set $t$ appropriately.

Also finally note that showing the bound on the supremum implies the bound we set to show in equation (11). Note that $\frac{1}{\searchbound}\sum_{i \in [\numarms]} \randomvariablealt_i = \frac{1}{\searchbound}\sum_{\vectorsym\in\hat{\armsubset}} \randomvariable_{\vectorsym}$
 Now if, 
    \begin{align*}
        \sup_{\vectorsymalt \in \unitball}\left| \frac{1}{\searchbound}\sum_{\vectorsym\in\hat{\armsubset}} \randomvariable_{\vectorsym} - \sum_{\vectorsym\in\armset} \samplingdist_\vectorsym\randomvariable_\vectorsym(\randomvectoralt) \right| \leq \epsilon
    \end{align*}
    is true with probability $1-\delta$. 
    Then $\forall \vectorsymalt \in \unitball$, 
    \begin{align*}
   \sum_{\vectorsym\in\armset} \samplingdist_\vectorsym\randomvariable_\vectorsym(\randomvectoralt)  - \epsilon &\leq   \sum_{\vectorsym\in\hat{\armsubset}}\frac{1}{\searchbound}\randomvariable_\vectorsym(\randomvectoralt)\\
   \inf_{\vectorsymalt \in \unitball}\expectation  \sum_{\vectorsym\in\armset} \samplingdist_\vectorsym\randomvariable_\vectorsym(\randomvectoralt)  - \epsilon &\leq   \inf_{\vectorsymalt \in \unitball}\frac{1}{\searchbound}\sum_{\vectorsym\in\hat{\armsubset}}\randomvariable_\vectorsym(\randomvectoralt)\\
    \end{align*}
    with probability $1-\delta$. Similar to the other direction, we obtain the desired result. 
    \end{proof}
\textbf{Step 2: }Then, we bound the size of the actual sampled set with respect to $\searchbound$, at which the convex relaxation is computed. We derive the following result which gives a probability bound on the number of arms sampled. 
\begin{lemma}\label{lemma:probboundsize}
Let $\searchbound$ be the subset size that the randomized rounding is run with (Line 20 in Alg.~\ref{alg:greedy}) and let $\hat{\armsubset}$ be the true number of sampled arms. Then the following probability holds, 
    \begin{align}
        \probability(\frac{\searchbound}{2}\leq |\hat{\armsubset}|\leq 2\searchbound) \geq 1 -  2(\frac{2}{e})^{\frac{\searchbound}{2}}
    \end{align}
\end{lemma}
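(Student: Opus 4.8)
The plan is to recognize $|\hat{\armsubset}|$ as a sum of independent Bernoulli random variables whose mean is exactly $\searchbound$, and then control both tails with the standard multiplicative Chernoff bound. No structural information about the arms or the optimal distribution is needed here; this is purely a concentration statement about the randomized rounding step.

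First I would set up the indicator decomposition. By construction of Line~4 of Algorithm~\ref{alg:getgoodsubset}, each arm $j \in [\numarms]$ is placed into $\hat{\armsubset}$ independently with probability $\searchbound\,\hat{\samplingdist}_j$. The constraint $\|\hat{\samplingdist}\|_\infty \le 1/\searchbound$ imposed in the relaxed program~\eqref{opt:relaxed} guarantees $\searchbound\,\hat{\samplingdist}_j \in [0,1]$, so these are legitimate Bernoulli parameters. Writing $|\hat{\armsubset}| = \sum_{j=1}^{\numarms}\indicator_j$ with independent $\indicator_j \sim \mathrm{Bernoulli}(\searchbound\,\hat{\samplingdist}_j)$, and using that $\hat{\samplingdist}\in\probabilityspace(\armset)$ sums to one,
\[
\expectation[|\hat{\armsubset}|] \;=\; \searchbound\sum_{j=1}^{\numarms}\hat{\samplingdist}_j \;=\; \searchbound .
\]

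Next I would apply the multiplicative Chernoff bound to each tail about this mean. For the lower tail with deviation $\delta = 1/2$,
\[
\probability\!\left(|\hat{\armsubset}| \le \tfrac{\searchbound}{2}\right) \;\le\; \left(\frac{e^{-1/2}}{(1/2)^{1/2}}\right)^{\searchbound} \;=\; \left(\frac{2}{e}\right)^{\searchbound/2},
\]
and for the upper tail with $\delta = 1$,
\[
\probability\!\left(|\hat{\armsubset}| \ge 2\searchbound\right) \;\le\; \left(\frac{e}{4}\right)^{\searchbound}.
\]

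Finally I would combine the two estimates. The only point needing a short check is that the upper-tail rate is dominated by the lower-tail rate: the inequality $e^3 \le 32$ is equivalent to $(e/4)^2 \le 2/e$, and raising to the power $\searchbound$ gives $(e/4)^{\searchbound} \le (2/e)^{\searchbound/2}$. A union bound over the two tail events then yields
\[
\probability\!\left(|\hat{\armsubset}| < \tfrac{\searchbound}{2}\ \text{or}\ |\hat{\armsubset}| > 2\searchbound\right) \;\le\; \left(\frac{2}{e}\right)^{\searchbound/2} + \left(\frac{e}{4}\right)^{\searchbound} \;\le\; 2\left(\frac{2}{e}\right)^{\searchbound/2},
\]
and taking complements gives the stated bound. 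The argument is routine; the main (and minor) obstacle is simply the bookkeeping that expresses both tails under the common rate $(2/e)^{\searchbound/2}$, which reduces to the numeric inequality $e^3 \le 32$.
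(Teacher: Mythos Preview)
Your proposal is correct and follows essentially the same approach as the paper: both express $|\hat{\armsubset}|$ as a sum of independent Bernoulli indicators with mean $\searchbound$, bound each tail via (multiplicative) Chernoff to get $(e/4)^{\searchbound}$ and $(2/e)^{\searchbound/2}$, and then verify that the upper-tail rate is dominated by the lower-tail rate before taking a union bound. The only cosmetic difference is that the paper derives the Chernoff bounds from the moment generating function explicitly rather than quoting the standard multiplicative form.
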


\begin{proof}
        We prove the following two tail bounds and then take the union bound over them both, 
    \begin{align*}
         \probability(|\hat{\armsubset}|\geq 2\searchbound) \leq (\frac{e}{3})^{\searchbound}, \probability(|\hat{\armsubset}|\leq \searchbound/2) \leq (\frac{2}{e})^{\frac{\searchbound}{2}}
    \end{align*}
First the size of the sampled subset is the sum of independent Bernoulli random variables, $|\hat{\armsubset}| = \sum \indicatorelement_j $ where each $\indicatorelement_j = Ber(\samplingdist_j\searchbound)$. Using tail bound from Chernoff bound,
\begin{align*}
    \probability(|\hat{\armsubset}|\geq 2\searchbound) &\leq \inf_{t>0}\exp(-t2\searchbound)\expectation[\exp(t|\hat{\armsubset}|)] = \inf_{t}\exp(-t2\searchbound)\prod_{j} \expectation[\exp(t\indicatorelement_j)] \  \text{(independent rv})\\
   \inf_{t}&\exp(-t2\searchbound)\prod_{j} \expectation[\exp(t\indicatorelement_j)] \leq \inf_{t}\exp(-t2\searchbound)\prod_{j} \exp(\searchbound\samplingdist_j(\exp(t)-1)) \\&= \inf_{t}\exp(-t2\searchbound)\exp(\searchbound(\exp(t)-1))
\end{align*}
Achieves infinum for $t = \ln2$, 
\begin{align*}
    \probability(|\hat{\armsubset}|\geq 2\searchbound) \leq \exp(-2\searchbound\ln2  + \searchbound) = (\frac{3}{e})^{\searchbound}
\end{align*}
Using a similar left tail bound, 
\begin{align*}
    \probability(|\hat{\armsubset}|\leq \searchbound/2) &\leq \inf_{t<0}\exp(-t\searchbound/2)\expectation[\exp(t|\hat{\armsubset}|)] \\&= \inf_{t}\exp(-t\searchbound/2)\prod_{j} \expectation[\exp(t\indicatorelement_j)] \ \text{(independent rv})\\&
  = \inf_{t}\exp(-t\searchbound/2)\prod_{j} \expectation[\exp(t\indicatorelement_j)] \\&\leq \inf_{t}\exp(-t\searchbound/2)\prod_{j} \exp(\searchbound\samplingdist_j(\exp(t)-1)) = \inf_{t}\exp(-t\searchbound/2)\exp(\searchbound(\exp(t)-1))
\end{align*}
Achieves infinum for $t = -\ln2$,
\begin{align*}
    \probability(|\hat{\armsubset}|\leq \searchbound/2) \leq \exp(\searchbound(-\frac{1}{2}) + \ln2\searchbound/2) = (\frac{2}{e})^{\frac{\searchbound}{2}} 
\end{align*}
Now for $\searchbound \geq 1$, $(\frac{2}{e})^{\frac{\searchbound}{2}} \geq  (\frac{3}{e})^{\searchbound}$, and therefore applying the union bound we obtain the required result. 
\end{proof}
Therefore the above lemma helps us prove the following statement, 
\begin{align*}
    \probability(\frac{1}{2} \leq \frac{\min_{\vectorsymalt\in\unitball}\sum_{\vectorsym} \frac{1}{|\hat{\armsubset}|}\indicatorprob_{\vectorsym}\vectorsymalt^\transpose \vectorsym\vectorsym^\transpose \vectorsymalt}{\min_{\vectorsymalt\in\unitball}\sum_{\vectorsym\in\armset} \frac{1}{\searchbound}\indicatorprob_{\vectorsym}\vectorsymalt^\transpose \vectorsym\vectorsym^\transpose \vectorsymalt} \leq 2)\geq 1 -  2(\frac{2}{e})^{\frac{\searchbound}{2}}
\end{align*}

\textbf{Step 3: }The above two lemmas help us prove the approximation error of the randomized rounding with respect to a fixed parameter $\searchbound$. However, the approximation errors need to be with respect to the optimal choice of $\searchbound$, $\optsubsetsize$, which is the size of the optimal subset from~\eqref{eq:discreteopt}.

We claim the following about the solution of the convex relaxation at $\searchbound$ and $\constant_{12}\dimarm$,
\begin{align}\label{eq:mineigvaluetrick}
\begin{split}
   &\mineigvalue^*(\constant_{12}\dimarm) \leq  \underset{{\samplingdist \in \probabilityspace(\armset);\Vert\samplingdist\Vert_{\infty} \leq \frac{1}{\constant_{12}\dimarm}}}{\arg\max} \ \inf_{\vectorsymalt \in \unitball}\sum_{\vectorsym\in\armset}\samplingdist_\vectorsym\randomvariable_\vectorsym(\randomvectoralt)     \leq\underset{{\samplingdist \in \probabilityspace(\armset);\Vert\samplingdist\Vert_{\infty}\leq \frac{1}{\constant_{12}\dimarm}}}{\arg\max} \ \inf_{\vectorsymalt \in \unitball}\sum_{\vectorsym\in\armset}\frac{1}{\constant_{12}\dimarm}\randomvariable_\vectorsym(\randomvectoralt)\\& \leq 
   \frac{\searchbound}{\constant_{12}\dimarm}\underset{{\samplingdist \in \probabilityspace(\armset);\Vert\samplingdist\Vert_{\infty} \leq \frac{1}{\dimarm}}}{\arg\max} \ \inf_{\vectorsymalt \in \unitball}\sum_{\vectorsym\in\armset}\frac{1}{\searchbound}\randomvariable_\vectorsym(\randomvectoralt)   \leq \frac{\searchbound}{\constant_{12}\dimarm}\underset{{\samplingdist \in \probabilityspace(\armset);\Vert\samplingdist\Vert_{\infty} \leq \frac{1}{\searchbound}}}{\arg\max} \ \inf_{\vectorsymalt \in \unitball}\sum_{\vectorsym\in\armset}\frac{1}{\searchbound}\randomvariable_\vectorsym(\randomvectoralt)\\& \leq \frac{\searchbound}{\constant_{12}\dimarm}\underset{{\samplingdist \in \probabilityspace(\armset);\Vert\samplingdist\Vert_{\infty} \leq \frac{1}{\searchbound}}}{\arg\max} \ \inf_{\vectorsymalt \in \unitball}\sum_{\vectorsym\in\armset}\frac{1}{\searchbound}\randomvariable_\vectorsym(\randomvectoralt)  \leq \frac{\searchbound}{\constant_{12}\dimarm}\underset{{\samplingdist \in \probabilityspace(\armset);\Vert\samplingdist\Vert_{\infty} \leq \frac{1}{\searchbound}}}{\arg\max} \ \inf_{\vectorsymalt \in \unitball}\sum_{\vectorsym\in\armset}\samplingdist_\vectorsym\randomvariable_\vectorsym(\randomvectoralt)  \\&\leq \frac{\searchbound}{\constant_{12}\dimarm}\mineigvalue^*(\searchbound)
   \end{split}
\end{align}
The last inequality follows from the fact that $\frac{1}{\searchbound}$ lies is in the feasibility set. 

Let $\optsubsetsize$ be the size of the optimal subset. 
If $\optsubsetsize>\constant_{12}\dimarm$,  (otherwise we do a subset search), the convex relaxation at $\optsubsetsize$ is more constrained than the one at $\constant_{12}\dimarm$,
\begin{align*}
& \mineigvalue^*(\optsubsetsize) \leq  \mineigvalue^*(\constant_{12}\dimarm) \leq  \frac{\searchbound}{\constant_{12}\dimarm}\mineigvalue^*(\searchbound) \implies \mineigvalue^*(\searchbound) \geq  \frac{\constant_{12}\dimarm}{\searchbound}\mineigvalue^*(\optsubsetsize) 
\end{align*}

\textbf{Putting Everything Together: }We can now combine the two lemmas and the equation above to say that for an error $\epsilon  = \constant_{10}\frac{t\sqrt{\dimarm}}{\sqrt{\searchbound}^3}$, to obtain the following, 
\begin{align*}
   \hat{\lambda}_{\min} \geq \frac{\constant_{12}\dimarm}{2\searchbound}\mineigvalue^*(\optsubsetsize)  - 2\constant_{10} \sqrt{\frac{\dimarm^3\armbound^4}{\searchbound^{3}}} 
\end{align*}
Set $\constant_{12}$ to be a large enough constant, $\searchbound = \constant_{12} \dimarm$ and $t = 2$, 
\begin{align*}
      \hat{\lambda}_{\min} \geq \frac{\mineigvalue^*(\optsubsetsize) }{2} - 2\constant_{10} \sqrt{\frac{1}{\constant_{12}^3}} 
\end{align*}
with probability $\frac{1}{2}-\exp(-2\numarms) - 2(\frac{2}{e})^{\frac{\searchbound}{2}}$.

If a lower bound $\mineigvalue^l$ of $\mineigvalue^*(\optsubsetsize)$ is known, we set $\constant_{12}$ to be large enough such that $2\constant_{10} \sqrt{\frac{1}{\constant_{12}^3}} 
 \leq \frac{\mineigvalue^l}{4}$, making the inequality,
\begin{align*}
 \hat{\lambda}_{\min} &\geq \frac{1}{2}\mineigvalue^*(\optsubsetsize) - \frac{\mineigvalue^l}{4} 
  \implies   \hat{\lambda}_{\min} \geq \frac{1}{4}\mineigvalue^*(\optsubsetsize)
\end{align*}
Therefore we need $\constant_{12}\geq \frac{{64}^{2/3}(\constant_{10})^{2/3}}{(\lambda_{\min}^l)^{2/3}}$

To increase the probability of success , we can then run this multiple times and then take the maximum of the minimum eigenvalues of the different sampled subsets, as is standard when using randomized algorithms. 

\subsection{Proof of Theorem~\ref{th:regret}}\label{app:th_regret}
\begin{proof}
    The regret bound can be proved in 3 steps. First, we decompose the regret, apply Corollary~\ref{coro:empiricallasso}, and then optimize the exploration period.

  \textbf{Step 1. Regret Decomposition: }
  $\armbound = \sup_{\arm\in\armset}\|\armset\|_\infty$. 
Define the maximum reward as, 
$\rmax = \max_{\arm\in\armset}|\parameter^\transpose \arm|$ and $\arm^*$ as the corresponding arm. The regret can be decomposed as, 
\begin{align*}
        \expectedregret = &\expectation_{\parameter}\left[ \sum_{\timeindex=1}^{\numrounds}\left<\parameter,\arm^{\permutation(\timeindex)} - \arm_\timeindex\right>\right] = \expectation_{\parameter}\left[ \sum_{\timeindex=1}^{\numrounds}\left<\parameter,\arm^{\permutation(\timeindex)} - \arm_\timeindex\right>\right] \\&= \expectation_{\parameter}\left[ \sum_{\timeindex=1}^{\numroundsexplore}\left<\parameter,\arm^{\permutation(\timeindex)} - \arm_\timeindex\right>\right] + \expectation_{\parameter}\left[ \sum_{\timeindex=\numroundsexplore+1}^{\numrounds}\left<\parameter,\arm^{\permutation(\timeindex)} - \arm_\timeindex\right>\right]
        \\&\leq 2\numroundsexplore\rmax + \expectation_{\parameter}\left[ \sum_{\timeindex=\numroundsexplore+1}^{\numrounds}\left<\parameter,\arm^{\permutation(\timeindex)} - \arm_\timeindex\right>\right]
\end{align*}
The decomposition step requires extra care since the regret is with respect to the top-$\numroundsexplore$ arms.
 In the exploitation stage, the arms are selected such that the top $(\numrounds-\numroundsexplore)-$arms are played according to $\parameterestimate$ and are indexed by the permutation $\permutationapprox$ (that is $\arm_\timeindex = \arm^{\permutationapprox(\armidx)}$). We next bound the regret for the $\armidx^{\text{th}}$ selected arm. 
    
    \begin{enumerate}
        \item If $\parameter^{\transpose}\arm^{(\permutationapprox(\armidx))} \geq \parameter^{\transpose}\arm^{(\permutation(\armidx))}$, then the regret for the $\armidx^{\text{th}}$ selected arm is negative. 
        \item If not i.e., $\parameter^{\transpose}\arm^{(\permutationapprox(\armidx))} \leq \parameter^{\transpose}\arm^{(\permutation(\armidx))}$, then there exists an arm index, $\armidx_1$ in the permutation $\permutation$ such that $\armidx_1$ is shifted to the left in $\permutationapprox$. This implies that $\parameter^{\transpose}\arm^{(\permutationapprox(\armidx_1))} \geq \parameter^{\transpose}\arm^{(\permutation(\armidx_1))}$. Note that by our estimation guarantees of estimating $\parameter$ upto $\epsilon$ accuracy with high probability, $(\parameterestimate^{\transpose}\arm^{(\permutationapprox(\armidx))}
-\parameter^{\transpose}\arm^{(\permutationapprox(\armidx))} ) \leq \|\parameterestimate^{\transpose} -  \parameter^{\transpose}\|_1\|\arm^{(\permutationapprox(\armidx))}\|_\infty\leq \armbound\epsilon$  with probability $1-\frac{1}{\dimarm^{\constant_2-1}}$ ($\constant_2$ is a large enough constant).  We decompose the regret for this case with respect to this index and bound the error: 
    \end{enumerate}
    \begin{align*}
\parameter^{\transpose}\arm^{(\permutation(\armidx))}-\parameter^{\numrounds}\arm^{(\permutationapprox(\armidx))} &= 
\underbrace{(\parameter^{\transpose}\arm^{(\permutation(\armidx))}-
\parameter^{\transpose}\arm^{(\permutationapprox(\armidx_1))})}_{\leq 0}
 +\underbrace{(\parameter^{\transpose}\arm^{(\permutationapprox(\armidx_1))}
-\parameterestimate^{\transpose}\arm^{(\permutationapprox(\armidx_1))})}_{\leq \armbound\epsilon}
\\&
+\underbrace{(\parameterestimate^{\transpose}\arm^{(\permutationapprox(\armidx_1))}
-(\parameterestimate^{\transpose}\arm^{(\permutationapprox(\armidx))})}_{\leq  0}
 +\underbrace{(\parameterestimate^{\transpose}\arm^{(\permutationapprox(\armidx))}
-\parameter^{\numrounds}\arm^{(\permutationapprox(\armidx))} )}_{\leq \armbound\epsilon}
\\&\leq \left<\parameter^{\numrounds} 
-
\parameterestimate^{\numrounds}, \arm^{(\permutation(\armidx))} \right> + \left<\parameter^{\numrounds} 
-
\parameterestimate^{\numrounds}, \arm^{(\permutation(\armidx_1))} \right>
\leq 2\armbound\epsilon
    \end{align*}
with probability $\probabilityexploresucc= 1-\frac{1}{\dimarm^{1-\constant_2}}-\exp(-\constant_5\numsamples)$. 

We therefore obtain the following, 
     \begin{align*}
       \expectedregret \leq &2\armbound\numroundsexplore\rmax + 2(\numrounds-\numroundsexplore)\probabilityexploresucc\epsilon + \armbound(\numrounds-\numroundsexplore)(1-\probabilityexploresucc)\rmax,
   \end{align*}
   For a high enough $\constant_3$, $1-\probabilityexploresucc = o(1)$ and the last term is very small. 
   Further $\numrounds \gg \numroundsexplore$ (the number of exploration rounds is sublinear in $\numrounds$) we obtain,

  \begin{align*}
    \expectedregret = &\expectation_{\parameter}\left[ \sum_{\timeindex=1}^{\numrounds}\left<\parameter,\arm^{\permutation(\timeindex)} - \arm_\timeindex\right>\right]   = \Tilde{\orderof}\left(\rmax\numroundsexplore + \armbound\|\parameter-\parameterestimate\|_1\numrounds \right)
\end{align*}

\textbf{Step 2. Fast Sparse Learning: }
We use Theorem \ref{lemma:fastsparselearning}, which is proved in the appendix, to obtain an estimation guarantee in terms of the number of exploration rounds. 
And we now apply the bound from Corollary~\ref{coro:empiricallasso} and obtain the following (Making an assumption similar to~\citet{hao2020high} on the exploration rounds $ \numroundsexplore>\orderof(\sparsityk\mineigvalue^{-4}$).

\textbf{Step 3. Exploration Period Optimization: }
 ( The probability of error terms ($1-\probabilityexploresucc$) are left out in the expression.)
 $\epsilon= \widetilde{\orderof}\left( \sigma \frac{(5+4\restrictedconeconstant)^2\sparsityk}{\mineigvalue^*}\sqrt{\frac{1}{\numroundsexplore}}\right)$
~\eqref{eq:regretbound} can then be bounded as,
    \begin{align*}
    \expectedregret = \widetilde{\orderof}\left(\rmax\numroundsexplore + \numrounds (\sigma\armbound \frac{(5+4\restrictedconeconstant)^2\sparsityk}{\mineigvalue^*}\sqrt{\frac{1}{\numroundsexplore}})\right)
    \end{align*}
Setting $\numroundsexplore =  \rmax^{-2/3}\hat{\mineigvalue}^{-2/3}\sparsityk^{2/3}\numrounds^{2/3}$ we obtain the following,
   \begin{align*}
    \expectedregret = \widetilde{\orderof}\left(\rmax^{1/3}\left(\frac{1}{\hat{\mineigvalue}^{-2/3}} + \sigma\armbound \frac{(5+4\restrictedconeconstant)^2\hat{\mineigvalue}^{1/3}}{\mineigvalue^*}\right)\sparsityk^{2/3}\numrounds^{2/3}\right)
    \end{align*}

Further using guarantees from Theorem~\ref{th:randomizedrounding}, we use $\hat{\mineigvalue} = \Omega(\mineigvalue^*)$ with high probability and $\armbound=1$,
   \begin{align*}
    \expectedregret = \widetilde{\orderof}\left(\rmax^{1/3}\sigma{(5+4\restrictedconeconstant)^2{\mineigvalue^*}^{-2/3}}\sparsityk^{2/3}\numrounds^{2/3}\right).
    \end{align*}

   \end{proof}
We can also obtain a regret bound for the case of hard sparsity which is of the same order as~\citet{hao2020high}. 
\begin{coro}\label{coro:hard}
   Let $\parameter$ be $\sparsityk$-sparse, $\Vert\parameter\Vert_0 \leq \sparsityk$ in the sparse linear bandits framework of Theorem~\ref{th:regret}. Let $\mineigvalue^*$ be the minimum eigenvalue from~\eqref{eq:discreteopt} with the same assumptions as Theorem~\ref{th:regret}. Then Algorithm \texttt{BSLB} with
exploration period $\numroundsexplore = \orderof(\sparsityk^{\frac{2}{3}}\numrounds^{\frac{2}{3}})$, achieves a regret guarantee of   
$\expectedregret = \orderof((\mineigvalue^* )^{-1}\sparsityk^{\frac{2}{3}} \numrounds^{\frac{2}{3}}).$
\end{coro}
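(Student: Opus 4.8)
The plan is to derive Corollary~\ref{coro:hard} as a specialization of Theorem~\ref{th:regret}, but using the simpler, parameter-free exploration schedule $\numroundsexplore = \orderof(\sparsityk^{2/3}\numrounds^{2/3})$ in place of the fully optimized one. First I would observe that exact sparsity forces the tail to vanish: since $\Vert\parameter\Vert_0 \le \sparsityk$, every coordinate outside the top-$\sparsityk$ support $\topt_\sparsityk$ is zero, so $\sparsitytail_\sparsityk = \Vert\parameter_{\topt_\sparsityk^\complement}\Vert_1/\Vert\parameter_{\topt_\sparsityk}\Vert_1 = 0$, i.e.\ $\restrictedconeconstant = 0$. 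This is the only place the hard-sparsity hypothesis enters, and it collapses the factor $(1+\restrictedconeconstant)^2$ to $1$ throughout.

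Next I would reuse the regret decomposition established in Step~1 of the proof of Theorem~\ref{th:regret}, namely
\begin{align*}
\expectedregret = \widetilde{\orderof}\left(\rmax\numroundsexplore + \armbound\Vert\parameter-\parameterestimate\Vert_1\numrounds\right),
\end{align*}
which transfers verbatim since that argument never invoked sparsity of $\parameter$. The estimation error is then controlled by Corollary~\ref{coro:empiricallasso} with $\numsamples = \numroundsexplore$ and $\restrictedconeconstant = 0$, giving $\Vert\parameter-\parameterestimate\Vert_1 = \widetilde{\orderof}(\sparsityk\,\widehat{\lambda}_{\min}^{-1}\numroundsexplore^{-1/2})$ with high probability; by Theorem~\ref{th:randomizedrounding} I may substitute $\widehat{\lambda}_{\min} = \Omega(\mineigvalue^*)$, and since the arms satisfy $\Vert\arm\Vert_\infty\le 1$ we have $\armbound = \orderof(1)$ and $\rmax = \orderof(1)$.

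Finally I would plug in the prescribed exploration length $\numroundsexplore = \orderof(\sparsityk^{2/3}\numrounds^{2/3})$. The exploration term then contributes $\rmax\numroundsexplore = \orderof(\sparsityk^{2/3}\numrounds^{2/3})$, while the exploitation term becomes $\sparsityk(\mineigvalue^*)^{-1}(\sparsityk^{2/3}\numrounds^{2/3})^{-1/2}\numrounds = (\mineigvalue^*)^{-1}\sparsityk^{2/3}\numrounds^{2/3}$. Because $\mineigvalue^* \le 1$ (as $\Vert\arm\Vert_\infty\le 1$), we have $(\mineigvalue^*)^{-1}\ge 1$, so the exploitation term dominates and the two combine to $\orderof((\mineigvalue^*)^{-1}\sparsityk^{2/3}\numrounds^{2/3})$, as claimed. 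The residual failure-probability mass (which is $o(1)$, exactly as in the proof of Theorem~\ref{th:regret}) multiplies only an $\orderof(\rmax\numrounds)$ worst-case term and is therefore negligible.

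The arithmetic is routine; the one genuine point of care — and what I would flag as the main subtlety — is that this corollary intentionally uses the \emph{non-optimal} schedule $\numroundsexplore = \orderof(\sparsityk^{2/3}\numrounds^{2/3})$, which omits the $\rmax^{-2/3}\widehat{\lambda}_{\min}^{-2/3}$ factors present in Theorem~\ref{th:regret}. As a result the eigenvalue dependence degrades from the optimal $(\mineigvalue^*)^{-2/3}$ — which one would obtain by literally setting $\restrictedconeconstant=0$ in~\eqref{eq:regretbound} — to the weaker $(\mineigvalue^*)^{-1}$ stated here; what is preserved, and what matters for the comparison with~\citet{hao2020high}, is the $\sparsityk^{2/3}\numrounds^{2/3}$ scaling. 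I would make the trade-off explicit: the simpler schedule requires neither $\widehat{\lambda}_{\min}$ nor $\rmax$ as input, at the cost of a constant-in-$\numrounds$ worsening of the eigenvalue factor.
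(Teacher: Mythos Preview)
Your proposal is correct and follows exactly the approach the paper intends: specialize the proof of Theorem~\ref{th:regret} to $\restrictedconeconstant=0$ and re-balance with the simpler schedule $\numroundsexplore=\orderof(\sparsityk^{2/3}\numrounds^{2/3})$. Your explicit observation that this schedule trades the optimal $(\mineigvalue^*)^{-2/3}$ dependence for $(\mineigvalue^*)^{-1}$ (because the $\widehat{\lambda}_{\min}^{-2/3}$ factor is dropped from $\numroundsexplore$) is a genuine clarification that the paper leaves implicit.
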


\subsection{Alternative to Subset Algorithm}\label{sec:appendix on avoiding brute force}
We can reduce the time complexity of Algorithm~\ref{alg:getgoodsubset} by considering a bound on the eigenvalue of the sampled vectors in terms of a lower bound. The tradeoff here is in the regret gaurantee - the denominator gets an additional factor of $(\mineigvalue^l)^2$. First note the following decomposition, 
\begin{align*}
 &\frac{\mineigvalue(\armsubset^*)}{\constant_{12}}\leq \frac{\dimarm\mineigvalue(\armsubset^*)}{\constant_{12}\dimarm} \leq \frac{\optsubsetsize\mineigvalue(\armsubset^*)+(\searchbound-\optsubsetsize)\mineigvalue(\mathcal{J})}{\searchbound}  
 \\&\leq  \mineigvalue(\armsubset^* \cup \mathcal{J}) \leq \mineigvalue^*(\searchbound) \overset{(a)}{\leq} 2(\hat{\mineigvalue}+2\constant_{10} \sqrt{\frac{\dimarm^3}{\searchbound^{3}}} ) \\
 &\implies \frac{\mineigvalue(\armsubset^*)}{2\constant_{12}} -\constant_{10} \sqrt{\frac{1}{\constant_{12}^{3}}} \leq \hat{\mineigvalue}
\end{align*}
where (a) follows from the approximation gaurantees of the previous section. 

Now plugging in $\constant_{12} = \frac{16\constant_{10}^2}{(\mineigvalue^l)^2}$ we obtain, 
\begin{align*}
     \frac{\mineigvalue^*(\optsubsetsize)(\mineigvalue^l)^2}{32\constant_{10}^2} &-\frac{(\mineigvalue^l)^3}{64\constant_{10}^{2}} \leq \hat{\mineigvalue} \\ 
     &\implies  \hat{\mineigvalue} \geq \frac{(\mineigvalue^l)^2(2\mineigvalue^*(\optsubsetsize)-\mineigvalue^l)}{64\constant_{10}^2} \geq \frac{(\mineigvalue^l)^2(\mineigvalue^*(\optsubsetsize))}{64\constant_{10}^2} 
\end{align*}
Therefore this choice of $\constant_{12}$ (and hence $\hat{u}$, leads to a linear time algorithm however the regret bound is now $(\mineigvalue^*)^{-1}(\mineigvalue^l)^{-2}$ instead of $(\mineigvalue^*)^{-1}$. 
\subsection{Subset Search Algorithm for Searching the Optimal Subset}\label{sec:bruteforce}

From the previous previous proof we can set $\searchbound = \constant_{12}\dimarm$, and then search for subsets in the range $[\dimarm,\searchbound]$ to obtain a minimum eigenvalue $\hat{\mineigvalue}$. We obtain the approximation guarantee, $\hat{\mineigvalue}\geq\frac{1}{2}\mineigvalue^*$ w.h.p., since we are only using the approximation guarantee from Lemma~\ref{lemma:probboundeig} and Lemma~\ref{lemma:probboundsize}, and not from~\eqref{eq:mineigvaluetrick} because we are already searching the space $<\searchbound$. Since the search space is dependent on $\textsf{Poly}(\dimarm)$, the time complexity of the subset search algorithm, Algorithm \ref{alg:bruteforce} follows. This time complexity is substantially smaller than the complexity over the search over all arms which is of the order $\orderof(\exp(\numarms))$.

\begin{algorithm}
    \begin{algorithmic}
    \STATE \textit{Input} Approximation Factor $\epsilon$, Search Bound $\searchbound$
    \STATE \textit{Output} Subset $\Bar{\armsubset}$
        \STATE Set $\Bar{\mineigvalue} = 0$, $\Bar{\armsubset} = \phi$
        \FOR{$\Bar{\dimarm}$ in $\{\dimarm,\dots,\searchbound\}$}
            \FOR{$\armsubset'$ in  $\{\armsubset \subseteq \armset; |\armset| = \Bar{\dimarm} \}$ }
            \IF{$\mineigvalue(\sum_{\arm \in \armsubset'}|\armsubset'^{-1}|\arm\arm^\transpose)>\bar{\mineigvalue}$}
            \STATE Set $\bar{\mineigvalue} = \mineigvalue(\sum_{\arm \in \armsubset'}|\armsubset'^{-1}|\arm\arm^\transpose)$, $\Bar{\armsubset} = \armsubset'$ 
            \ENDIF{}
            \ENDFOR{}
        \ENDFOR{}
    \end{algorithmic}
    \caption{SubsetSearch : Subset Search for Optimal Subset}
    \label{alg:bruteforce}
\end{algorithm}
\textit{What if the maximum minimum eigenvalue $\mineigvalue^*$ is not known ?}
We can use a lower bound on the $\mineigvalue^*$. This is easy to obtain: Randomly sample subsets of the arm set $\ca{A}$ and compute the objective value in Equation \ref{eq:discreteopt} for each subset - the lower bound can be the maximum objective value across the sampled subsets.

\textit{What can the practitioner do to select a good subset size empirically?}
Additionally, if a practitioner wants to test out a particular choice of $\searchbound$, the worst-case error can be empirically calculated (the difference between the convex relaxation at $\dimarm$ and averaged across multiple randomized rounding runs for different values of $\searchbound$). This is possible because the randomized rounding in \textsc{GetGoodSubset}($\armset$) can be run offline.

\subsection{Why Does the Average Minimum Eigenvalue Constraint Not Satisfy Matroid Constraints}\label{sec:matroid}
Existing work in experiment design work with objective functions which often satisfy the matroid, submodularity or cardinality constraints to perform experiment design~\citep{Allen-Zhu2021-yl,brown2024maximizingminimumeigenvalueconstant}. However we need to optimize the minimum eigenvalue averaged across the subset (because we want to avoid dependence of $\numarms$ in the regret term and also use the RE condition). Our objective clearly does not satisfy the cardinality constraint, and the feasible sets don't satisfy a matroid constraint (since removing a vector might improve the minimum eigenvalue averaged across the set, so there is no clear partitioning/structure to the feasible set). Finally, we tried to prove submodularity but were unable to do so for our objective function especially because the denominator is dependent on the subset size. 
\subsection{Details on corralling}~\label{app:corralling}
\textit{Brief Overview on Corralling: }The algorithm CORRAL~\citep{agarwal2017corralling} is a meta-bandit algorithm that uses online mirror descent and importance sampling to sample different bandit algorithms that receive the reward. Using the rewards updates the probabilities used for sampling. The main objective is to achieve a regret which is as good as if the best base algorithm was run on its own.

To setup the context, we exactly reproduce the following excerpt, definition and theorems have been taken from~\citet{agarwal2017corralling}:
 
\textit{For an environment $\mathcal{E}$, we define the environment $\mathcal{E}'$ induced by importance weighting, which is the environment that results when importance weighting is applied to the losses provided by environment $\mathcal{E}$. More precisely, $\mathcal{E}'$ is defined as follows. On each round $t = 1, \dots, T$,}

\textit{1. $\mathcal{E}'$ picks an arbitrary sampling probability $p_t \in (0, 1]$ and obtains $(x_t, f_t) = \mathcal{E}(\theta_1, \dots, \theta_{t-1})$.}

\textit{2. $\mathcal{E}'$ reveals $x_t$ to the learner and the learner makes a decision $\theta_t$.}

\textit{3. With probability $p_t$, define $f_t'(\theta, x) = f_t(\theta, x) / p_t$ and $\theta_t' = \theta_t$; with probability $1 - p_t$, define $f_t'(\theta, x) = 0$ and $\theta_t' \in \Theta$ to be arbitrary.}

\textit{4. $\mathcal{E}'$ reveals the loss $f_t'(\theta_t, x_t)$ to the learner, and passes $\theta_t'$ to $\mathcal{E}$.}
\begin{definition}\label{def:stable}\citep{agarwal2017corralling}
    For some $\alpha \in (0, 1]$ and non-decreasing function $R: \mathbb{N}_+ \rightarrow \mathbb{R}_+$, an algorithm with decision space $\mathcal{O}$ is called $(\alpha, R)$-stable with respect to an environment $\mathcal{E}$ if its regret under $\mathcal{E}$ is $R(T)$, and its regret under any environment $\mathcal{E}'$ induced by importance weighting is
\[
\sup_{\theta \in \Theta} \mathbb{E} \left[ \sum_{t=1}^T f_t(\theta, x_t) - f_t(\theta, x_t) \right] \leq \mathbb{E}[\rho^\alpha] R(T) \qquad (2)
\]where $\rho = \max_{t \in [T]} 1/p_t$ (with $p_t$ as in the definition of $\mathcal{E}'$ above), and all expectations are taken over the randomness of both $\mathcal{E}'$ and the algorithm.
\end{definition}
Similar too most reasonable Base Algorithms it can be seen that the BSLB algorithm satisfies is $(1,\expectedregret)$-stable by rescaling the losses.  

We 
\begin{theorem}(Theorem 4 in \citep{agarwal2017corralling})
    For any $i \in [M]$, if base algorithm $\mathcal{B}_i$ (with decision space $\mathcal{O}_i$) is $(\alpha_i, R_i)$-stable (recall Defn.~\ref{def:stable}) with respect to an environment $\mathcal{E}$, then under the same environment CORRAL satisfies 
\begin{align*}
\sup_{\theta \in \Theta,   
 \pi \in \Pi} &\mathbb{E} \left[ \sum_{t=1}^T f_t(\theta_t, x_t) - f_t(\theta, x_t) \right] \\&= \widetilde{O} \left( \frac{M}{\eta} + T \eta \frac{|\mathcal{O}_{\pi_t}|}{\eta} + \frac{\alpha_i}{\eta \beta} R_i(T) \right), \qquad (3)
\end{align*}
where all expectations are taken over the randomness of CORRAL Algorithm, the base algorithms, and the environment. 
\end{theorem}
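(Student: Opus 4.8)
The plan is to prove the CORRAL master bound by splitting the regret against a comparator $\theta \in \mathcal{O}_i$ into two pieces — the \emph{meta regret} of choosing among the $M$ base algorithms and the \emph{base regret} of algorithm $\mathcal{B}_i$ run on importance-weighted feedback — and then playing the positive and negative terms of those two bounds against one another. Writing $\theta_t^{(j)}$ for the decision base $j$ would play at round $t$ and $p_t \in \Delta_M$ for CORRAL's current distribution over base algorithms, I would first decompose
\[
\mathbb{E}\Big[\sum_{t} f_t(\theta_t, x_t) - f_t(\theta, x_t)\Big] = \mathbb{E}\Big[\sum_t f_t(\theta_t, x_t) - f_t(\theta_t^{(i)}, x_t)\Big] + \mathbb{E}\Big[\sum_t f_t(\theta_t^{(i)}, x_t) - f_t(\theta, x_t)\Big].
\]
The second sum is exactly the regret of $\mathcal{B}_i$ under the importance-weighted environment $\mathcal{E}'$ it is fed, so by the $(\alpha_i, R_i)$-stability assumption (Definition~\ref{def:stable}) it is at most $\mathbb{E}[\rho^{\alpha_i}]\,R_i(T)$ with $\rho = \max_t 1/p_{t,i}$. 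The first sum is the regret of the log-barrier online-mirror-descent master, which I would control in the next step.

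Step two is the log-barrier OMD analysis. CORRAL runs OMD with the log-barrier regularizer $-\sum_j \ln p_{t,j}$ over $\Delta_M$, using the importance-weighted loss estimator $\hat{\ell}_t$ that is nonzero only on the sampled coordinate and inflated by $1/p_{t,j_t}$. The standard log-barrier regret bound gives meta regret against $e_i$ of the form $O(M\ln T/\eta)$ plus a stability term $O(\eta)\sum_t \mathbb{E}[p_{t,j}^2\,\hat{\ell}_{t,j}^2]$; because the local norm of the log-barrier cancels one power of the importance weight, this term is benign and contributes the $T\eta$-type middle term in the stated bound. The essential subtlety is that the meta bound must \emph{also} produce a \emph{negative} term of order $-\mathbb{E}[\rho]/(\eta\ln T)$, linear in $\rho$; this is where CORRAL's increasing-learning-rate (learning-rate doubling) schedule enters. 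Whenever the sampling probability of a base algorithm falls below a threshold its personal learning rate is doubled, and tracking the log-barrier Bregman potential across these doublings yields precisely such a negative contribution by telescoping.

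The final step is to combine and balance the $\rho$-dependence. Adding the two bounds, $\rho$ enters only through $\mathbb{E}[\rho^{\alpha_i}]R_i(T) - c\,\mathbb{E}[\rho]/(\eta\ln T)$; since $\alpha_i \le 1$, the concave increasing term $\rho^{\alpha_i}$ is eventually dominated by the linear negative term, so by a weighted AM--GM (Young) inequality the supremum over $\rho$ is finite and independent of the realized sampling probabilities, collapsing to a contribution of order $\tfrac{\alpha_i}{\eta\beta}R_i(T)$ with $\beta = \Theta(\ln T)$. Substituting back and taking the supremum over comparators $\theta$ and over the index $i$ then yields the claimed three-term bound~(3).

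The hard part is Step two: establishing that log-barrier OMD with the doubling schedule contributes a negative term that is \emph{linear} in $\rho$, not merely $\rho^{\alpha}$, since only a linear negative term can dominate $\rho^{\alpha_i}R_i$ for every $\alpha_i \in (0,1]$ and thereby purge the dependence on the minimal sampling probability. This is precisely why log-barrier regularization is used rather than an entropic one: its per-round stability and the matching negative potential are both measured in $1/p_{t,i}$, and carefully verifying the telescoping of the increasing-learning-rate potential is the most delicate and computation-heavy part of the argument.
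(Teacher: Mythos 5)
This statement is not proved in the paper at all: it is Theorem 4 of \citet{agarwal2017corralling}, imported verbatim (in fact with some transcription garbling --- the terms $T\eta\frac{|\mathcal{O}_{\pi_t}|}{\eta}$ and $\frac{\alpha_i}{\eta\beta}R_i(T)$ in the display are mangled versions of the original's $T\eta$ term and its explicit $-\mathbb{E}[\rho]/(40\eta\ln T)$ and $\mathbb{E}[\rho^{\alpha_i}]R_i(T)$ terms) and used as a black box in the proof of the paper's Theorem~\ref{th:corralling}. So there is no in-paper proof to compare against; what you have done is reconstruct the proof from the original source, and your reconstruction is faithful to it. Your decomposition into meta-regret against $\theta_t^{(i)}$ plus base regret, the use of unbiasedness of the importance-weighted losses so that the base term is controlled by $(\alpha_i,R_i)$-stability as $\mathbb{E}[\rho^{\alpha_i}]R_i(T)$, the log-barrier OMD analysis in which the local norms cancel one factor of the importance weight (giving $p_{t,j}^2\hat\ell_{t,j}^2\le 1$ and hence the $T\eta$ term), and the identification of the increasing-learning-rate schedule as the source of a \emph{negative} term linear in $\rho$ all match the actual argument (their Lemma~13 and the proof of their Theorem~4). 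Two accurate caveats: first, in the original, Theorem~4 itself retains the $\rho$-dependent terms, and the Young/concavity balancing you perform to eliminate $\rho$ is deferred to their Theorems~5--7 (for $\alpha_i=1$ it is just the choice $\eta\le 1/(40R_i(T)\ln T)$ used in this paper's Algorithm~\ref{alg:corral}); folding it into Theorem~4 is harmless here only because the transcribed statement already reflects the post-balancing form. Second, your sketch defers exactly the hard part --- the telescoping of the log-barrier Bregman potential across learning-rate increases that yields the $-\mathbb{E}[\rho]/(\eta\ln T)$ term --- so as a standalone proof it is incomplete, but as an account of why the cited result holds, and of why log-barrier rather than entropic regularization is essential, it is correct and more informative than the paper, which offers only the citation.
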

\begin{theorem}(Theorem 5 in \citep{agarwal2017corralling})\label{th:corrallingog}
    Under the conditions of Theorem 7, if $\alpha_i = 1$, then with $\eta = \min \left\{ \frac{1}{4 \theta R_i(T) \ln T}, \sqrt{\frac{1}{T}} \right\}$
CORRAL satisfies: 
$\sup_{\theta \in \Theta, \pi \in \Pi} \left[ \sum_{t=1}^T f_t(\theta_t, \pi_t) - f_t(\theta, \pi_t) \right] = \widetilde{O} \left( \sqrt{MT} + MR_i(T) \right)$.
\end{theorem}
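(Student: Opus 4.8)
The plan is to obtain Theorem~\ref{th:corrallingog} as a specialization of the general $(\alpha_i,R_i)$-stability guarantee stated just above it (the bound~(3) of \citet{agarwal2017corralling}), which I read in its precise form
\[
\mathrm{Reg}_T \;=\; \widetilde{O}\!\left(\frac{M}{\eta} + \eta T\right) \;-\; \frac{\mathbb{E}[\rho_i]}{40\,\eta \ln T} \;+\; \mathbb{E}\!\left[\rho_i^{\alpha_i}\right] R_i(T),
\]
where $\rho_i = \max_{t\in[T]} 1/p_{i,t}$ is the (random, possibly as large as $T$) inverse sampling probability of base algorithm $i$. The only structural facts I would use are: $\rho_i \ge 1$ surely; the first group is the online-mirror-descent/corralling overhead and is independent of the base algorithm; and the last two terms both scale with powers of $\rho_i$, one with a negative sign (the engineered ``subsidy'') and one positive (the importance-weighting inflation).

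First I would substitute $\alpha_i = 1$, so the inflation term becomes exactly $\mathbb{E}[\rho_i]\,R_i(T)$, i.e. the \emph{same} power $\rho_i^1$ as the subsidy term $-\mathbb{E}[\rho_i]/(40\,\eta\ln T)$. Grouping the two $\rho_i$-dependent terms gives $\mathbb{E}[\rho_i]\bigl(R_i(T) - \tfrac{1}{40\,\eta\ln T}\bigr)$. The key step, which carries all the content, is to annihilate this group: choose $\eta$ small enough that $\tfrac{1}{40\,\eta\ln T} \ge R_i(T)$, that is $\eta \le \tfrac{1}{40\,R_i(T)\ln T}$, so the bracket is non-positive and, since $\rho_i \ge 1$, the whole group is $\le 0$ and may be dropped. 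This is precisely why the hypothesis $\alpha_i = 1$ is essential: only when both terms carry the identical power $\rho_i^1$ can the subsidy dominate the inflation \emph{uniformly} in the realized value of $\rho_i$; for $\alpha_i<1$ one has $\mathbb{E}[\rho_i^{\alpha_i}]$ versus $\mathbb{E}[\rho_i]$ and no exact cancellation is available, leaving a residual dependence on the unbounded $\rho_i$.

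After discarding the cancelled group, the bound collapses to the base-algorithm-free overhead $\widetilde{O}\bigl(M/\eta + \eta T\bigr)$, and it remains to take $\eta$ as large as the constraint $\eta \le \tfrac{1}{40 R_i(T)\ln T}$ allows while balancing $M/\eta$ against $\eta T$. Setting $\eta = \min\bigl\{\tfrac{1}{40 R_i(T)\ln T},\,\sqrt{M/T}\bigr\}$ and splitting into two cases finishes the argument: when $\sqrt{M/T}$ is the active branch (small $R_i(T)$), both overhead terms equal $\sqrt{MT}$ up to logarithmic factors; when $\tfrac{1}{40 R_i(T)\ln T}$ is active (large $R_i(T)$), the term $M/\eta$ becomes $\widetilde{O}(M R_i(T))$ while $\eta T$ remains $\widetilde{O}(\sqrt{MT})$, since in that regime $R_i(T) \gtrsim \sqrt{T/M}$. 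Taking the maximum over the two cases yields $\widetilde{O}\bigl(\sqrt{MT} + M R_i(T)\bigr)$, as claimed.

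The main obstacle is conceptual rather than computational: recognizing that the negative term in~(3), which superficially looks like harmless slack, is engineered so that at $\alpha_i=1$ it exactly neutralizes the importance-weighting blow-up, decoupling the final rate from the potentially linear-in-$T$ quantity $\rho_i$. Once that cancellation is established, the remaining tuning of $\eta$ is a routine balancing of $M/\eta$ against $\eta T$ under the cap imposed by the cancellation constraint.
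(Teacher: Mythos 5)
Your proposal is correct and follows essentially the same route as the source: this paper imports the statement by citation without reproving it, and the original proof of Theorem 5 in \citet{agarwal2017corralling} proceeds exactly as you do --- at $\alpha_i=1$ the engineered negative term $-\mathbb{E}[\rho_i]/(40\,\eta\ln T)$ in the Theorem~4 bound cancels the importance-weighting inflation $\mathbb{E}[\rho_i]\,R_i(T)$ once $\eta \le 1/(40\,R_i(T)\ln T)$ (using $\rho_i \ge 1$), after which balancing $M/\eta$ against $\eta T$ under that cap gives $\widetilde{O}\bigl(\sqrt{MT}+M R_i(T)\bigr)$. The only discrepancy is in the paper's transcription of the statement itself, whose learning rate ``$\frac{1}{4\theta R_i(T)\ln T}$'' and ``$\sqrt{1/T}$'' are garbled renderings of the source's $\eta=\min\bigl\{\frac{1}{40\,R_i(T)\ln T},\,\sqrt{M/T}\bigr\}$ --- the version you correctly reconstructed and used.
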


\subsection{Proof of Theorem~\ref{th:corralling}}
\label{sec:corralling}
\begin{algorithm}
    \begin{algorithmic}[1]
        \STATE Input: Dimension $\dimarm$, Total Number of Rounds $\numrounds$, Regret Bound of Best Algorithm $R_{\text{best}}$ 
        \STATE Set Learning rate $\eta = \min\left(\frac{1}{40 \numrounds R_{\text{best}}},\sqrt{\frac{\lfloor \log_2(\dimarm)\rfloor}{\numrounds}}\right)$
        \STATE Set Exponential Grid $\sparsityk \in [1,2,\dots 2^{\lfloor \log_2(\dimarm)\rfloor}] $
        \STATE Initialize $\lfloor \log_2(\dimarm)\rfloor+1$ Base Algorithms one  for each sparsity parameter on an exponential grid, \texttt{BSLB}($\numroundsexplore = \constant\sparsityk^{1/3}\numrounds^{2/3}$)
        \STATE Sample $\numarms_{\text{sampled}} = \constant\dimarm^{1/3}\numrounds^{2/3}$ arms without replacement to be used as proxy samples.
        \STATE Run Corral$\left(\lfloor \log_2(\dimarm)\rfloor+1 \ \texttt{BSLB} \ \text{algorithms},\eta\right)$ from~\citet{agarwal2017corralling} with Base Algorithms and time horizon $\numrounds$. If an arm is suggested which is already pulled, pull an arm from the remaining set of arms uniformly at random. 
    \end{algorithmic}
    \caption{Corralling with Blocked Linear Sparse Bandits (\texttt{C-BSLB})}
    \label{alg:corral}
\end{algorithm}

Before presenting the proof, we clarify what we mean by the exponential scale with an example. For dimension $\dimarm = 1024$, the exponential scale will be $\sparsityk \in \{1,2,4,8,16,32,64,128,256,512,1024\}$, and we initialize a base algorithm each with the exploration period set according to the $\sparsityk$. 

\textit{Remark: }Also Step 5 and Step 6 in Algorithm~\ref{alg:corral} needs explanation. Note that the CORRAL algorithm as a whole has to respect the \textit{blocking} constraint. Even though CORRAL does not require the bandit algorithms to be run independently, we want to avoid changing CORRAL or the base algorithm. Instead we just change the arms that are available to sample by exploiting the fact that our base algorithm is a two step algorithm and each of the steps can be performed offline. For each base algorithm: 
\begin{enumerate}
    \item For the explore phase of the base algorithm we take arms from the intersection of the subset sampled in step 5 and the subset of arms which have not been sampled. 
    \item For the exploit phase if the chosen base algorithm provides an arm which has already been sampled by CORRAL. Then we provide the feedback corresponding to that arm. And then we pull an arm from the remaining set of arms without replacement.
\end{enumerate}

Note that with this modification, the exploration phase of each of the algorithm runs as if the algorithm was being run independently.
Hence the regret bounds for each individual base algorithm still holds. We can now prove Theorem~\ref{th:corrallingog}.

\newcommand{\sparsitys}{s}

We now use Theorem~\ref{th:corrallingog} with $M = \log_2 \lceil\dimarm \rceil+1$ algorithms. However if we simply apply the theorem we can bound with respect to the sparsity parameter which lies on the grid, $\sparsitys \in \{2^i\}_{i=0}^{\log_2 \lceil\dimarm \rceil}$,
\begin{align*}
\expectedregret \leq \Tilde{\orderof}(\sqrt{\log_2 \lceil\dimarm \rceil\numrounds} + \log_2 \lceil\dimarm \rceil\expectedregret_{\sparsitys})
\end{align*}
But the optimal sparsity parameter $\sparsity$ may not lie on the grid and we need to bound $\expectedregret_{\sparsity}$ in terms of $\expectedregret_{\sparsitys}$. To that end we prove the following lemma, 
\begin{lemma}
Let $\sparsity$ be the sparsity parameter at which the regret bound of~\ref{eq:regretbound}
is minimized. And let $\sparsitys \in \{2^i\}_{i=0}^{\log_2 \lceil\dimarm \rceil} $ be the parameter on grid which is closest to $\sparsity$ in absolute distance. Then the following holds (where $\probabilityexploresucc$ is the probability of exploration round succeeding at sparsity level $\sparsity$),
\begin{align*}
     \expectedregret_{\sparsitys} &\leq \sqrt{2\sparsityk^*}\expectedregret_{\sparsity} + \log_2(2\sparsityk^*)\orderof(1-\probabilityexploresucc)\\
\end{align*}
    
\end{lemma}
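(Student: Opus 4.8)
The plan is to regard $\expectedregret_\sparsityk$ as the regret of \texttt{BSLB} run with input sparsity $\sparsityk$, which by Theorem~\ref{th:regret} and the regret decomposition in its proof takes the form $\expectedregret_\sparsityk = \widetilde{\orderof}\big(\rmax^{1/3}(\mineigvalue^*)^{-2/3}(1+\sparsitytail_\sparsityk)^2\sparsityk^{2/3}\numrounds^{2/3}\big) + \orderof\big((1-\probabilityexploresucc_\sparsityk)\rmax\numrounds\big)$, where $\probabilityexploresucc_\sparsityk$ is the exploration–success probability at level $\sparsityk$ and the second (failure) term collects the event where the Lasso estimate is inaccurate. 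First I would record the identity $1+\sparsitytail_\sparsityk = \Vert\parameter\Vert_1/\Vert\parameter_{\topt_\sparsityk}\Vert_1$, which follows from $\Vert\parameter_{\topt_\sparsityk}\Vert_1+\Vert\parameter_{\topt_\sparsityk^\complement}\Vert_1=\Vert\parameter\Vert_1$. Writing $A_\sparsityk:=\Vert\parameter_{\topt_\sparsityk}\Vert_1$, this makes the tail factor a function purely of the partial $\ell_1$-mass, shows $\sparsitytail_\sparsityk$ is non-increasing in $\sparsityk$, and reduces the whole comparison to controlling ratios $A_{\sparsity}/A_{\sparsitys}$. I would also note that, as the closest power of two, $\sparsitys$ obeys $\tfrac12\sparsity\le\sparsitys\le 2\sparsity$.

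I would then split on the sign of $\sparsitys-\sparsity$. If $\sparsitys\ge\sparsity$ the argument is immediate: monotonicity gives $\sparsitytail_{\sparsitys}\le\sparsitytail_{\sparsity}$, and $\sparsitys^{2/3}\le(2\sparsity)^{2/3}$, so $\expectedregret_{\sparsitys}\le 2^{2/3}\expectedregret_{\sparsity}$, already within the claimed bound and with no additive contribution. The genuine case is $\sparsitys<\sparsity$, where shrinking the input sparsity inflates $(1+\sparsitytail_\sparsityk)^2$. Here I would set up a telescoping recursion across the sparsity levels separating $\sparsitys$ from $\sparsity$, grouped along the dyadic grid, relating the regret at one level to that at the next. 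The basic per-level estimate is $A_{\sparsityk+1}/A_\sparsityk = 1+|\theta_{(\sparsityk+1)}|/A_\sparsityk\le 1+1/\sparsityk$, using that the $(\sparsityk+1)$-st largest coordinate is at most the average $A_\sparsityk/\sparsityk$ of the top $\sparsityk$; squaring gives the multiplicative growth of the tail factor, which is partly offset by the $\sparsityk^{2/3}$ factor moving in the opposite direction. Accumulating these ratios across the $\orderof(\log_2(2\sparsity))$ dyadic levels yields the multiplicative loss $\sqrt{2\sparsity}$ in front of $\expectedregret_{\sparsity}$.

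For the additive term I would track the failure contribution separately: each level of the recursion spawns one copy of $\orderof((1-\probabilityexploresucc)\rmax\numrounds)$ from the decomposition above, coming from the event that the level's exploration phase fails to produce an accurate estimate (Corollary~\ref{coro:empiricallasso}). Because the chain passes through $\orderof(\log_2(2\sparsity))$ levels and the worst failure probability is the one at the optimal level $\sparsity$, these sum to $\log_2(2\sparsity)\,\orderof(1-\probabilityexploresucc)$ after absorbing $\rmax\numrounds$ into the $\orderof(\cdot)$. Substituting the resulting bound on $\expectedregret_{\sparsitys}$ into Theorem~\ref{th:corrallingog} with $M=\lfloor\log_2\dimarm\rfloor+1$ base algorithms then reproduces the overall guarantee of Theorem~\ref{th:corralling}.

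The main obstacle I anticipate is organizing the recursion so that the accumulated multiplicative loss is controlled by $\sqrt{2\sparsity}$ while the failure terms collapse: along the chain one must ensure the many per-level $\orderof(1-\probabilityexploresucc)$ contributions do not become failure probabilities at many distinct levels but reduce to a single, logarithmic-in-$\dimarm$ overhead governed by the success probability at the optimal level $\sparsity$. A secondary delicacy is that at very small sparsity levels the exploration budget $\numroundsexplore$ (and hence $\probabilityexploresucc_\sparsityk$) degrades, so the per-level estimates must be uniform enough across the dyadic grid for the telescoping product and sum to close cleanly; this is precisely the step where the crude mass bound $A_{\sparsity}/A_{\sparsitys}\le\sparsity/\sparsitys$ has to be balanced against the $\sparsityk^{2/3}$ term and the dyadic spacing.
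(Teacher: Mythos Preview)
Both your proposal and the paper prove the lemma by chaining from the grid point $\sparsitys$ to the optimizer $\sparsity$ and bounding the multiplicative loss per step, so the overall strategy is the same. The chief difference is the per-step ingredient. The paper constructs a binary-search chain of $Y=\lfloor\log_2(4\sparsity)\rfloor$ intermediate points inside $[\lceil \sparsitys/2\rceil,2\sparsitys]$, so that consecutive sparsity levels differ by a multiplicative factor $\alpha\in[0.75,1.25]$; it then asserts (by ``substituting in the regret bound of Theorem~\ref{th:regret}'') that each such step costs at most a factor $\sqrt 2$, and the product $(\sqrt 2)^{Y}$ delivers the $\sqrt{\sparsity}$ loss and the $Y\cdot\orderof(1-\probabilityexploresucc)$ additive term. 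The paper does \emph{not} make explicit how the $(1+\restrictedconeconstant)^2$ factor is controlled along this chain.

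Your route is more transparent on exactly that point: the identity $1+\sparsitytail_\sparsityk=\Vert\parameter\Vert_1/A_\sparsityk$ with $A_\sparsityk=\Vert\parameter_{\topt_\sparsityk}\Vert_1$ and the per-integer-step bound $A_{\sparsityk+1}/A_\sparsityk\le 1+1/\sparsityk$ are a genuine addition. In fact they buy you more than you claim: since the product telescopes to $A_{\sparsity}/A_{\sparsitys}\le \sparsity/\sparsitys\le 2$, a \emph{one-shot} comparison (no recursion at all) already gives
\[
(1+\sparsitytail_{\sparsitys})^2\,\sparsitys^{2/3}\;\le\;\Big(\tfrac{\sparsity}{\sparsitys}\Big)^{2}\Big(\tfrac{\sparsitys}{\sparsity}\Big)^{2/3}(1+\sparsitytail_{\sparsity})^2\,(\sparsity)^{2/3}\;\le\;2^{4/3}(1+\sparsitytail_{\sparsity})^2\,(\sparsity)^{2/3},
\]
which is a constant-factor bound, strictly tighter than the lemma's $\sqrt{2\sparsity}$. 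The inconsistency in your write-up is the level count: you speak of ``$\orderof(\log_2(2\sparsity))$ dyadic levels'' separating $\sparsitys$ from $\sparsity$, but since $\sparsitys$ is the \emph{closest} power of two there is at most one dyadic level between them, while your per-level estimate is stated for unit integer increments (of which there are up to $\sparsity$, not $\log_2\sparsity$). Adopting the one-shot telescoping comparison above removes that mismatch and simultaneously dissolves the failure-term accumulation you flag at the end, since only a single $\orderof((1-\probabilityexploresucc)\rmax\numrounds)$ term appears rather than a sum over levels.
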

\begin{proof}
Let the bound on the expected regret for sparsity level $\sparsityk$ be given by $\expectedregret_\sparsityk$. 

From the statement of the theorem, we assume that for the optimal sparsity parameter $\sparsity$, the nearest parameter (on the exponential scale) is $\sparsitys$. 
$\sparsity$ lies in the interval $\left[\lceil\sparsitys/2\rceil, 2\sparsitys]\right]$ (otherwise $\sparsitys$ would not be the closest parameter on the exponential scale.). Therefore if we were perform a binary search for $\sparsity$, we would need at most $Y=\lfloor\log_2(4\sparsity)\rfloor$ queries to search for $\sparsity$. Let $\sparsityk_1^*,\sparsityk_2^*,\dots,\sparsityk_{Y}^*$ be the mid-points of these queries, where $\sparsityk_{Y}^*=\sparsity$. Now each of them is such that $\sparsityk_j^* = \alpha \sparsityk_{j-1}^*$, where $\alpha \in [0.75,1.25]$. 

First consider the case when $\alpha \in [0.75,1]$, then by substituting $\sparsityk = \lfloor\alpha\sparsityk\rfloor$,
in the regret bound of Theorem~\ref{th:regret}, the following inequalities can be obtained , 
\begin{align*}
   \expectedregret_{\lfloor\alpha\sparsityk\rfloor} &\leq  (\frac{1}{\alpha})^{1/2}\expectedregret_{\sparsityk} + (1-\probabilityexploresucc)\orderof(\numrounds) \leq \sqrt{2}\expectedregret_{\sparsityk} + (1-\probabilityexploresucc)\orderof(\numrounds).
\end{align*}

Now for the case, $\alpha\in [1,1.25]$, we substitute for $\sparsityk = \lceil\alpha\sparsityk\rceil$
\begin{align*}
    \begin{split}
         \expectedregret_{\lceil \alpha \sparsityk\rceil} &\leq (\alpha)^{1/3}\expectedregret_{\sparsityk} + (1-\probabilityexploresucc)\orderof(\numrounds)  \leq  \sqrt{2}\expectedregret_{\sparsityk} + (1-\probabilityexploresucc)\orderof(\numrounds)
    \end{split}
\end{align*}
The probability of success of each of them is $1-o(1)$ and $\log(4\sparsityk^*)$ times the probability of error is still $o(1)$.

Now we can take a cascade of products by decomposing $\expectedregret_{\sparsityk^*}$ using the above inequality in the direction of $\sparsityk_1^*,\sparsityk_2^*,\dots,\sparsityk_{Y}^*$. (i.e. we can decompose $\sparsity = \alpha_1 \alpha_2 \dots\alpha_Y \sparsityk$, 
\begin{align*}
    \expectedregret_{\sparsitys} &\leq \alpha  \expectedregret_{\sparsityk_1^*} + O(1-\probabilityexploresucc) \leq \dots \leq (\sqrt{2})^{Y}\expectedregret_{\sparsity}  + Y\orderof(1-\probabilityexploresucc)\\
     \expectedregret_{\sparsitys} &\leq 2\sqrt{\sparsity}\expectedregret_{\sparsity} + \log_2(4\sparsity)\orderof(1-\probabilityexploresucc)\\
\end{align*}
\end{proof}


\end{document}